\newtheorem{theorem}{Theorem}
\newcommand{\X}{\mathcal{X}}
\newcommand{\Y}{\mathcal{Y}}
\DeclareMathOperator*{\argmin}{arg\,min}
\newcommand{\rick}[1]{\textcolor{black}{#1}}
\definecolor{mygray}{RGB}{120,120,120} 
\newcommand{\mygray}[1]{\textcolor{mygray}{#1}}
\pgfplotsset{compat=1.16}
\title{Correspondence Learning via Linearly-invariant Embedding}
\author{%
  Riccardo Marin\thanks{denotes equal contribution.}
  \\
  University of Verona\\
\texttt{riccardo.marin\_01@univr.it} \\
   \And
   Marie-Julie Rakotosaona\footnotemark[1]\\
   LIX, Ecole Polytechnique, IP Paris \\
   \texttt{mrakotos@lix.polytechnique.fr} \\
   \AND
   Simone Melzi \\
   LIX, Ecole Polytechnique, IP Paris \\
   Sapienza University of Rome\\
   \texttt{melzi@di.uniroma1.it} \\
   \And
   Maks Ovsjanikov \\
   LIX, Ecole Polytechnique, IP Paris \\
    \texttt{maks@lix.polytechnique.fr} \\
}
\begin{document}

\maketitle

\begin{abstract}
In this paper, we propose a fully differentiable pipeline for estimating accurate dense correspondences between 3D point clouds.  The proposed pipeline is an extension and a generalization of the \emph{functional maps framework}. However, instead of using the Laplace-Beltrami eigenfunctions as done in virtually all previous works in this domain, we demonstrate that learning the basis from data can both improve robustness and lead to better accuracy in challenging settings. We interpret the basis as a learned embedding into a higher dimensional space. Following the functional map paradigm the optimal transformation in this embedding space must be linear and we propose a separate architecture aimed at estimating the  transformation by learning optimal descriptor functions. This leads to the first end-to-end trainable functional map-based correspondence approach in which both the basis and the descriptors are learned from data. Interestingly, we also observe that learning a \emph{canonical} embedding leads to worse results, suggesting that leaving an extra linear degree of freedom to the embedding network gives it more robustness, thereby also shedding light onto the success of previous methods. Finally, we demonstrate that our approach achieves state-of-the-art results in challenging non-rigid 3D point cloud correspondence applications.

\end{abstract}

\section{Introduction}
\label{sec:introduction}
Computing correspondences between geometric objects is a widely investigated task. Its applications are countless: rigid and non-rigid registration methods are instrumental in engineering, medicine and biology \cite{jin2019fast, li2020toward, gainza2020deciphering} among other fields. Point cloud registration is important for range scan data, e.g., in robotics \cite{Gojcic_2019_CVPR, SANCHEZ2020}, but the problem can also be generalized  to abstract domains like graphs \cite{wang2019functional, fey2020deep}.

The \emph{non-rigid} correspondence problem is particularly challenging as a successful solution must deal with a large variability in shape deformations and be robust to noise in the input data. To address this problem, in recent years, several data-driven approaches have been proposed to learn the optimal transformation model from data rather than imposing it \emph{a priori}, including \cite{groueix20183d,wei2016dense,boscaini2016anisotropic} among others. In this domain, a prominent direction is based on the functional map representation \cite{ovsjanikov2012functional}, which has been adapted to the learning-based setting \cite{litany2017deep,Halimi_2019_CVPR,roufosse2019unsupervised,donati2020deep}. These  methods have shown that optimal feature or descriptor functions (also known as ``probe'' functions) can be learned from data and then used successfully within the functional map pipeline to obtain accurate dense correspondences. Unfortunately, the reduced functional basis, which forms the key ingredient in this approach, has so far been tied to the Laplace-Belrtami eigen-basis, specified and fixed \emph{a priori}. While this choice might be reasonable for near-isometric 3D shapes represented as triangle meshes, it does not allow to handle  more diverse deformations classes of  or significant noise in the data. 

Inspired by the success and robustness of these techniques, we propose  the first fully-differentiable functional maps pipeline, in which both the probe functions and the functional basis are learned from the data. Our key observation is that basis learning can be phrased as computing an embedding into a higher-dimensional space in which a non-rigid deformation becomes a \emph{linear transformation}. This follows the functional map paradigm in which functional maps arising from pointwise correspondences must always be linear \cite{ovsjanikov2012functional} and computing such a linear transformation is equivalent to solving the non-rigid correspondence problem. In the process, we also observe that training a network that aims to compute a \emph{canonical} embedding, in which optimal correspondences are simple nearest neighbors, leads to a drop in performance. As we discuss below, this suggests that the additional degree of freedom, by learning a linearly-invariant embedding, helps to regularize the learning process and avoid overfitting in challenging cases. 
Finally, we demonstrate that our simple (but effective) formulation leads to accurate dense maps. The code, datasets
and our pre-trained networks can be found online: \url{https://github.com/riccardomarin/Diff-FMaps}.

\begin{figure}
  \centering
  \scriptsize
  \begin{overpic}[width=0.97\textwidth]{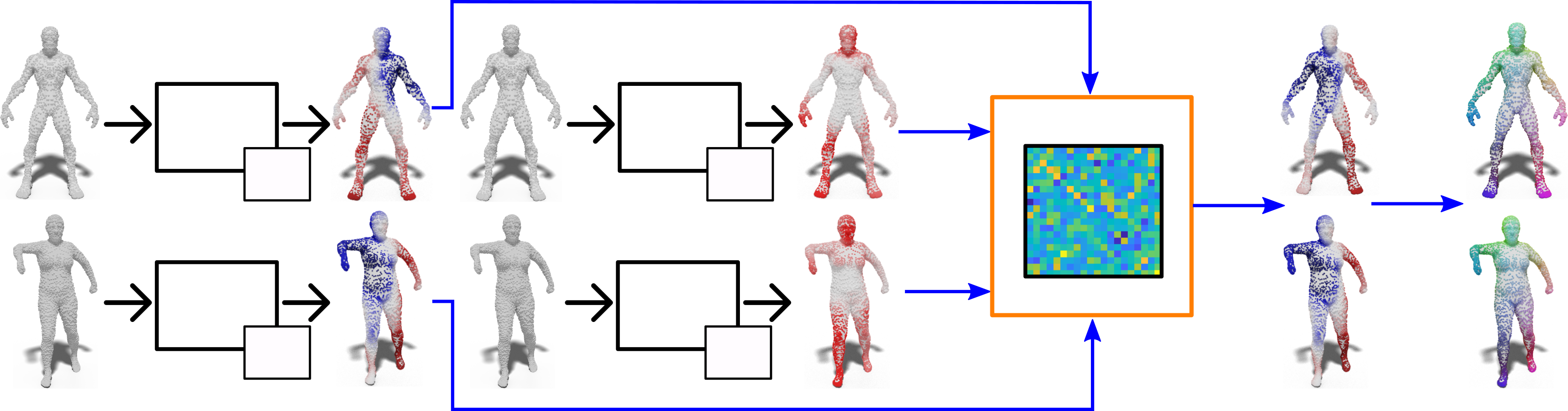}

    \put(10.1,19.4){\tiny Invariant}
    \put(10.1,17.8){\tiny Embedding}
    \put(10.1,16){\tiny Network}
    \put(16.5,14.5){$\mathcal{N}$}
    
    \put(10.1,8.1){\tiny Invariant}
    \put(10.1,6.5){\tiny Embedding}
    \put(10.1,4.7){\tiny Network}
    \put(16.5,3){$\mathcal{N}$}

    \put(39.6,19.4){\tiny Probe}
    \put(39.6,17.8){\tiny Function}
    \put(39.6,16){\tiny Network}
    \put(46.5,14.5){$\mathcal{G}$}
    
    \put(39.6,8.1){\tiny Probe}
    \put(39.6,6.5){\tiny Function}
    \put(39.6,4.7){\tiny Network}    
    \put(46.5,3){$\mathcal{G}$}
    
    \put(68,17.8){$\widehat{A}_{\mathcal{XY}}$}
    
    \put(11,0.4){Section 4.1}
    \put(40,0.4){Section 4.2}
    \put(64.2,6.7){Section 4.2-4.3}
    
    \put(3,-2){a) Learn linearly-invariant basis}
    
    \put(35,-2){b) Learn probe functions}    
    \put(62,-2){c) Optimal linear}
    \put(63.5,-4){transformation}
    
    \put(79.5,-2){d) Aligned}   
    \put(82.5, -4){basis}
    \put(90.8,-2){e) Match by}
    \put(89,-4){Nearest-Neighbor}

  \end{overpic}
  \vspace{0.3cm}
  
\caption{Pipeline overview:  starting from point cloud coordinates we obtain a set of linearly-invariant basis functions via the Invariant Embedding Network $\mathcal{N}$ (a), and descriptors using the Probe Function Network $\mathcal{G}$ (b). The learned basis and probe functions are used to compute the optimal linear transformation $\widehat{A}_{\X\Y}$ (c). This transformation is used to align the two sets of bases (d). The correspondence between point clouds is then estimated using nearest neighbors between the aligned basis sets (e). Note that the underlying meshes are depicted only for sake of clarity of visualization.}
  \label{fig:overview}
\end{figure}

   

  
  
   
  
  
   

\section{Related work}
\label{sec:related}
In addition to approaches mentioned above, here we briefly discuss previous works in the shape correspondence domain that are either closest to ours or most relevant for comparison and evaluation. 
We refer to the available surveys \cite{survey0,survey1} for a more complete overview.

\paragraph{Functional maps}
The core of our method is the functional maps framework originally proposed in \cite{ovsjanikov2012functional} which formulates the correspondence problem in the functional domain instead of the classical matching between points.
In the functional space, a correspondence can be represented by a small matrix encoded in a reduced basis and computed as the optimal transformation that aligns a given set of probe functions possibly with other regularization. 
This method inspired a large number of further extensions, including \cite{nogneng2017informative,ezuz2017deblurring,rodola2017partial,ren2018continuous} to name a few. A more general overview of this area can be found in \cite{ovsjanikov2017computing}.
In our paper we also exploit the link between the functional representation and the adjoint map that has been originally developed in \cite{huang2017adjoint}.

The most common basis used in the functional map framework is given by the eigenfunctions of the Laplace-Beltrami operator, which can be seen as a natural extension of the Fourier basis to non-Euclidean domains \cite{levy2010,taubin}. 
These basis functions are appropriate for shapes  represented as triangle meshes, undergoing near-isometric deformations. Unfortunately, however, they can be highly unstable and difficult to estimate reliably for more general deformations and on point cloud data.
Possible alternatives to this choice have been proposed in the literature such as 
\cite{LMH,nogneng18,CMH}. These works try to recover the information lost by the low-pass representation of the truncated Fourier basis but still suffer from the same underlying limitations. Several efforts have been made to apply the functional map framework to point cloud data, including \cite{rodola2017partial,melzi2019zoomout}. These works exploit existing discretizations of the Laplace-Beltrami operator on point clouds \cite{belkin2009constructing,liang2013solving}, and present acceptable results under clean dense sampling but quickly deteriorate in more challenging scenarios.

Other works have been devoted to the selection of appropriate probe functions used to guide the computation of functional maps.
Axiomatic descriptors such as HKS, WKS or SHOT \cite{sun2009concise,aubry2011wave,SHOT} are widely used as probe functions together with supervised information such as segments and landmarks \cite{StableRegion,Denitto_2017_ICCV}. More recently, an optimisation-based strategy has been proposed to compute optimal relative weights of probe functions \cite{desclearn},
while a set of five automatically estimated stable landmarks has been used as probe functions for functional maps on human shapes in \cite{FARM}. 

\paragraph{Learning based methods for functional maps}
While early works in functional maps are purely axiomatic \cite{kovnatsky2013coupled,pokrass2013sparse,nogneng18,rodola2017partial}, this framework has also recently been adapted to the learning setting. Specifically, starting with the seminal work of Deep Functional Maps \cite{litany2017deep}, several methods have been proposed to \emph{learn} optimal descriptors that can be used within the functional maps framework \cite{Halimi_2019_CVPR,roufosse2019unsupervised,donati2020deep}. Most recently, it was demonstrated in \cite{donati2020deep} that the optimal descriptor (or probe) functions can be learned directly from the 3D coordinates of the shapes. This work has also shown that a functional map layer can help to regularize shape correspondence learning, leading to better results with less training data compared to state-of-the-art purely point-based methods \cite{groueix20183d}. Nevertheless, the approach of \cite{donati2020deep} is still tied to the choice of the Laplace-Beltrami eigenbasis and therefore lacks robustness in challenging non-isometric settings. Instead our fully learnable pipeline allows to benefit from the functional map regularization while being both robust and applicable to point cloud data.


\paragraph{Other approaches}
A different line of work has also aimed to learn correspondences between 3D shapes by coordinate transfer \cite{groueix20183d,groueix2019unsupervised}.
Other recent techniques also use geometric information through diverse convolution operations \cite{wei2016dense,GCNN,ginzburg2019cyclic,NIPS2019_8962,Wang2020MGCN} and have demonstrated their effectiveness in 3D shape matching, typically by phrasing it as a dense segmentation problem. 
Learning for partial \emph{rigid} alignment has been also proposed \cite{wang2019prnet}.
Correspondences can also be computed through finding a  canonical embedding of the input. This idea has been developed for 2D images \cite{choy2016universal,thewlis2019unsupervised} as well as 3D data \cite{Zhou2019SiamesePointNet}.
The latter works, as many others in this domain, take advantage of point-based architectures such as PointNet \cite{qi2017pointnet} and its extensions \cite{Ponitnetplus,Atzmon,Thomas_2019_ICCV} that provide a powerful way to learn signatures for point clouds, and that have been mainly exploited for shape classification but not yet for \emph{smooth and consistent} dense non-rigid shape correspondence.

\section{Background, motivation and notation}
\label{sec:background}
In this section, we give a brief overview of the functional map representation and correspondence pipeline. We then provide a general motivation behind our work and introduce the main notation that we adopt in the rest of the paper. 

\paragraph{Functional Maps}
We start by summarizing the functional maps framework. This formalism was initially developed for smooth surfaces, and most of the constructions have immediate analogues in the discrete setting when shapes are represented as triangle meshes. Note that we describe our learning-based pipeline adapted to point clouds in the following sections. Given a pair of shapes $\X$ and $\Y$, let $\mathcal{F}(\X)$ and $\mathcal{F}(\Y)$ denote the spaces of real-valued functions on $\X$ and $\Y$,  respectively. A point-to-point map $T_{\X\Y}:\X\rightarrow\Y$ induces a functional correspondence $T^{\mathcal{F}}_{\Y\X}:\mathcal{F}(\Y)\rightarrow\mathcal{F}(\X)$ via pull-back (notice that $T^{\mathcal{F}}_{\Y\X}$ goes in the opposite direction). 
If we approximate the space of functions in a given basis $\Phi_{\X}$ and $\Phi_{\Y}$ of size $k$, then $T^{\mathcal{F}}_{\Y\X}$ can be compactly represented by a matrix $C_{\Y\X}$ of size $k\times k$ that maps the coefficients of a function in the basis $\Phi_{\Y}$ to the coefficients of its image via $T^{\mathcal{F}}_{\Y\X}$ in the basis $\Phi_{\X}$. Specifically, if the basis is orthonormal, then the entries of this matrix have an explicit expression: $C_{\Y\X}(i,j) = <T^{\mathcal{F}}_{\Y\X}(\phi^{\Y}_j), \phi^{\X}_i>$, where $<,>$ denotes the functional inner product and $\phi^{\X}_i, \phi^{\Y}_j$ are the individual basis functions on $\X$ and $\Y$ respectively.

The most common choice for $\Phi_{\X}$ and $\Phi_{\Y}$ is the set of the eigenfunctions of the Laplace-Beltrami operator $\Delta$ associated with the $k$ eigenvalues with smallest absolute value. This choice was advocated in the original functional maps work \cite{ovsjanikov2012functional} and then later used in virtually all follow-up approaches, including learning-based ones, e.g.,  \cite{rodola2017partial,nogneng2017informative,huang2014functional,litany2017deep,Halimi_2019_CVPR} among others (see also \cite{ovsjanikov2017computing} for an overview). The Laplacian eigenfunctions naturally generalize the Fourier basis to non-Euclidean domains \cite{taubin,levy2006laplace,levy2010} and enjoy many similar properties such being ordered from low to higher frequencies, and spanning the space of $L^2$ functions. In practice, this basis can be computed efficiently on 3D triangle meshes via an eigen-decomposition of the standard cotangent Laplacian matrix \cite{pinkall1993computing}.

A typical pipeline for solving the shape correspondence problem based on the functional map representation consists of the following steps \cite{ovsjanikov2017computing,ren2018continuous,melzi2019zoomout,rodola2017partial}: 1) Establish the basis functions, by computing the first $k$ Laplace-Beltrami eigenfunctions on $\X,\Y$ and store them as columns of matrices $\Phi_{\X}, \Phi_{\Y}$. 2) Compute probe functions $G_{\X}, G_{\Y}$ which are expected to be preserved by the underlying unknown map. 3) Compute the optimal functional map by solving:
\begin{equation}
    C_{\X\Y} = \argmin_{C \in \mathbb{R}^{k \times k}} \| C \Phi_{\X}^{\dagger} G_{\X} - \Phi_{\Y}^{\dagger} G_{\Y} \|_{2} + E_{reg}(C).
    \label{eq:fmaps}
\end{equation}
Here $^{\dagger}$ denotes the Moore Penrose pseudo-inverse, so that, e.g.,  $\Phi_{\X}^{\dagger} G_{\X} $ represents the coefficients of the probe functions in the given basis. The second term in the sum is the regularization on the functional map which promotes some structural properties of the correspondence. For example a popular choice is $E_{reg}(C) = \| C \Delta_{\X}  - \Delta_{\Y} C\|$, which enforces the commutativity between the functional map and Laplace-Beltrami operators (expressed in the respective basis), thereby promoting near-isometric point-to-point correspondences \cite{ovsjanikov2012functional}. Finally, 4) refine the functional map computed in the previous step and convert it to a dense correspondence, e.g., via nearest neighbor search \cite{ovsjanikov2012functional}.

One of the advantages of this approach is that the optimization in step 3) can be performed efficiently since $C_{\X\Y}$ is a matrix of size $k \times k$, which is typically much smaller than the number of points.

\paragraph{Limitations} The main limitations of this pipeline are two-fold: first, the quality of the map is strongly tied to the choice of probe functions, and second, the choice of the basis plays a fundamental role both for the expressive power and the accuracy of the final results. Several approaches have been proposed to learn the probe functions from data \cite{litany2017deep,Halimi_2019_CVPR,roufosse2019unsupervised,donati2020deep}. However, as mentioned above, no existing methods have attempted to learn the basis. This is particularly problematic since, as we show below, as the Laplacian eigen-basis is not only tied to near-isometric deformations, even more fundamentally, it can only be reliably computed on shapes represented as triangle meshes. While some attempts (e.g., in \cite{melzi2019zoomout,rodola2017partial}) have been made to compute eigenfunctions using existing discretizations of Laplace-Beltrami operators on point clouds, e.g., \cite{belkin2009constructing,liang2013solving}. Nevertheless, in part due to the \textit{differential nature} of the Laplacian, such discretizations cannot handle even mild levels of noise, in practice.

\subsection{Motivation and Overview}
Our main goal is to learn an optimal basis that can be used within the functional map pipeline on point cloud data. One possibility would be to use triangle meshes and learn a discretization of the Laplacian that would approximate the low frequency basis functions on point clouds. However, this requires differentiating through a sparse eigen-decomposition, which can be expensive and unstable. 

Instead, we propose an end-to-end learnable pipeline that uses a dual point of view. We summarize our overall pipeline in Figure \ref{fig:overview}. Our first remark is that entries of the basis functions can be interpreted as an embedding of the original 3D shape into a higher $k$-dimensional space. Namely, each point $x \in \X$ gets associated with a $k$-dimensional vector $[\phi^{\X}_1(x), \phi^{\X}_2(x), \ldots , \phi^{\X}_k(x)]$. This is called the ``spectral'' embedding and it is well-known (see e.g., \cite{rustamov2007laplace}) that when using the Laplacian basis on smooth surfaces, as $k\rightarrow \infty$ this embedding becomes injective, so that no two points can have the same associated vectors. 

The spectral embedding plays a role in the conversion between functional and pointwise maps. The standard approach for this conversion  \cite{ovsjanikov2012functional} is by mapping Dirac $\delta_x$ functions associated with each point $x$ on the source shape and finding the nearest Dirac $\delta$ function on the target. Interestingly, $\delta_x$ is \emph{not} a real-valued function but is rather a \emph{distribution}, which acts on real-valued functions through inner products: $<\delta_{x}, f> = f(x)$. As functional maps are operators that map real-valued functions, in principle they \textit{cannot} be used to transport Dirac $\delta$'s.  To transport such distributions, a more sound approach is to use the \textit{adjoint} operator of a functional map \cite{huang2017adjoint}. Surprisingly, although the notion of the adjoint has been studied , both its role and the limitations of functional maps in transferring $\delta$ functions seems to have been ignored in the functional maps literature so far.
The adjoint operator is defined implicitly as follows: given a functional map $C_{\Y\X}$, its adjoint  $A_{\X\Y}$ is defined so  for any pair of real-valued functions $f\in \mathcal{F}(\X)$ and $g\in \mathcal{F}(\Y)$ :  $<C_{\Y\X}g, f> = <g, A_{\X\Y}f>$.  
 We refer to the supplementary for a more complete treatment of the adjoint operator. Note that the adjoint operator: 1) associates functions in the opposite direction to that of the functional map, and 2) is defined using the $L_2$ inner products, and can thus be used to transport distributions. It is easy to see that the adjoint of the pull-back of a point-to-point map $T_{\X\Y}$ (see proof in the supplementary material) has the following nice property: $A_{\X\Y}\delta_x = \delta_{T_{\X\Y}(x)}$. 

Finally, we note that the coefficients of Dirac $\delta$ function $\delta_x$ are precisely the vector of values $[\phi^{\X}_1(x), \phi^{\X}_2(x), \ldots , \phi^{\X}_k(x)]$. Moreover, the adjoint is a linear operator that associates $\delta$ functions with $\delta$ functions. As such, the adjoint can be seen as a linear transformation that aligns the spectral embeddings of $\X$ and $\Y$. We emphasize that the same \textit{does not hold} for a functional map, in general.

This discussion implies that in the functional map framework, the basis can be interpreted as an embedding, and moreover the corresponding embeddings are related by a linear transformation, which is precisely the adjoint of the functional map.

\paragraph{Strategy} Our overall strategy is to  mimic this construction using a learning-based approach. We propose to train a network that computes for each shape an embedding into some $k$ dimensional space, such that the embeddings of two shapes are related by a linear transformation. We then train a separate network that computes probe functions that can be used for establishing the optimal linear transformation at test time. Remarkably, this decomposition of the problem 
consistently outperforms a baseline approach that aims to compute a canonical embedding, in which correspondences can be obtained through nearest neighbor search directly. As described below, we attribute this primarily to the fact that learning a canonical embedding is a difficult problem, and splitting it into two parts (invariant embedding + transformation) helps to regularize the problem in challenging practical settings. Note that we use the term ``basis'' only by analogy with the Laplace-Beltrami eigenfunctions, and do not formally impose a basis structure on our learned set of functions.




 \section{Linearly-invariant embedding}
\label{sec:method}
In this section, we propose a novel learning strategy to generalize the functional maps framework to noisy and incomplete data.

We discretize a shape $\X$ as a collection of 3D points $x_{i}\in \mathbb{R}^{3}$ where $ i \in \{1, \ldots , n_{\X}\}$. We collect these $n_{\X}$ points in a matrix $P_{\X}\in \mathbb{R}^{n_{\X}\times3}$ such that the $i$-th row of $P_{\X}$ captures the 3D coordinates of $x_{i}$.
We refer to the matrix $P_{\X}$ as the \emph{natural} embedding of $\X$.


Given a pair of shapes $\X$ and $\Y$  our goal is to find a correspondence between them. This correspondence is a mapping between the points of $\X$ and the points of $\Y$. We denote a correspondence as a map $T_{\X\Y}:\X \rightarrow \Y$ such that $T_{\X\Y}(x_i) = y_j$, $\forall i \in \{1, \ldots , n_{\X}\}$ and some $j  \in \{1, \ldots , n_{\Y}\}$.  This map has a natural matrix representation $\Pi_{\X\Y} \in \mathbb{R}^{n_{\X}\times n_{\Y}}$ such that $\Pi_{\X\Y}(i,j) = 1$ if  $T_{\X\Y}(x_i) = y_j$ and $0$ otherwise. 

Let $\Phi_{\X}$ and $\Phi_{\Y}$ denote the matrices, whose rows can be interpreted as embeddings of the points of $\X$ and $\Y$ as described in Section \ref{sec:background}. Below we do not assume that $\Phi_{\X}$ and $\Phi_{\Y}$ represent the Laplacian eigenbasis, but consider general embeddings into some fixed $k$ dimensional space. Recall that in the formalism of functional maps, there must exist a linear transformation $A_{\X\Y}$ that aligns the corresponding embeddings. This can be written as: $A_{\X\Y} \Phi^T_{\X}  = (\Pi_{\X\Y} \Phi_{\Y})^T$, where $\Pi_{\X\Y}$ is the binary matrix that encodes the correspondence between $\X$ and $\Y$. 
In the functional map framework, the linear transformation $A_{\X\Y}$ is precisely the adjoint operator, since $A_{\X\Y} = (\Phi_{\X}^{+}\Pi_{\X\Y} \Phi_{\Y})^T = C_{\Y\X}^T$ using the standard definition of a functional map $C_{\Y\X}$ \cite{ovsjanikov2017computing}.

Given $A_{\X\Y}$, we can estimate $\Pi_{\X\Y}$ by solving the following optimization problem:
\begin{equation}
\Pi_{\X\Y} = \argmin_{\Pi} \|\Phi_{\X}  A_{\X\Y}^T  - \Pi \Phi_{\Y} \|_2.
    \label{eq:nnassignment}
\end{equation}
Note that Eq. \eqref{eq:nnassignment} can be solved in closed form by finding, for every row of $\Phi_{\X}  A_{\X\Y}^T$, the closest row in $\Phi_{\Y}$ in the standard $L_2$ sense.

Based on Equation \eqref{eq:nnassignment}, our general goal is to train a network $\mathcal{N}$ that can produce for any shape $\X$ an embedding  $\Phi_{\X}$ into a $k$-dimensional space, such that embeddings of every pair of shapes $\Phi_{\X}, \Phi_{\Y}$ are related by a linear transformation. In other words the network $\mathcal{N}$ must be able to transform a shape from the original 3D space, in which complex non-rigid deformations could occur, to another space, in which transformations across shapes must always be linear. Interestingly, as we show below, the additional linear degree of freedom helps to regualize the learning procedure, achieving better results than simply learning a canonical embedding in which corresponding points are nearest neighbors.

\setlength{\columnsep}{13pt}
\setlength{\intextsep}{1pt}
\begin{wrapfigure}[5]{r}{0.45\linewidth}
\vspace{-0.5cm}
\begin{center}
\begin{overpic}
[trim=0cm 0cm 0cm 0cm,clip,width=1.0\linewidth]{./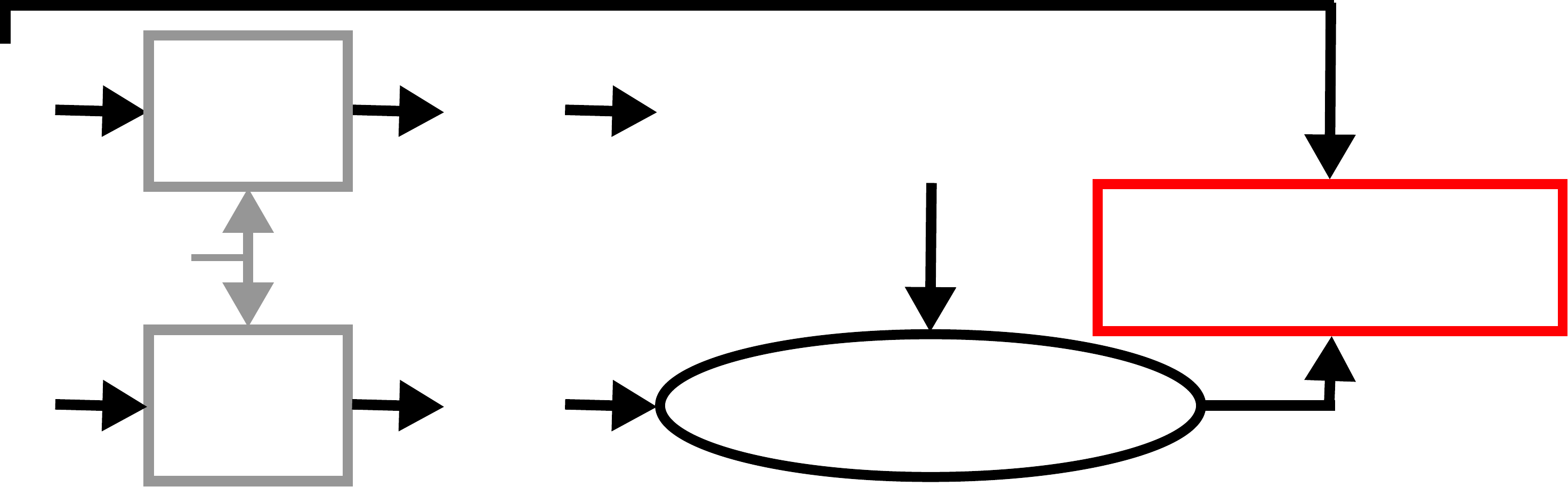}
\put(-4,22){\footnotesize $P_{\X}$}
\put(-4,3.4){\footnotesize $P_{\Y}$}
\put(12.8,22){\footnotesize \mygray{$\mathcal{N}$}}
\put(12.8,3.4){\footnotesize \mygray{$\mathcal{N}$}}
\put(7.5,12.5){\footnotesize \mygray{$\Theta$}}
\put(28.5,22){\footnotesize $\Phi_{\X}$}
\put(28.5,3.4){\footnotesize $\Phi_{\Y}$}
\put(42.2,22){\footnotesize $\widehat{\Phi}_{\X} = \Phi_{\X} A_{\X\Y}^T$}
\put(46.1,3.4){\footnotesize \emph{softmap} $S_{\X\Y}$}
\put(72.7,12.5){\footnotesize \rick{$L(\Phi_{\X}, \Phi_{\Y})$}}
\end{overpic}
\end{center}
\end{wrapfigure}
\subsection{Learning a linearly-invariant embedding}
\label{subsec:learnembedding}

To learn a linearly-invariant embedding we first observe that for fixed matrices $\Phi_{\X},\Phi_{\Y}$ the expression $\|\Phi_{\X}  A_{\X\Y}^T  - \Pi_{\X\Y} \Phi_{\Y} \|_2$ depends both on $A_{\X\Y}^T$ and $\Pi_{\X\Y}$, which can make training difficult. However, for a fixed correspondence matrix $\Pi_{\X\Y}$ the optimal matrix $A_{\X\Y}$ can be obtained in closed form simply as: $A_{\X\Y} = (\Phi_{\X}^{+} \Pi_{\X\Y} \Phi_{\Y})^T$, which can be computed by solving a linear system of equations. Importantly, this procedure can be differentiated using the closed-form expression of derivatives of matrix inverses, which we exploit in our approach.

\paragraph{Embedding network training}
Given a set of training pairs of shapes $\X,\Y$ for which ground truth correspondences $\Pi^{gt}_{\X\Y}$ are known, our embedding network $\mathcal{N}$ computes an embedding  $\Phi_{\X}, \Phi_{\Y}$ for each shape using a Siamese architecture with shared parameters. I.e., $\mathcal{N}_{\Theta}(P_{\X}) = \Phi_{\X}$ and $\mathcal{N}_{\Theta}(P_{\Y}) = \Phi_{\Y}$. We use the notation $\mathcal{N}_{\Theta}$ to highlight that this network has trainable parameters $\Theta$ which are shared across shapes. In the following we refer to this network as simply $\mathcal{N}$. The exact details of the architecture that we use are provided in the supplementary.

In order to define our loss we first compute the optimal linear transformation $A_{\X\Y} = (\Phi_{\X}^{+} \Pi^{gt}_{\X\Y} \Phi_{\Y})^T$ and use it to obtain a \emph{transformed} embedding $\widehat{\Phi}_{\X} = \Phi_{\X} A_{\X\Y}^T$. We then compare the rows of $\widehat{\Phi}_{\X}$ to those of $\Phi_{\Y}$ to obtain the \textit{soft} permutation matrix $S_{\X\Y}$ that approximates the discrete mapping between the shapes in a differentiable way using the \emph{softmax} operation (for completeness see details in the supplementary). Finally, we use the following loss to train the embedding network:
\begin{equation}
\begin{aligned}
 L(\Phi_{\X}, \Phi_{\Y}) =  \frac{1}{n_{\beta}} \sum \|  S_{\X\Y} P_{\X} - \Pi^{gt}_{\X\Y} P_{\X} \|_2^2.
    \end{aligned}
    \label{eq:loss_spatial}
\end{equation}
Recall that $P_{\X}$ is the matrix encoding the 3D coordinates of the shape $\X$. 

Intuitively, the main goal of the loss in Eq. \eqref{eq:loss_spatial} is to compare the ground truth correspondence $\Pi^{gt}_{\X\Y}$ to the computed softmap matrix $S_{\X\Y}$. An alternative would to use the geodesic distances as weights as done in \cite{litany2017deep}, but the computation of the geodesic distances is expensive and unreliable in the context of point clouds. Other possible solutions are the direct Frobenius loss on the permutation matrix or a \emph{multinomial
regression loss} as done in e.g. \cite{GCNN,poulenard2018multi}. However, these losses do not involve the geometry and penalize incorrect correspondences independently of their proximity to correct ones.  Instead, our loss penalizes incorrect correspondences based on the Euclidean distances of associated points.  Moreover, Eq. \eqref{eq:loss_spatial} can be seen as the comparison between the action of the ground-truth functional map in the full basis and the action of the estimated functional map on a specific set of functions that completely describe the geometry of the data. As such, our loss is efficient, takes the geometry into account, and is directly related to the functional map formalism.




\setlength{\columnsep}{16pt}
\setlength{\intextsep}{1pt}
\begin{wrapfigure}[5]{r}{0.42\linewidth}
\vspace{-0.7cm}
\begin{center}
\begin{overpic}
[trim=0cm 0cm 0cm 0cm,clip,width=1.0\linewidth]{./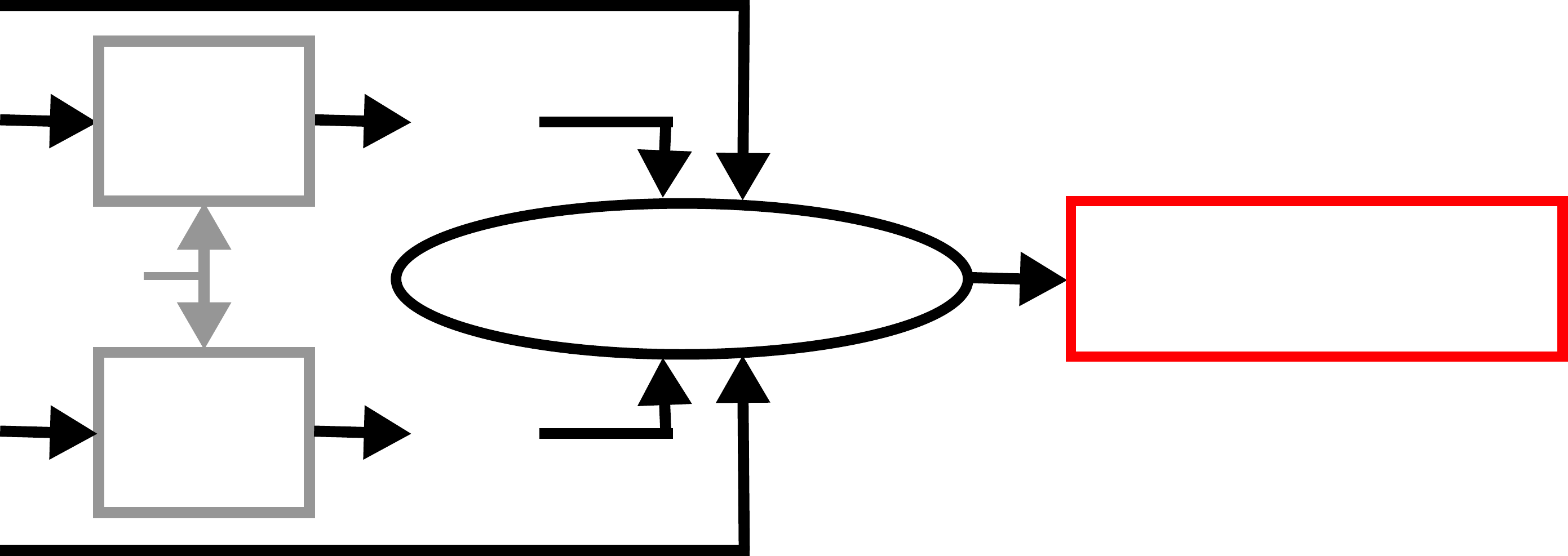}
\put(-6.3,25){\footnotesize $P_{\X}$}
\put(-6.3,5.4){\footnotesize $P_{\Y}$}
\put(-6.3,32.3){\footnotesize $\Phi_{\X}$}
\put(-6.3,-2.3){\footnotesize $\Phi_{\Y}$}
\put(10.8,25){\footnotesize \mygray{$\mathcal{G}$}}
\put(10.8,5.4){\footnotesize \mygray{$\mathcal{G}$}}
\put(4.3,16.2){\footnotesize \mygray{$\Theta$}}
\put(26.5,25){\footnotesize $G_{\X}$}
\put(26.5,5.4){\footnotesize $G_{\Y}$}
\put(29.9,16.2){\footnotesize \emph{adjoint} $\widehat{A}_{\X\Y}$}
\put(70.6,16.2){\footnotesize \rick{$L(G_{\X}, G_{\Y})$}}
\end{overpic}
\end{center}
\end{wrapfigure}
\subsection{Learning the optimal transformation}
As mentioned above, we train our approach in two stages: first we train an embedding network using the loss described in Section \ref{subsec:learnembedding}. We then train a separate network that aims to compute an optimal linear transformation between the embeddings, which can be used to compute correspondences at test time. Our observation is that this linear transformation can be obtained given enough constraints, by solving a linear system. Therefore, following the ideas in Deep Functional Maps \cite{litany2017deep} our second network $\mathcal{G}$ takes as input the natural embedding of a shape and outputs a set of $p$ ``probe'' functions via $\mathcal{G}_{\Theta}(P_{\X}) = G_{\X}$ and $\mathcal{G}_{\Theta}(P_{\Y}) = G_{\Y}$ using shared trainable parameters $\Theta$. We then minimize the following loss:
  \begin{equation}
 \begin{aligned}
L(G_{\X}, G_{\Y}) & =  \| A_{\X\Y}^{gt} - \widehat{A}_{\X\Y}\|_{2}.
    \end{aligned}
  \label{eq:loss_descriptor}
 \end{equation}
 Here $A_{\X\Y}^{gt}$ is the ground truth linear transformation between the learned embeddings $A_{\X\Y}^{gt} = (\Phi_{\X}^{+} \Pi^{gt}_{\X\Y} \Phi_{\Y})^T$ whereas $\widehat{A}_{\X\Y} = \left((\Phi_{\Y}^{\dagger} G_{\Y})^{T}\right)^{\dagger}(\Phi_{\X}^{\dagger} G_{\X})^{T}$. This equation arises from the fact that if $A_{\X\Y}$ is the adjoint that aligns the embeddings then $A_{\X\Y}^{T}$ is a functional map from $\Y$ to $\X$ which implies that $A_{\X\Y}^{T} \Phi_{\Y}^{\dagger} G_{\Y} = \Phi_{\X}^{\dagger} G_{\X}$ whenever $G_{\X}, G_{\Y}$ are corresponding functions. Please see the supplementary material for a more detailed discussion.

\subsection{Test phase}
Once we train these two networks, we can estimate the correspondence between an arbitrary pair of point clouds $\X$ and $\Y$ in four steps: (1) compute the embeddings $\Phi_{\X}$ and $\Phi_{\Y}$ using the embedding network $\mathcal{N}$; (2) compute the set of probe functions, $G_{\X}$ and $G_{\Y}$ using the network $\mathcal{G}$; (3) solve for the linear transformation $A_{\X\Y}$ using the expression given for $\widehat{A}_{\X\Y}$ above; (4) estimate for the correspondence $\Pi_{\X\Y}$ via nearest neighbor search as described in Eq. \eqref{eq:nnassignment}. 

\paragraph{Discussion}
While the basis and probe function networks appear similar as they both output a matrix, they are different in their losses and, as consequence, in the task that they solve. Our first linearly-invariant embedding (basis) network aims to output a representation in $k$ dimensions so that different shapes share the same structure up to rotation and non-uniform scaling. Further, our loss in Eq \eqref{eq:loss_spatial} promotes continuity of the embedding with respect to the original shape coordinates. In contrast, the descriptor network aims to find a small set of reliable descriptors that can establish the linear transformation in the $k$ dimensional space. Our strategy is different from a network which would aim to find an embedding where correspondences are directly obtained as nearest neighbors (we call this option a ``universal embedding''), as such a network would have to  disambiguate each point directly. Instead, by first obtaining a smooth embedding and then using a small number of salient feature descriptors (probe functions in our case) our approach allows to find a dense correspondence even in challenging cases, in which individual points may not be easy to distinguish.


\section{Experiments}
\label{sec:results}
We evaluate our pipeline on the correspondence problem between non-rigid 3D point clouds in the challenging class of human models. We use this class because of the availability of data and baselines for comparison but stress that our method is general and can be applied to any shape category.

\paragraph{Architecture and parameters}
Both of our networks $\mathcal{N}$ and $\mathcal{G}$ are built upon the PointNet architecture \cite{qi2017pointnet}. 
For our experiments we train over $10$K shapes from the SURREAL dataset \cite{varol17_surreal}, resampled at $1$K vertices. We learn a $k=20$ dimensional embedding (basis) and $p=40$ probe functions for each point cloud. 
We report in Supplementary Materials the complete description of the architectures and the training data.

\subsection{Non-isometric pointclouds}
\begin{figure}[!t]
\hspace{-0.7cm}
\begin{tabular}{cccccccc}
    \centering
    \begin{minipage}{0.27\linewidth}
    \input{./figures/PC_000_new_wide.tikz} 
    \end{minipage}
    &
    \hspace{0.14cm}
    \begin{minipage}{0.27\linewidth}
     
    \input{./figures/PC_001_new_wide.tikz} 
    \end{minipage}    
    &
    \hspace{-0.3cm}
    
\begin{minipage}{0.47\linewidth}
     \vspace{0cm}
 \begin{overpic}[trim=0cm 0cm 60cm 0cm,clip,width=\linewidth]{./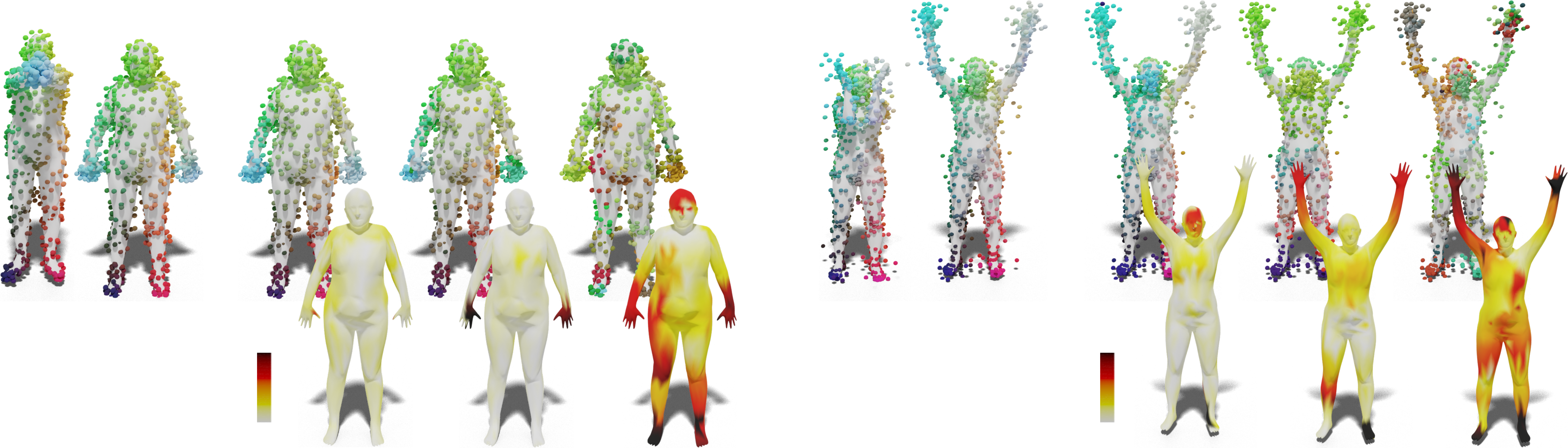}
 \put(3,53){N}
 \put(14,53){GT}
 \put(33,53){\textbf{our}}
 \put(53,53){3DC} 
 \put(70,53){GFM} 
 
  \put(27,10){\tiny 0.6}
   \put(30,2){\tiny 0} 
   \end{overpic} 
\end{minipage}
\end{tabular}
    \caption{The evaluation of the correspondence for point clouds generated from the FAUST dataset without or with additional noise. On the left, cumulative curves with mean error in the legends. On the right, a qualitative example in Noise setup, with the related hotmap error.}
    \label{fig:pointclouds_comp}
\end{figure}

We consider a first test set composed by the $100$ shapes from the FAUST dataset \cite{FAUST} ($10$ different subjects in $10$ different poses). We treat each shape as an unorganized point cloud selecting only $1K$ of its vertices and discarding mesh  connectivity. We generate a second test set perturbing the first one with Gaussian noise. 
In both test sets, we deal with non-isometric pairs (different subjects) and strong non rigid deformations (different poses).
The second one is particularly challenging because it ruins the underlying shape structure.
\rick{
As competitive baselines we consider \emph{universal embeddings} (Uni20 and Uni60) obtained with the same architecture we used for $\mathcal{N}$ by learning $20$ and $60$ basis respectively,} but enforcing the optimal linear transformation to be identity.
We also compare our method with the standard functional maps, with $5$ ground-truth landmarks (FMAP), the recent state-of-the-art methods (GFM) \cite{donati2020deep}, and finally against 3D-CODED \cite{groueix20183d} (3DC). For the GFM and FMAP methods we also compare to a version refined with ZoomOut \cite{melzi2019zoomout} (FMAP+ZOO, GFM+ZOO). For the methods that require the LBO basis, we adopt the estimation of LBO for point clouds proposed in \cite{clarenz2004finite}.
As can be seen in Figure \ref{fig:pointclouds_comp}, we outperform the baseline and all the competitors including the the state-of-the-art methods GFM and 3DC in both the considered scenarios. We stress that both \cite{groueix20183d} and \cite{donati2020deep} are very recent highly complex state-of-the-art methods, with e.g. \cite{groueix20183d} being directly adapted to point clouds with an expensive test-time post-processing. Our method achieves state-of-the-art results without any additional post-processing. \rick{Further robustness of our method is illustrated in Figure \ref{fig:Outliers}, where we evaluate our networks trained on clean data, on the FAUST test set augmented with outliers points. Our method shows significant resilience and outperforms competing methods in this challenging setting, despite not being presented with outlier data at training time.}

\begin{figure}[!t]
\hspace{-1.3cm}
\centering
  \setlength{\tabcolsep}{0pt}
  
  \begin{tabular}{c c c c c c c}

\begin{tabular}{|l|c|}
    
    \hline
     & Outlier \\
    \hline
    \textbf{our}             & \textbf{2.7e-1} \\
    Uni60             & 3.5e-1 \\
    GFM          & 3.5e-1\\
    \hline
\end{tabular}

&

\hspace{1cm}

      \begin{minipage}{0.11\linewidth} 
      \vspace{0.5cm}
     \begin{overpic}[trim=0cm 0cm 0cm 0cm,clip,width=0.95\linewidth]{./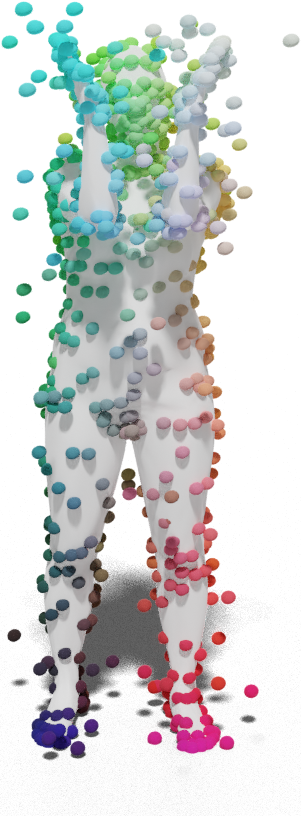}
    \put(12,105){{N}}

  \end{overpic}
   
    \end{minipage}%
    
     &
      \begin{minipage}{0.15\linewidth} 
     \begin{overpic}[trim=0cm 0cm 0cm 0cm,clip,width=0.95\linewidth]{./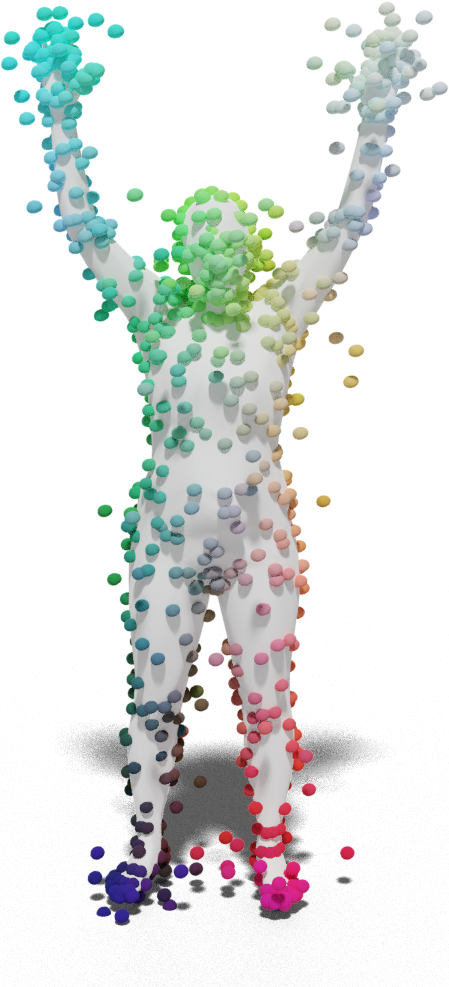}
    \put(16,96){{GT}}
  \end{overpic}

    \end{minipage}%
    
        &

      \begin{minipage}{0.15\linewidth} 
      
     \begin{overpic}[trim=0cm 0cm 0cm 0cm,clip,width=0.95\linewidth]{./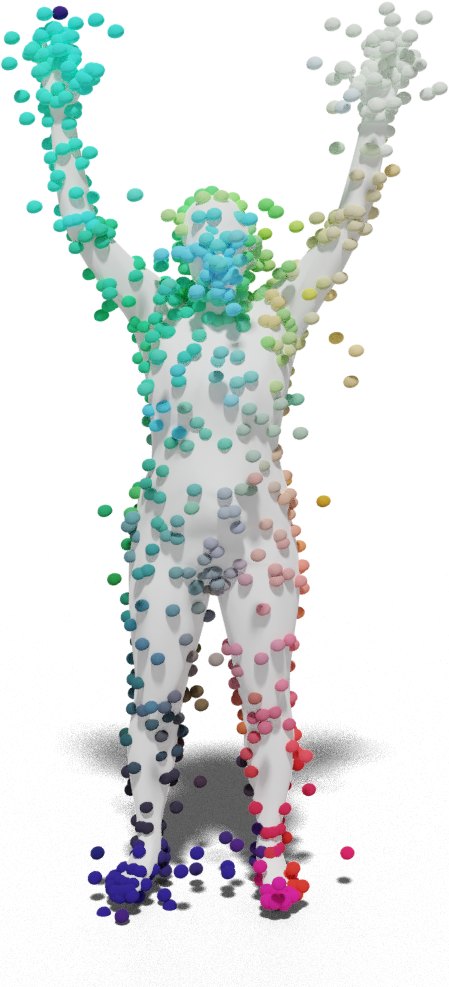}
    \put(16,96){{\textbf{our}}}

  \end{overpic}
     
    \end{minipage}%
          &
     
      \begin{minipage}{0.15\linewidth} 
     \begin{overpic}[trim=0cm 0cm 0cm 0cm,clip,width=0.95\linewidth]{./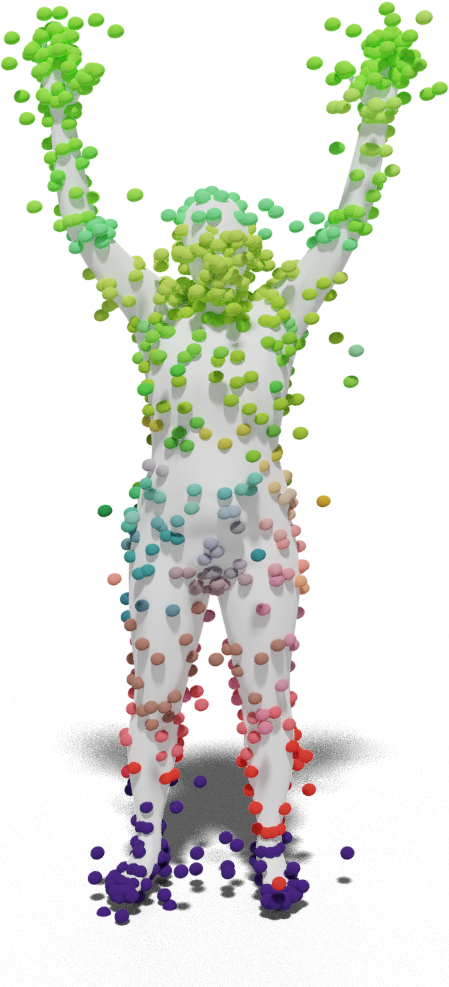}
    \put(14,96){{Uni60}}

  \end{overpic}
     
    \end{minipage}%
    
         &
        
      \begin{minipage}{0.15\linewidth} 
     \begin{overpic}[trim=0cm 0cm 0cm 0cm,clip,width=0.95\linewidth]{./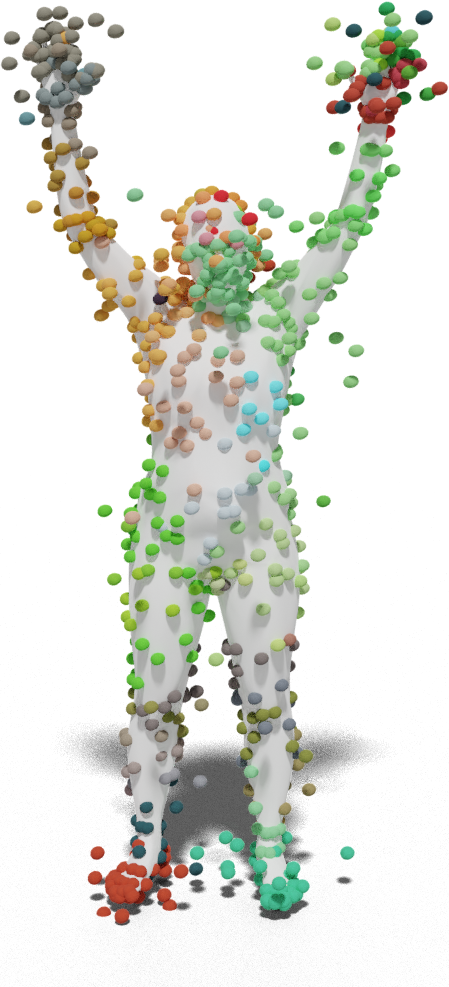}
\put(15.3,96){{GFM}}
  \end{overpic}

    \end{minipage} 

  \end{tabular}
 \caption{\label{fig:Outliers}
Qantitative results on 100 pairs of the test set with 30\% outlier points, compared to the baselines, with a qualitative example.
\vspace{-3mm}} 
\end{figure}

\begin{wrapfigure}[18]{r}{0.4\linewidth}

\vspace{-0.4cm}

    \begin{tabular}{cc}
    \hspace{-0.1cm}
    
    \begin{minipage}{0.45\linewidth}
    \hspace{0.6cm}
     \begin{overpic}[trim=0cm 0cm 0cm 0cm,clip,width=0.93\linewidth]{./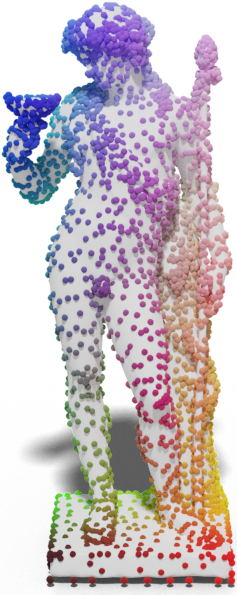}
       \end{overpic}
\end{minipage}
&
    \begin{minipage}{0.55\linewidth}
    \vspace{0.5cm}
    \hspace{-0.5cm}
     \begin{overpic}[trim=0cm 0cm 0cm 0cm,clip,width=0.93\linewidth]{./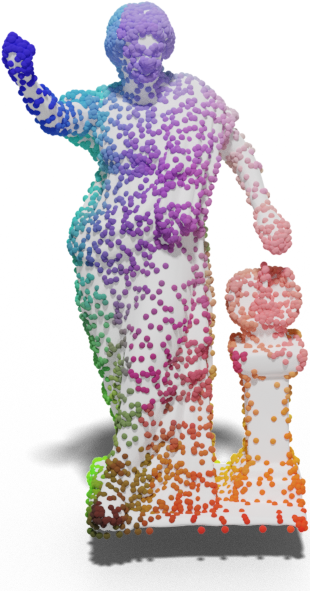}
       \end{overpic}
\end{minipage}
    \end{tabular}

     \vspace{-0.1cm}
     
     \caption{\label{fig:statues} A qualitative example of matching between two statues. Despite the presence of clutter, partiality and non-isometry our point cloud-based approach shows resilience.}
     
\end{wrapfigure}
\rick{
In addition, in Figure \ref{fig:statues} we also visualize a correspondence, computed using our network, between a pair of real-world scans taken from the \emph{Scan the world} project collection \cite{scantheworld}. The presence of significant topological changes, partiality, clutter, non-isometry and self-intersections represent a significant challenge. Despite this, our method, shows remarkable resilience and provides a reliable result even without retraining or post-processing. We provide more illustrations on real scans in the Supplementary Materials.
}

\subsection{Fragmented partiality}
Finally, we compare our approach, the universal embedding and the LBO basis (LBO) in an extreme scenario. We compute a correspondence between each of the $100$ full shapes from FAUST and a fragmented version that consists of several small disconnected components. This experiment tests
how each basis is affected by heavy loss of geometry.
Fixing a basis, we evaluate 1) the matching using a ground-truth transformation to retrieve the optimal linear transformation, on the left of Figure \ref{fig:Partial}; 2) the correspondence estimated with the best pipeline for the given basis, on the right. 
The average geodesic errors are reported in the legends. 
In 2) for LBO  we consider partial functional maps  (PFM) \cite{rodola2017partial}, which extends the functional maps framework to partial cases.
In the middle we visualize a qualitative comparison on one of the 100 pairs tested, where the correspondence in encoded by the color transfer.
We  highlight that it is not always possible to have a transformation that produces a perfect matching.
LBO+opt and PFM suffer from the significant sensitivity of the LBO to partiality and topological noise. The universal embedding shows also a significant loss of information. With the linear invariant embedding it is still possible to retrieve good information and to generalize to corrupted data that are completely unseen during training.

\begin{figure}[!t]
\centering

  \setlength{\tabcolsep}{0pt}
  \begin{tabular}{l | c c c c c | r}
\hspace{-0.3cm}
  
\begin{minipage}{0.19\linewidth}
    \input{./figures/partials.tikz}
\end{minipage}

\hspace{0.95cm}

&
\hspace{0.05cm}

\begin{minipage}{0.1\linewidth}
 \begin{overpic}[trim=0cm 0cm 0cm 0cm,clip,width=0.93\linewidth]{./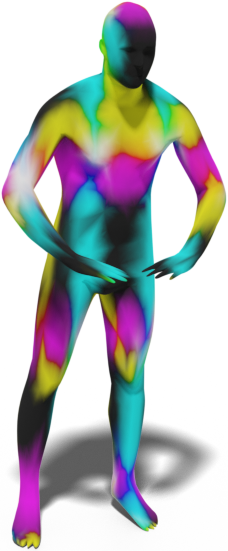}
  \put(12,-9){\footnotesize{{N}}}
   \end{overpic}
\end{minipage}
&
\begin{minipage}{0.1\linewidth}
 \begin{overpic}[trim=0cm 0cm 0cm 0cm,clip,width=0.93\linewidth]{./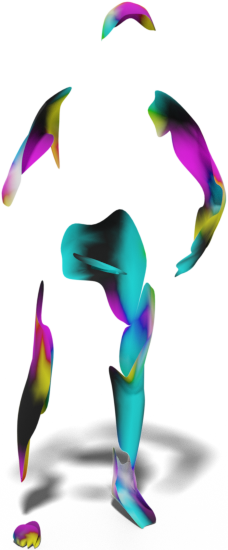}
  \put(10,-9){\footnotesize{{GT}}}
   \end{overpic}
\end{minipage}
&
\begin{minipage}{0.1\linewidth}
 \begin{overpic}[trim=0cm 0cm 0cm 0cm,clip,width=0.93\linewidth]{./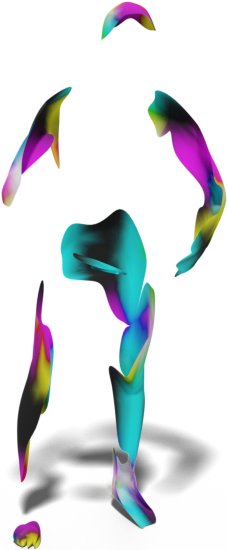}
  \put(8,-9){\footnotesize{\textbf{OUR}}}
  \put(7.8,-18){\footnotesize{\textbf{+Opt}}}
   \end{overpic}
\end{minipage}
&
\begin{minipage}{0.1\linewidth}
 \begin{overpic}[trim=0cm 0cm 0cm 0cm,clip,width=0.93\linewidth]{./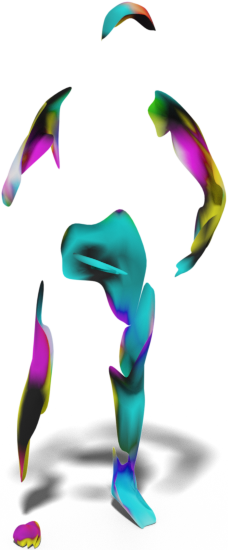}
  \put(8.5,-9){\footnotesize{{Uni20}}}
  \put(6.8,-18){\footnotesize{{+Opt}}}
   \end{overpic}
\end{minipage}
&
\begin{minipage}{0.1\linewidth}
 \begin{overpic}[trim=0cm 0cm 0cm 0cm,clip,width=0.93\linewidth]{./figures/part/LBO+Opt_crop.png}
  \put(8,-9){\footnotesize{{LBO}}}
  \put(7.8,-18){\footnotesize{{+Opt}}}
   \end{overpic}
\end{minipage}
&
\begin{minipage}{0.19\linewidth}
    \input{./figures/rigid_partials.tikz}
\end{minipage}

\hspace{1cm}
\end{tabular}
\caption{\label{fig:Partial}
Partial setup. The shape is matched with a fragmented version of itself. We show the amount of information lost by the basis due to surface destruction and compare our method to baseline and partial functional map (pfm) \cite{rodola2017partial}. More details in the main text.
\vspace{-3mm}}
\end{figure}

\section{Conclusion}
\label{sec:conclusion}
In this paper, we presented an extension to the functional maps framework by replacing the standard Laplace-Beltrami eigenfunctions with learned functions. We achieve this by learning an optimal linearly-invariant embedding and a separate network that aligns embeddings of different shapes.

While general, our approach still assumes that the input data poses a ``natural'' embedding in 3D making it yet not applicable to data such as graphs. Moreover, we do not exploit the mesh structure that \emph{might} be available in certain cases. Combining our method with a mesh-aware approach is an interesting direction for future work. 
Our preliminary investigation outperforms the competitors in challenging scenarios. We believe that these results only scratch the surface and can pave the way to future work on invariant embeddings for shape correspondence and other related problems.

\section{Broader Impact}
\label{sec:impact}
Computing reliable correspondences is a problem that arises in many scientific disciplines and practical scenarios including medical imaging and industrial quality control (for detecting anomalies and performing repair and analysis), as well as 3D animation and texture transfer, statistical shape analysis, and even personalized medicine (with accurate detection of measurements often performed by template matching). Our novel fully trainable pipeline paves the way to more accurate results with direct practical applications in all of these fields, especially as it is applicable to arbitrary 3D shapes and deformations, unlike many existing methods which are specific e.g. to humans or near-isometries. This has the potential to replace highly specific axiomatic methods and tedious manual intervention. Finally, our insights can shed light on the structure of functional maps and shape analysis more broadly. We do not see any ethical issue with the proposed method, at least no ethical issues may be caused by our method as it is. 
\section{Acknowledgements}
\label{sec:Acknowledgements}
The authors would like to thank the anonymous reviewers for their detailed feedback and suggestions. Parts of this work were supported by the KAUST OSR Award No. CRG-2017-3426, the ERC Starting Grant No. 758800 (EXPROTEA) and the ANR AI Chair AIGRETTE. 

%

\bibliographystyle{splncs04}
\bibliography{biblio}

\begin{thebibliography}{10}
\providecommand{\url}[1]{\texttt{#1}}
\providecommand{\urlprefix}{URL }
\providecommand{\doi}[1]{https://doi.org/#1}

\bibitem{scantheworld}
Scan the world project. \url{https://www.myminifactory.com/scantheworld},
  [Online; accessed 22-October-2020]

\bibitem{Atzmon}
Atzmon, M., Maron, H., Lipman, Y.: Point convolutional neural networks by
  extension operators. ACM Trans. Graph.  \textbf{37}(4) (Jul 2018)

\bibitem{aubry2011wave}
Aubry, M., Schlickewei, U., Cremers, D.: The wave kernel signature: A quantum
  mechanical approach to shape analysis. In: Computer Vision Workshops (ICCV
  Workshops), 2011 IEEE International Conference on. pp. 1626--1633. IEEE
  (2011)

\bibitem{belkin2009constructing}
Belkin, M., Sun, J., Wang, Y.: Constructing laplace operator from point clouds
  in rd. In: Proceedings of the twentieth annual ACM-SIAM symposium on Discrete
  algorithms. pp. 1031--1040. Society for Industrial and Applied Mathematics
  (2009)

\bibitem{survey0}
Biasotti, S., Cerri, A., Bronstein, A., Bronstein, M.: Recent trends,
  applications, and perspectives in 3d shape similarity assessment. Computer
  Graphics Forum  \textbf{35}(6),  87--119 (2016)

\bibitem{FAUST}
Bogo, F., Romero, J., Loper, M., Black, M.J.: {FAUST}: Dataset and evaluation
  for {3D} mesh registration. In: Proceedings IEEE Conf. on Computer Vision and
  Pattern Recognition (CVPR). IEEE, Piscataway, NJ, USA (Jun 2014)

\bibitem{boscaini2016anisotropic}
Boscaini, D., Masci, J., Rodol{\`a}, E., Bronstein, M.M., Cremers, D.:
  Anisotropic diffusion descriptors. Computer Graphics Forum  \textbf{35}(2),
  431--441 (2016)

\bibitem{choy2016universal}
Choy, C.B., Gwak, J., Savarese, S., Chandraker, M.: Universal correspondence
  network. In: Advances in Neural Information Processing Systems. pp.
  2414--2422 (2016)

\bibitem{clarenz2004finite}
Clarenz, U., Rumpf, M., Telea, A.: Finite elements on point based surfaces. In:
  Proceedings of the First Eurographics conference on Point-Based Graphics. pp.
  201--211. SPBG’04, Eurographics Association, Goslar, DEU (2004)

\bibitem{desclearn}
Corman, {\'E}., Ovsjanikov, M., Chambolle, A.: Supervised descriptor learning
  for non-rigid shape matching. In: Agapito, L., Bronstein, M.M., Rother, C.
  (eds.) Computer Vision - ECCV 2014 Workshops. pp. 283--298. Springer
  International Publishing, Cham (2015)

\bibitem{Denitto_2017_ICCV}
Denitto, M., Melzi, S., Bicego, M., Castellani, U., Farinelli, A., Figueiredo,
  M.A.T., Kleiman, Y., Ovsjanikov, M.: Region-based correspondence between 3d
  shapes via spatially smooth biclustering. In: The IEEE International
  Conference on Computer Vision (ICCV) (Oct 2017)

\bibitem{NIPS2019_8962}
Deprelle, T., Groueix, T., Fisher, M., Kim, V., Russell, B., Aubry, M.:
  Learning elementary structures for 3d shape generation and matching. In:
  Wallach, H., Larochelle, H., Beygelzimer, A., d'Alch\'{e} Buc, F., Fox, E.,
  Garnett, R. (eds.) Advances in Neural Information Processing Systems 32, pp.
  7435--7445. Curran Associates, Inc. (2019)

\bibitem{donati2020deep}
Donati, N., Sharma, A., Ovsjanikov, M.: Deep geometric functional maps: Robust
  feature learning for shape correspondence. In: IEEE Conference on Computer
  Vision and Pattern Recognition (CVPR) (June 2020)

\bibitem{ezuz2017deblurring}
Ezuz, D., Ben-Chen, M.: Deblurring and denoising of maps between shapes.
  Computer Graphics Forum  \textbf{36}(5),  165--174 (2017)

\bibitem{fey2020deep}
Fey, M., Lenssen, J.E., Morris, C., Masci, J., Kriege, N.M.: Deep graph
  matching consensus. arXiv preprint arXiv:2001.09621  (2020)

\bibitem{gainza2020deciphering}
Gainza, P., Sverrisson, F., Monti, F., Rodola, E., Boscaini, D., Bronstein, M.,
  Correia, B.: Deciphering interaction fingerprints from protein molecular
  surfaces using geometric deep learning. Nature Methods  \textbf{17}(2),
  184--192 (2020)

\bibitem{StableRegion}
Ganapathi-Subramanian, V., Thibert, B., Ovsjanikov, M., Guibas, L.: Stable
  region correspondences between non-isometric shapes. In: Proceedings of the
  Symposium on Geometry Processing. p. 121–133. SGP ’16, Eurographics
  Association, Goslar, DEU (2016)

\bibitem{ginzburg2019cyclic}
Ginzburg, D., Raviv, D.: Cyclic functional mapping: Self-supervised
  correspondence between non-isometric deformable shapes. arXiv preprint
  arXiv:1912.01249  (2019)

\bibitem{Gojcic_2019_CVPR}
Gojcic, Z., Zhou, C., Wegner, J.D., Wieser, A.: The perfect match: 3d point
  cloud matching with smoothed densities. In: The IEEE Conference on Computer
  Vision and Pattern Recognition (CVPR) (June 2019)

\bibitem{groueix20183d}
Groueix, T., Fisher, M., Kim, V.G., Russell, B.C., Aubry, M.: 3d-coded: 3d
  correspondences by deep deformation. In: Proceedings of the European
  Conference on Computer Vision (ECCV). pp. 230--246 (2018)

\bibitem{groueix2019unsupervised}
Groueix, T., Fisher, M., Kim, V.G., Russell, B.C., Aubry, M.: Unsupervised
  cycle-consistent deformation for shape matching. In: Computer Graphics Forum.
  vol.~38, pp. 123--133. Wiley Online Library (2019)

\bibitem{Halimi_2019_CVPR}
Halimi, O., Litany, O., Rodola, E., Bronstein, A.M., Kimmel, R.: Unsupervised
  learning of dense shape correspondence. In: The IEEE Conference on Computer
  Vision and Pattern Recognition (CVPR) (June 2019)

\bibitem{huang2014functional}
Huang, Q., Wang, F., Guibas, L.: Functional map networks for analyzing and
  exploring large shape collections. ACM Transactions on Graphics (TOG)
  \textbf{33}(4),  1--11 (2014)

\bibitem{huang2017adjoint}
Huang, R., Ovsjanikov, M.: Adjoint map representation for shape analysis and
  matching. In: Computer Graphics Forum. vol.~36, pp. 151--163. Wiley Online
  Library (2017)

\bibitem{jin2019fast}
Jin, Y.H., Lee, W.H.: Fast cylinder shape matching using random sample
  consensus in large scale point cloud. Applied Sciences  \textbf{9}(5), ~974
  (2019)

\bibitem{kovnatsky2013coupled}
Kovnatsky, A., Bronstein, M.M., Bronstein, A.M., Glashoff, K., Kimmel, R.:
  Coupled quasi-harmonic bases. In: Computer Graphics Forum. vol.~32, pp.
  439--448. Wiley Online Library (2013)

\bibitem{levy2006laplace}
Levy, B.: Laplace-beltrami eigenfunctions towards an algorithm that"
  understands" geometry. In: IEEE International Conference on Shape Modeling
  and Applications 2006 (SMI'06). pp. 13--13. IEEE (2006)

\bibitem{levy2010}
Levy, B., Zhang, R.H.: Spectral geometry processing. In: ACM SIGGRAPH Course
  Notes (2010)

\bibitem{li2020toward}
Li, K., Chong, M.J., Liu, J., Forsyth, D.: Toward accurate and realistic
  virtual try-on through shape matching and multiple warps. arXiv preprint
  arXiv:2003.10817  (2020)

\bibitem{liang2013solving}
Liang, J., Zhao, H.: Solving partial differential equations on point clouds.
  SIAM Journal on Scientific Computing  \textbf{35}(3),  A1461--A1486 (2013)

\bibitem{litany2017deep}
Litany, O., Remez, T., Rodol{\`a}, E., Bronstein, A., Bronstein, M.: Deep
  functional maps: Structured prediction for dense shape correspondence. In:
  Proceedings of the IEEE International Conference on Computer Vision. pp.
  5659--5667 (2017)

\bibitem{FARM}
Marin, R., Melzi, S., Rodolà, E., Castellani, U.: Farm: Functional automatic
  registration method for 3d human bodies. Computer Graphics Forum
  \textbf{39}(1),  160--173 (2020)

\bibitem{GCNN}
Masci, J., Boscaini, D., Bronstein, M.M., Vandergheynst, P.: Geodesic
  convolutional neural networks on riemannian manifolds. In: Proceedings of the
  2015 IEEE International Conference on Computer Vision Workshop (ICCV). p.
  832–840. ICCV’15, IEEE Computer Society, USA (2015)

\bibitem{CMH}
Melzi, S., Marin, R., Musoni, P., Bardon, F., Tarini, M., Castellani, U.:
  Intrinsic/extrinsic embedding for functional remeshing of 3d shapes.
  Computers \& Graphics  \textbf{88},  1 -- 12 (2020)

\bibitem{melzi2019zoomout}
Melzi, S., Ren, J., Rodol{\`a}, E., Sharma, A., Wonka, P., Ovsjanikov, M.:
  Zoomout: Spectral upsampling for efficient shape correspondence. ACM
  Transactions on Graphics (TOG)  \textbf{38}(6), ~155 (2019)

\bibitem{LMH}
Melzi, S., Rodol{\`a}, E., Castellani, U., Bronstein, M.: Localized manifold
  harmonics for spectral shape analysis. Computer Graphics Forum
  \textbf{37}(6),  20--34 (2018)

\bibitem{nogneng18}
Nogneng, D., Melzi, S., Rodol{\`a}, E., Castellani, U., Bronstein, M.,
  Ovsjanikov, M.: Improved functional mappings via product preservation.
  Computer Graphics Forum  \textbf{37}(2),  179--190 (2018)

\bibitem{nogneng2017informative}
Nogneng, D., Ovsjanikov, M.: Informative descriptor preservation via
  commutativity for shape matching. In: Computer Graphics Forum. vol.~36, pp.
  259--267. Wiley Online Library (2017)

\bibitem{ovsjanikov2012functional}
Ovsjanikov, M., Ben-Chen, M., Solomon, J., Butscher, A., Guibas, L.: Functional
  maps: a flexible representation of maps between shapes. ACM Transactions on
  Graphics (TOG)  \textbf{31}(4),  30:1--30:11 (2012)

\bibitem{ovsjanikov2017computing}
Ovsjanikov, M., Corman, E., Bronstein, M., Rodol{\`a}, E., Ben-Chen, M.,
  Guibas, L., Chazal, F., Bronstein, A.: Computing and processing
  correspondences with functional maps. In: SIGGRAPH 2017 Courses (2017)

\bibitem{pinkall1993computing}
Pinkall, U., Polthier, K.: Computing discrete minimal surfaces and their
  conjugates. Experimental mathematics  \textbf{2}(1),  15--36 (1993)

\bibitem{pokrass2013sparse}
Pokrass, J., Bronstein, A.M., Bronstein, M.M., Sprechmann, P., Sapiro, G.:
  Sparse modeling of intrinsic correspondences. In: Computer Graphics Forum.
  vol.~32, pp. 459--468. Wiley Online Library (2013)

\bibitem{poulenard2018multi}
Poulenard, A., Ovsjanikov, M.: Multi-directional geodesic neural networks via
  equivariant convolution. ACM Transactions on Graphics (TOG)  \textbf{37}(6),
  1--14 (2018)

\bibitem{qi2017pointnet}
Qi, C.R., Su, H., Mo, K., Guibas, L.J.: Pointnet: Deep learning on point sets
  for 3d classification and segmentation. In: Proceedings of the IEEE
  Conference on Computer Vision and Pattern Recognition. pp. 652--660 (2017)

\bibitem{Ponitnetplus}
Qi, C.R., Yi, L., Su, H., Guibas, L.J.: Pointnet++: Deep hierarchical feature
  learning on point sets in a metric space. In: Proceedings of the 31st
  International Conference on Neural Information Processing Systems. p.
  5105–5114. NIPS’17, Curran Associates Inc., Red Hook, NY, USA (2017)

\bibitem{ren2018continuous}
Ren, J., Poulenard, A., Wonka, P., Ovsjanikov, M.: Continuous and
  orientation-preserving correspondences via functional maps. ACM Transactions
  on Graphics (TOG)  \textbf{37}(6),  1--16 (2018)

\bibitem{rodola2017partial}
Rodol{\`a}, E., Cosmo, L., Bronstein, M.M., Torsello, A., Cremers, D.: Partial
  functional correspondence. In: Computer Graphics Forum. vol.~36, pp.
  222--236. Wiley Online Library (2017)

\bibitem{roufosse2019unsupervised}
Roufosse, J.M., Sharma, A., Ovsjanikov, M.: Unsupervised deep learning for
  structured shape matching. In: Proceedings of the IEEE International
  Conference on Computer Vision. pp. 1617--1627 (2019)

\bibitem{rustamov2007laplace}
Rustamov, R.M.: Laplace-beltrami eigenfunctions for deformation invariant shape
  representation. In: Proceedings of the fifth Eurographics symposium on
  Geometry processing. pp. 225--233. Eurographics Association (2007)

\bibitem{survey1}
Sahillio\v{g}lu, Y.: Recent advances in shape correspondence. The Visual
  Computer pp. 1--17 (09 2019)

\bibitem{sun2009concise}
Sun, J., Ovsjanikov, M., Guibas, L.: A concise and provably informative
  multi-scale signature based on heat diffusion. Computer Graphics Forum
  \textbf{28}(5),  1383--1392 (2009)

\bibitem{SANCHEZ2020}
Sánchez-Belenguer, C., Ceriani, S., Taddei, P., Wolfart, E., Sequeira, V.:
  Global matching of point clouds for scan registration and loop detection.
  Robotics and Autonomous Systems  \textbf{123},  103324 (2020)

\bibitem{taubin}
Taubin, G.: A signal processing approach to fair surface design. In:
  Proceedings of the 22nd Annual Conference on Computer Graphics and
  Interactive Techniques. p. 351–358. SIGGRAPH, Association for Computing
  Machinery, New York, NY, USA (1995)

\bibitem{thewlis2019unsupervised}
Thewlis, J., Albanie, S., Bilen, H., Vedaldi, A.: Unsupervised learning of
  landmarks by descriptor vector exchange. In: Proceedings of the IEEE
  International Conference on Computer Vision. pp. 6361--6371 (2019)

\bibitem{Thomas_2019_ICCV}
Thomas, H., Qi, C.R., Deschaud, J.E., Marcotegui, B., Goulette, F., Guibas,
  L.J.: Kpconv: Flexible and deformable convolution for point clouds. In: The
  IEEE International Conference on Computer Vision (ICCV) (October 2019)

\bibitem{SHOT}
Tombari, F., Salti, S., Di~Stefano, L.: Unique signatures of histograms for
  local surface description. In: Proc. ECCV. pp. 356--369. Springer (2010)

\bibitem{varol17_surreal}
Varol, G., Romero, J., Martin, X., Mahmood, N., Black, M.J., Laptev, I.,
  Schmid, C.: Learning from synthetic humans. In: CVPR (2017)

\bibitem{wang2019functional}
Wang, F.D., Xue, N., Zhang, Y., Xia, G.S., Pelillo, M.: A functional
  representation for graph matching. IEEE transactions on pattern analysis and
  machine intelligence  (2019)

\bibitem{Wang2020MGCN}
Wang, Y., Ren, J., Yan, D.M., Guo, J., Zhang, X., Wonka, P.: Mgcn: Descriptor
  learning using multiscale gcns. ACM Trans. Graph.  \textbf{39}(4) (2020).
  \doi{10.1145/3386569.3392443}

\bibitem{wang2019prnet}
Wang, Y., Solomon, J.M.: Prnet: Self-supervised learning for partial-to-partial
  registration. In: Advances in Neural Information Processing Systems. pp.
  8812--8824 (2019)

\bibitem{wei2016dense}
Wei, L., Huang, Q., Ceylan, D., Vouga, E., Li, H.: Dense human body
  correspondences using convolutional networks. In: Proceedings of the IEEE
  Conference on Computer Vision and Pattern Recognition. pp. 1544--1553 (2016)

\bibitem{Zhou2019SiamesePointNet}
Zhou, J., Wang, M.J., Mao, W.D., Gong, M.L., Liu, X.P.: Siamesepointnet: A
  siamese point network architecture for learning 3d shape descriptor. Computer
  Graphics Forum  \textbf{39}(1),  309--321 (2020)

\end{thebibliography}
\clearpage

\appendix
\renewcommand*{\thesection}{\arabic{section}}

\null
  \vbox{%
    \hsize\textwidth
    \linewidth\hsize
    \vskip 0.1in
  \hrule height 4px
  \vskip 0.25in
  \vskip -\parskip%
    \centering
    {\LARGE\bf Supplementary Materials\par}
  \vskip 0.29in
  \vskip -\parskip
  \hrule height 1px
  \vskip 0.09in%
    \vskip 0.2in
  }
  
\section{Adjoint operator definition and properties}
\label{sec:proof_delta_adjoint}

In this section, we provide a concise description of the adjoint operator and its relation to the transfer of Dirac delta functions and functional maps. Note that the adjoint operator of functional maps has been considered, e.g., in \cite{huang2017adjoint} although its role in delta function transfer was not explicitly addressed in that work.

\subsection{Formal definition of the Adjoint operator}
Suppose we have a pointwise map $T_{\X\Y}: \X \rightarrow \Y$ between two smooth surfaces $\X,\Y$. Then we will denote $T^{\mathcal{F}}_{\Y\X}$ the functional correspondence defined by the pull-back: $T^{\mathcal{F}}_{\Y\X}: f \rightarrow f \circ T_{\X\Y}$, where $f: \Y \rightarrow \mathbb{R}$ and $f \circ T_{\X\Y} : \X \rightarrow \mathbb{R}$ such that $f\circ T_{\X\Y} (x) = f(T_{\X\Y}(x))$ for any $x \in \X$.

The \emph{adjoint functional map operator} $A_{\X\Y}$ is defined implicitly through the following equation:
\begin{align}
\label{eq:distribution}
<A_{\X\Y} g, f>_{\Y} = <g , T^{\mathcal{F}}_{\Y\X} f>_{\X} ~~ \forall ~~ f: \Y\rightarrow \mathbb{R}, ~ g : \X\rightarrow \mathbb{R}.
\end{align}
Here we denote with $<,>_{\X}$ and  $<,>_{\Y}$ the $L^2$ inner product for functions respectively on shape $\X$ and $\Y$. The adjoint always exists and is unique by the Riesz representation theorem (see also Theorem 3.1 in \cite{huang2017adjoint}).

\subsection{Adjoint operator and delta functions}
As mentioned in the main manuscript, the adjoint can be used to map \emph{distributions} (or generalized functions), which is particularly important for mapping points represented as Dirac delta functions.

Recall that $\forall y \in \Y$, a Dirac delta function $\delta_y$ is a distribution such that, by definition, for any function $f$ we have $<\delta_y, f>_{\Y} = f(y)$. 

\begin{theorem}
If $A_{\X\Y}$ is the adjoint operator associated with a point-to-point mapping $T_{\X\Y}$ as in Eq. \eqref{eq:distribution}, then $A_{\X\Y} \delta_x = \delta_{T_{\X\Y}(x)}$.
\end{theorem}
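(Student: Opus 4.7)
The plan is to substitute $g = \delta_x$ directly into the defining equation of the adjoint and then unwind each side using the defining property of the Dirac distribution and the definition of the pull-back. Since the adjoint is characterized uniquely by its action against test functions, showing that $A_{\X\Y}\delta_x$ agrees with $\delta_{T_{\X\Y}(x)}$ when paired with every $f\in\mathcal{F}(\Y)$ suffices.

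Concretely, I would take an arbitrary test function $f: \Y \to \mathbb{R}$ and evaluate
\begin{equation*}
\langle A_{\X\Y}\delta_x,\, f\rangle_{\Y} \;=\; \langle \delta_x,\, T^{\mathcal{F}}_{\Y\X} f\rangle_{\X}
\end{equation*}
using Eq.~\eqref{eq:distribution} with $g=\delta_x$. By the defining property of the Dirac delta, the right-hand side equals $(T^{\mathcal{F}}_{\Y\X} f)(x)$. Expanding the pull-back definition, $(T^{\mathcal{F}}_{\Y\X} f)(x) = f(T_{\X\Y}(x))$. Finally, reading the value $f(T_{\X\Y}(x))$ once more as a Dirac pairing yields $\langle \delta_{T_{\X\Y}(x)},\, f\rangle_{\Y}$. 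Chaining these identities gives
\begin{equation*}
\langle A_{\X\Y}\delta_x,\, f\rangle_{\Y} \;=\; \langle \delta_{T_{\X\Y}(x)},\, f\rangle_{\Y} \qquad \forall f\in\mathcal{F}(\Y),
\end{equation*}
and by uniqueness (the Riesz representation invoked earlier to justify existence and uniqueness of $A_{\X\Y}$) we conclude $A_{\X\Y}\delta_x = \delta_{T_{\X\Y}(x)}$.

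The only real subtlety is that $\delta_x$ is not an element of $L^2$, so strictly speaking the defining equation of $A_{\X\Y}$ applies to $L^2$ functions, not distributions. The clean way to handle this is to extend $A_{\X\Y}$ to distributions by duality: define $A_{\X\Y}T$ for a distribution $T$ on $\X$ via $\langle A_{\X\Y}T, f\rangle_{\Y} := \langle T, T^{\mathcal{F}}_{\Y\X} f\rangle_{\X}$, which is well-posed whenever $T^{\mathcal{F}}_{\Y\X} f$ is a valid test function on $\X$ (true here since the pull-back of a smooth $f$ along a point-to-point map is a bounded function on $\X$). With this convention in place, the one-line computation above goes through verbatim and yields the stated identity. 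I expect this distributional extension remark to be the only point that requires care; the algebraic manipulation itself is essentially immediate once the substitution $g=\delta_x$ is made.
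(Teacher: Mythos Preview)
Your proof is correct and follows essentially the same approach as the paper's: substitute $\delta_x$ into the adjoint's defining equation, evaluate via the Dirac property and the pull-back definition to obtain $f(T_{\X\Y}(x))$, and conclude by uniqueness of distributions. Your extra remark on extending $A_{\X\Y}$ to distributions by duality is a careful point that the paper's proof simply takes for granted.
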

\begin{proof}
Using Eq. \eqref{eq:distribution} we get:
\begin{align}
<A_{\X\Y} \delta_x, f>_{\Y} &= <\delta_x , T^{\mathcal{F}}_{\Y\X} f>_{\X} = <\delta_x , f \circ T_{\X\Y} >_{\X}
\\ &=  f(T_{\X\Y}(x)).
\end{align}
Therefore, $A_{\X\Y} \delta_x$ equals some distribution $d$ such that $<d,f>_{\Y} = f(T_{\X\Y}(x))$  for any function $f : \Y \rightarrow \mathbb{R}$. By uniqueness of distributions this means that: $A_{\X\Y} \delta_x = \delta_{T_{\X\Y}(x)}$.
\end{proof}

In other words, the previous derivation proves that, unlike a functional map, \textbf{the functional map adjoint always maps delta functions to delta functions}. 

\subsection{Relation between the functional maps and the adjoint operator in the discrete setting}

Here we assume that the two shapes are represented in the discrete setting, with two embeddings $\Phi_{\X}, \Phi_{\Y}$, and a  pointwise map $\Pi_{\X\Y}$, using the notation from the main paper. Our goal is to establish the relationship between the functional map matrix and the linear operator, which aligns the two embeddings.

Given two embeddings $\Phi_{\X}, \Phi_{\Y}$ and a pointwise map $\Pi_{\X\Y}$ we would like to find a linear transformation $A_{\X\Y}$ such that:
\begin{align}
    A_{\X\Y} \Phi_{\X}^T &= (\Pi_{\X\Y}\Phi_{\Y})^T, \text{ or equivalently} \\
    \Phi_{\X} A_{\X\Y}^T &= \Pi_{\X\Y}\Phi_{\Y}
\end{align}

Formulating this as a least squares problem we get:
\begin{align}
    \min_{A} \| \Phi_{\X} A_{\X\Y}^T - \Pi_{\X\Y}\Phi_{\Y} \|_2,
\end{align}
from which the solution is given by:
\begin{align}
    A = \left(\Phi_{\X}^{\dagger} \Pi_{\X\Y}\Phi_{\Y}\right)^T
\end{align}

Recall that a functional map induced by $\Pi_{\X\Y}$ is defined as $C_{\Y\X} = \Phi_{\X}^{\dagger} \Pi_{\X\Y} \Phi_{\Y}$. Therefore, we can write: $A_{\X\Y} = C_{\Y\X}^T$. In other words, in the discrete setting the adjoint is nothing but the transpose of the functional map in the opposite direction.

\subsection{Probe function constraints}
Below we derive the relation between the probe function constraints for functional maps and those for the adjoint operator used in our approach, as described in Section 4.2 of the main paper. Here we derive the formula used in the main manuscript directly below Eq. (4).

In the main paper (Eq. (1) of the main manuscript) we wrote the following basic optimization problem for estimating functional maps:
\begin{equation}
    C_{\X\Y} = \argmin_{C \in \mathbb{R}^{k \times k}} \| C \Phi_{\X}^{\dagger} G_{\X} - \Phi_{\Y}^{\dagger} G_{\Y} \|_{2} + E_{reg}(C).
    \label{eq:fmaps1}
\end{equation}
Inverting the role of $\X$ and $\Y$ and removing the regularization we obtain:
\begin{equation}
    C_{\Y\X} = \argmin_{C \in \mathbb{R}^{k \times k}} \| C \Phi_{\Y}^{\dagger} G_{\Y} - \Phi_{\X}^{\dagger} G_{\X} \|_{2}.
    \label{eq:fmaps2}
\end{equation}
This implies that the optimal $C_{\Y\X}$ can be found as the solution of $C_{\Y\X} \Phi_{\Y}^{\dagger} G_{\Y} = \Phi_{\X}^{\dagger} G_{\X}$. This is equivalent to $(\Phi_{\Y}^{\dagger} G_{\Y})^{T} C_{\Y\X}^T = (\Phi_{\X}^{\dagger} G_{\X})^{T}$ that can be solved as a least squares problem:
\begin{equation}
C_{\Y\X}^T = \left((\Phi_{\Y}^{\dagger} G_{\Y})^{T}\right)^{\dagger}(\Phi_{\X}^{\dagger} G_{\X})^{T}.
\end{equation}
From the equation $A_{\X\Y} = C_{\Y\X}^T$ we can conclude that:
\begin{equation}
    A_{\X\Y} = \left((\Phi_{\Y}^{\dagger} G_{\Y})^{T}\right)^{\dagger}(\Phi_{\X}^{\dagger} G_{\X})^{T}.
\end{equation}
This is precisely the equation used in the main manuscript directly below Eq. (4).


This provides an explicit connection between the functional map and the linear transformation that we are optimizing for.

To summarize, one advantage of the adjoint is that it can be used to map \emph{distributions} and not just functions. In particular, unlike a functional map, the functional map adjoint always maps delta functions to delta functions. At the same time, similarly to functional maps, it also allows estimation via probe functions and a solution of a linear system. For this reason, despite the strong relation with functional maps, the adjoint is better suited for estimating the correspondence.

\section{Implementation details}
\label{sec:implementations}

\subsection{Training set details}

\paragraph{Pre-processing}
In our analysis, we consider shapes that are centered in the origin of $\mathbb{R}^{3}$ and scaled with uniform unit area. These requirements are not strong and every input shape can be easily pre-processed to satisfy these properties.

\subsection{The softmax operation}
We compute the \textit{soft} permutation matrix as follows:
\begin{equation}
    \begin{aligned}
        (S_{\Y\X})_{ij} = \frac{e^{- \|\widehat{\Phi}_{\X}^{i} - \Phi_{\Y}^{j}\|_2 }}{\sum_{k=1}^{n_{\Y}} e^{- \|\widehat{\Phi}_{\X}^{i} - \Phi_{\Y}^{k}\|_2 }} 
    \end{aligned}
    \label{eq:softmax}
\end{equation}
where $n_{\Y}$ is the number of points of $\Y$ and $\Phi_{\X}$ and $\Phi_{\Y}$ are learned embeddings.

\subsection{Architecture description}

We describe our complete pipeline in figure \ref{fig:pipeline}. The invariant embedding network and probe function network are built with the semantic segmentation architecture of PointNet \cite{qi2017pointnet}. 

\begin{figure}
  \centering
  \includegraphics[width=1\textwidth]{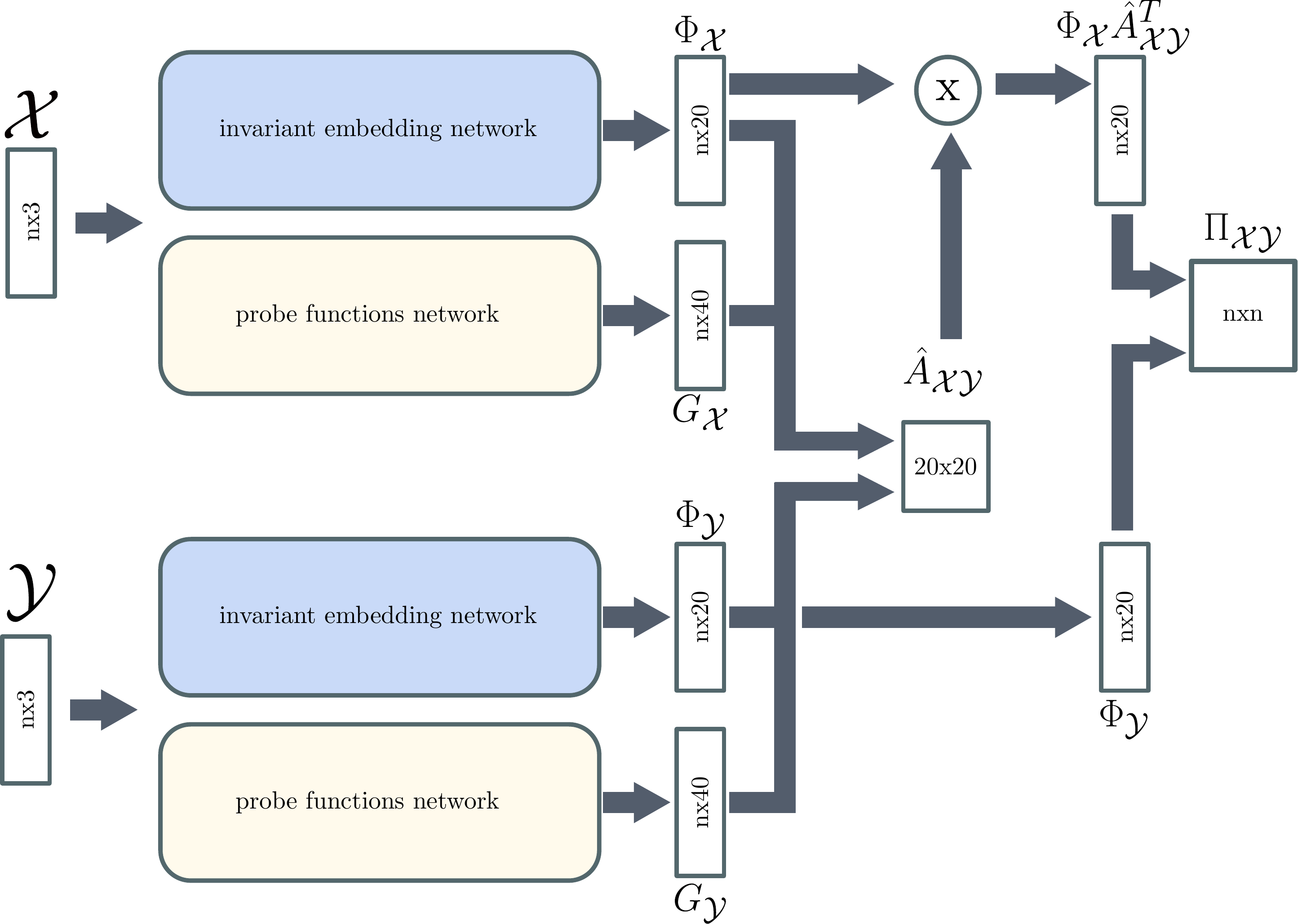}
  \vspace{-1mm}
  \caption{Method pipeline \vspace{-2mm}}
  \label{fig:pipeline}
\end{figure}
\begin{figure}[!t]
  \centering
  \includegraphics[width=0.8\textwidth]{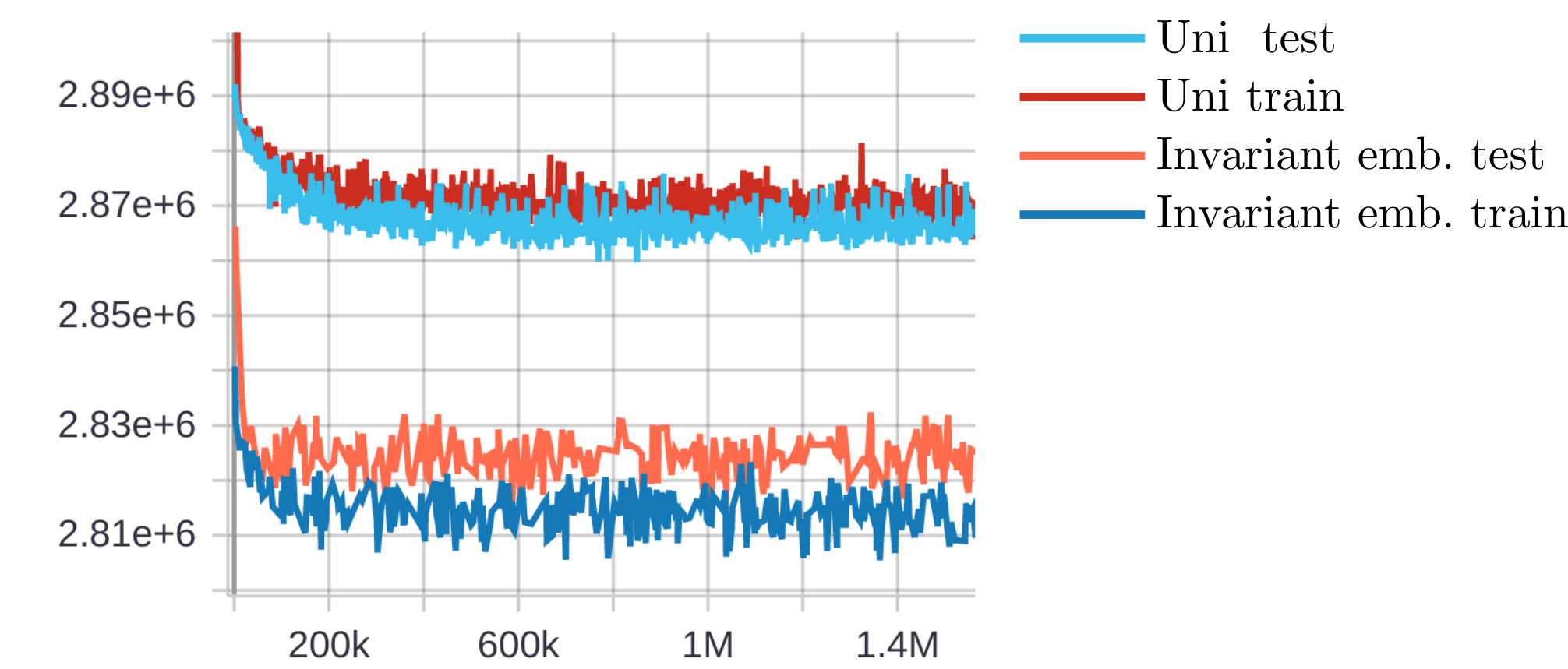}
  \vspace{-1mm}
  \caption{Comparison of the linearly-invariant embedding model and the universal embedding model training curves.  \vspace{-2mm}}
  \label{fig:training}
\end{figure}
\subsection{Relation to the universal embedding network}
In figure \ref{fig:training}, we show the training curves of the universal embedding model and the linearly-invariant embedding model. We observe that learning the linearly-invariant embedding leads to faster learning and a lower loss. It confirms that the linearly-invariant embedding simplifies and is more adapted to the correspondence task.


\section{Additional results and visualizations}
\label{sec:addresults}
We show some other results on noisy point clouds in Figure \ref{fig:results}.
We provide an example of our networks output over a couple of FAUST shapes at high-resolution ( 160K vertices). We show the $20$ basis in Figures \ref{fig:basis1} and \ref{fig:basis2}, and the $40$ descriptors in Figures \ref{fig:desc1}, \ref{fig:desc2} and \ref{fig:desc3}. Finally, we show further statues examples in Figure \ref{fig:smatstatues}. We would remark that our method consider only the points coordinates; we color the surface for better visualization.

\clearpage
\begin{figure}[!t]
\centering

  \setlength{\tabcolsep}{0pt}
  \begin{tabular}{c c c c c}
 
\begin{minipage}{0.15\linewidth}
 \begin{overpic}[trim=0cm 0cm 0cm 0cm,clip,width=0.93\linewidth]{./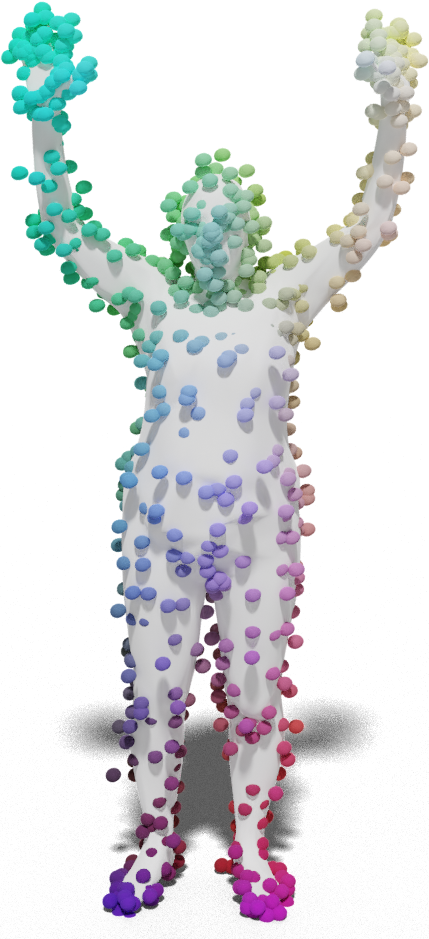}
  \put(20,105){\footnotesize{$X$}}
   \end{overpic}
\end{minipage}
&
\begin{minipage}{0.22\linewidth}
 \begin{overpic}[trim=0cm 0cm 0cm 0cm,clip,width=0.93\linewidth]{./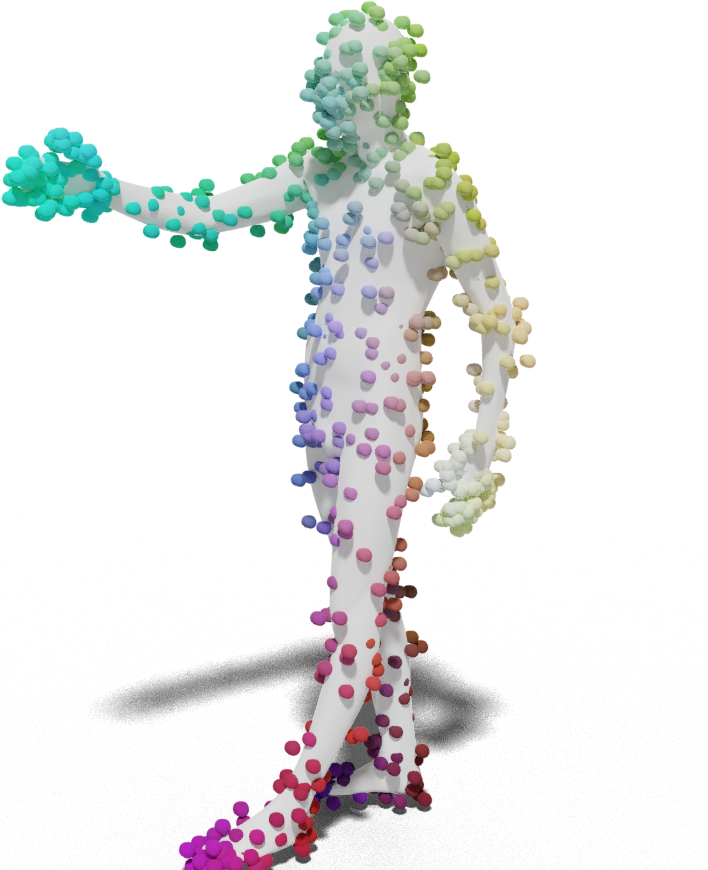}
  \put(30,105){\footnotesize{GT}}
   \end{overpic}
\end{minipage}
&

\begin{minipage}{0.17\linewidth}
 \begin{overpic}[trim=0cm 0cm 0cm 0cm,clip,width=0.93\linewidth]{./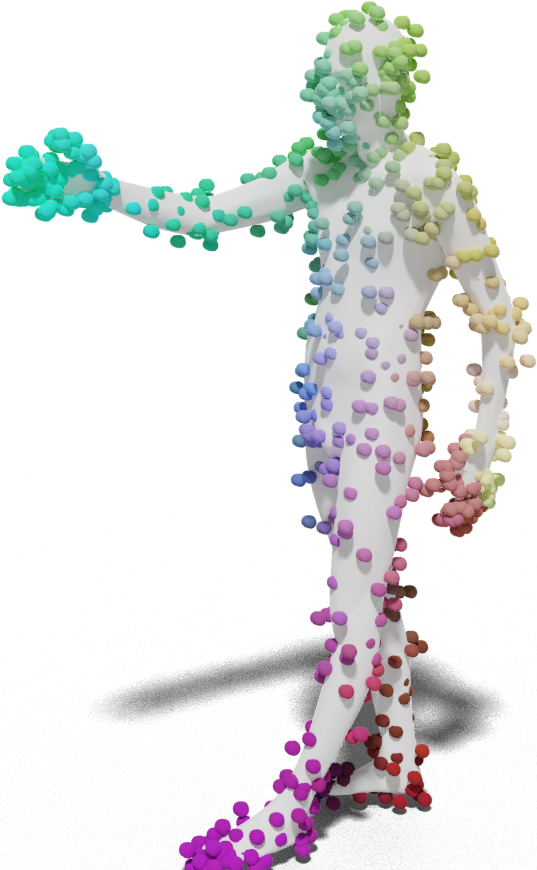}
  \put(25,105){\footnotesize{Our}}
   \end{overpic}
\end{minipage}

&
\begin{minipage}{0.17\linewidth}
 \begin{overpic}[trim=0cm 0cm 0cm 0cm,clip,width=0.93\linewidth]{./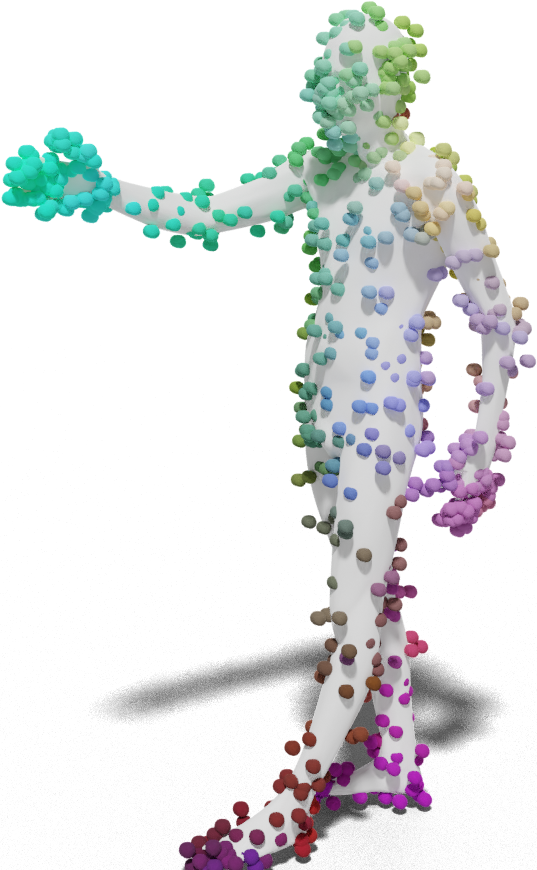}
  \put(25,105){\footnotesize{3DC}}
   \end{overpic}
\end{minipage}

&
\begin{minipage}{0.17\linewidth}
 \begin{overpic}[trim=0cm 0cm 0cm 0cm,clip,width=0.93\linewidth]{./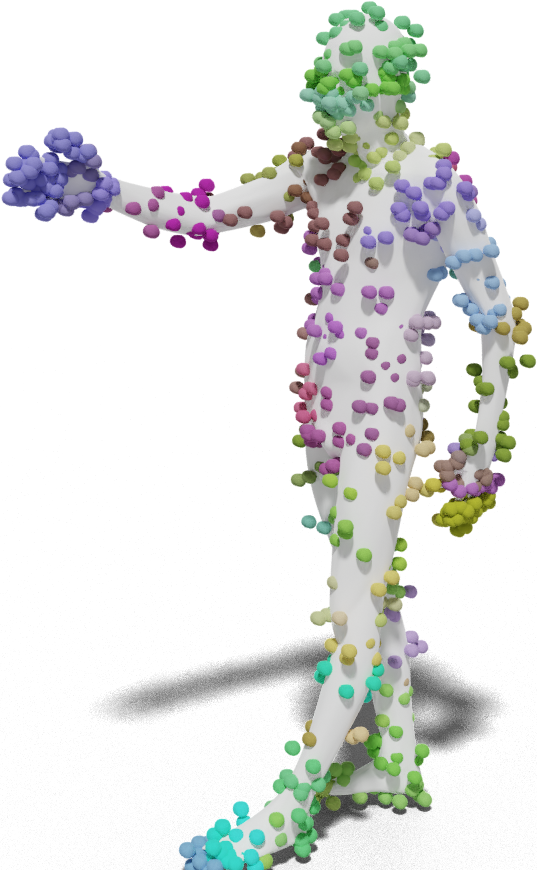}
  \put(25,105){\footnotesize{GFM}}
   \end{overpic}
\end{minipage}
\\

\begin{minipage}{0.28\linewidth}
 \begin{overpic}[trim=0cm 0cm 0cm 0cm,clip,width=0.93\linewidth]{./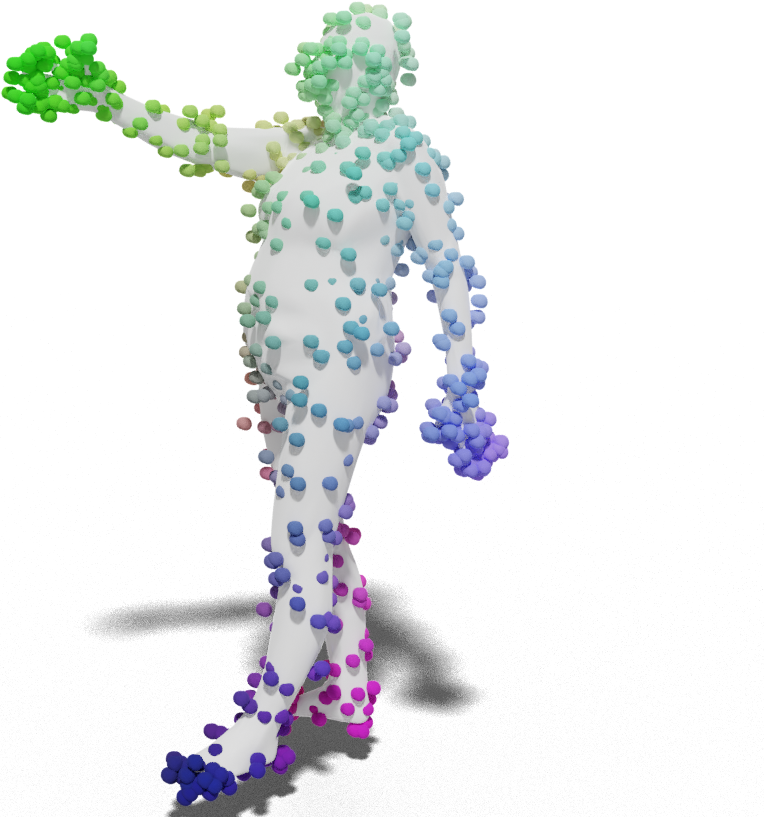}
   \end{overpic}
\end{minipage}
&
\begin{minipage}{0.15\linewidth}
 \begin{overpic}[trim=0cm 0cm 0cm 0cm,clip,width=0.93\linewidth]{./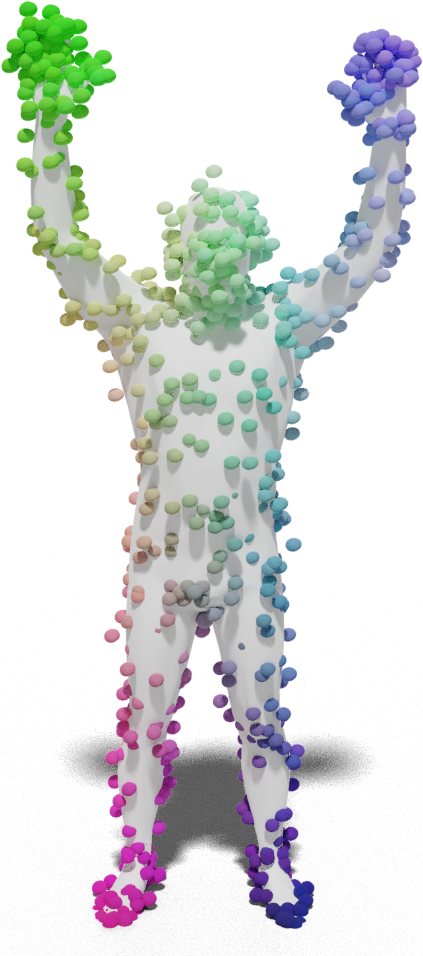}
   \end{overpic}
\end{minipage}
&

\begin{minipage}{0.15\linewidth}
 \begin{overpic}[trim=0cm 0cm 0cm 0cm,clip,width=0.93\linewidth]{./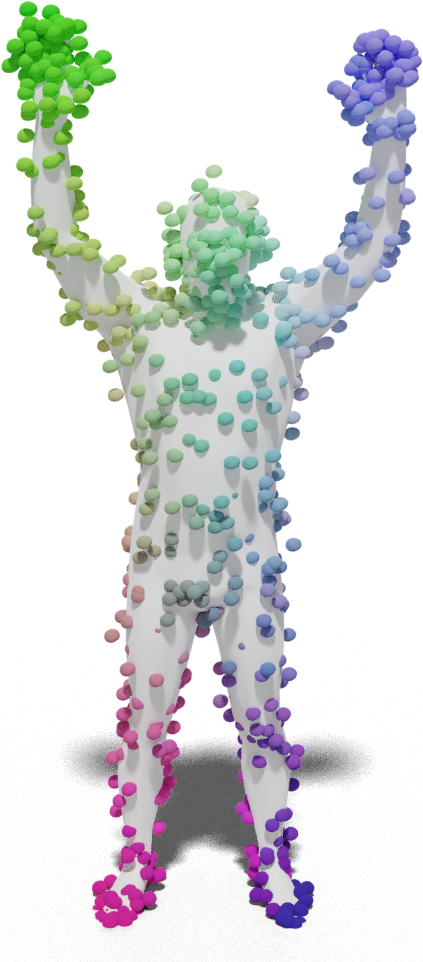}
   \end{overpic}
\end{minipage}

&
\begin{minipage}{0.15\linewidth}
 \begin{overpic}[trim=0cm 0cm 0cm 0cm,clip,width=0.93\linewidth]{./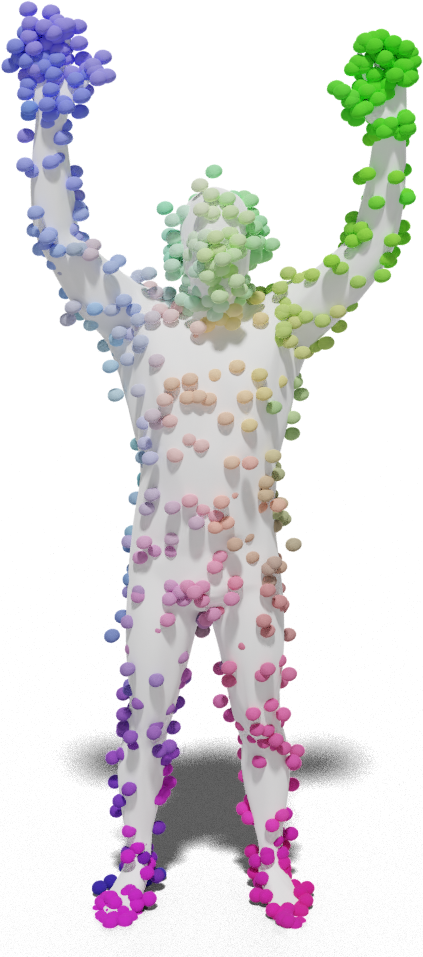}
   \end{overpic}
\end{minipage}

&
\begin{minipage}{0.15\linewidth}
 \begin{overpic}[trim=0cm 0cm 0cm 0cm,clip,width=0.93\linewidth]{./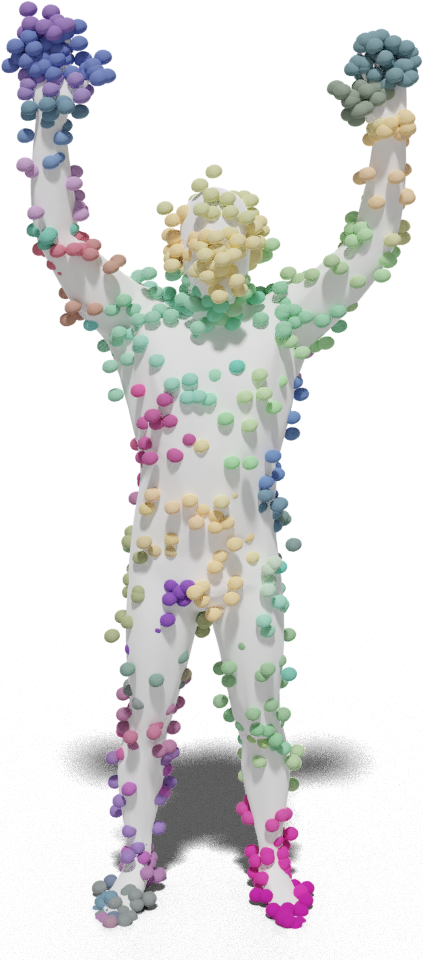}
   \end{overpic}
\end{minipage}

\\

\begin{minipage}{0.15\linewidth}
 \begin{overpic}[trim=0cm 0cm 0cm 0cm,clip,width=0.93\linewidth]{./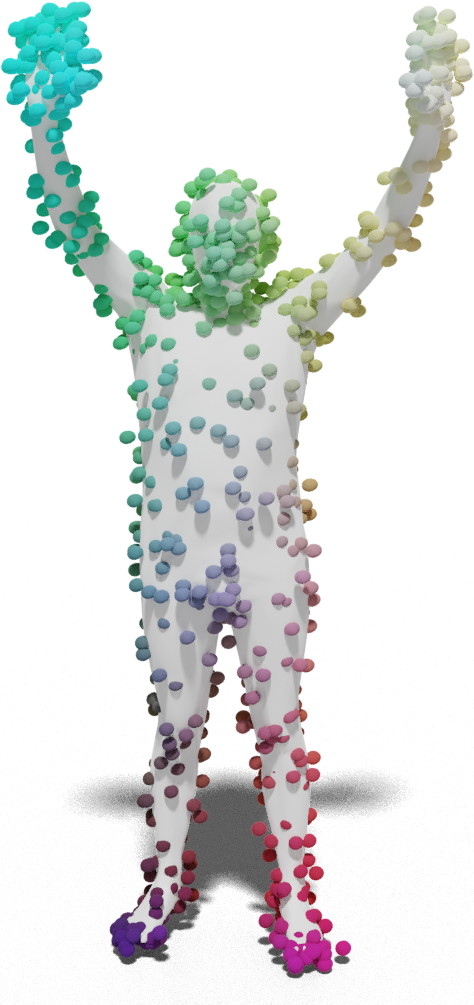}
   \end{overpic}
\end{minipage}
&
\begin{minipage}{0.15\linewidth}
 \begin{overpic}[trim=0cm 0cm 0cm 0cm,clip,width=0.93\linewidth]{./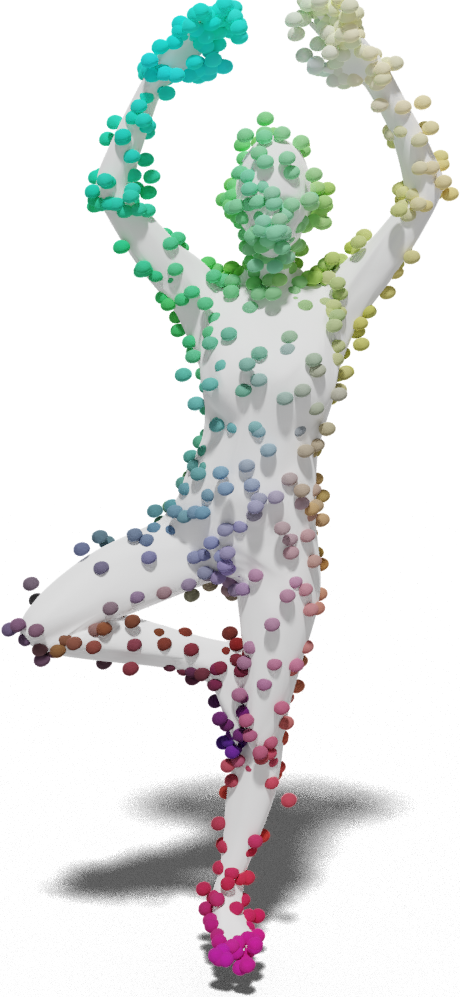}
   \end{overpic}
\end{minipage}
&

\begin{minipage}{0.15\linewidth}
 \begin{overpic}[trim=0cm 0cm 0cm 0cm,clip,width=0.93\linewidth]{./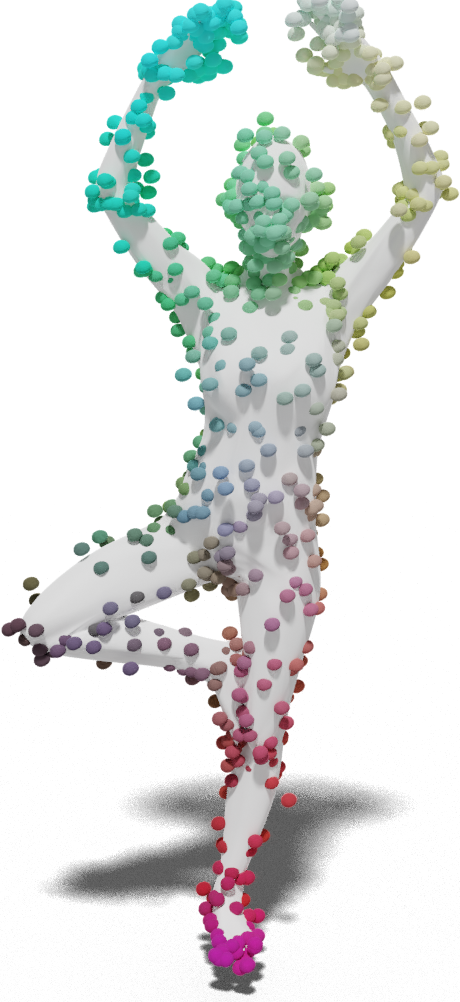}
   \end{overpic}
\end{minipage}

&
\begin{minipage}{0.15\linewidth}
 \begin{overpic}[trim=0cm 0cm 0cm 0cm,clip,width=0.93\linewidth]{./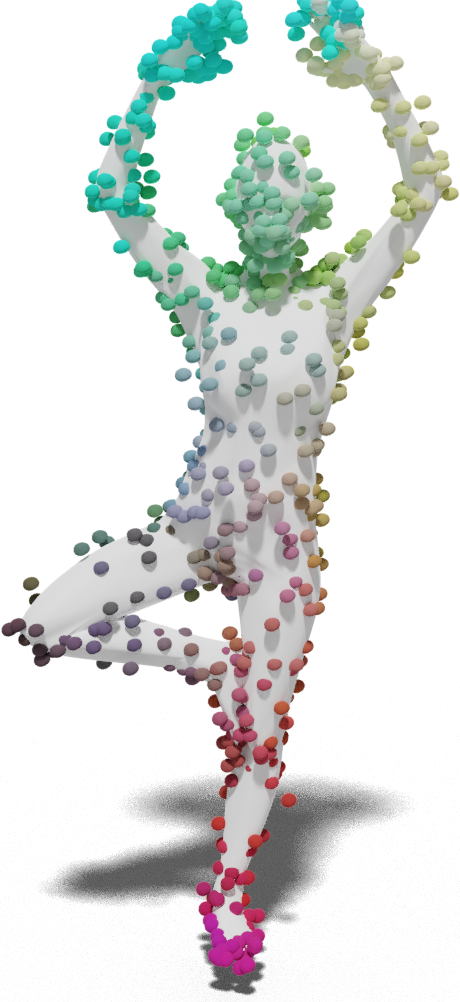}
   \end{overpic}
\end{minipage}

&
\begin{minipage}{0.15\linewidth}
 \begin{overpic}[trim=0cm 0cm 0cm 0cm,clip,width=0.93\linewidth]{./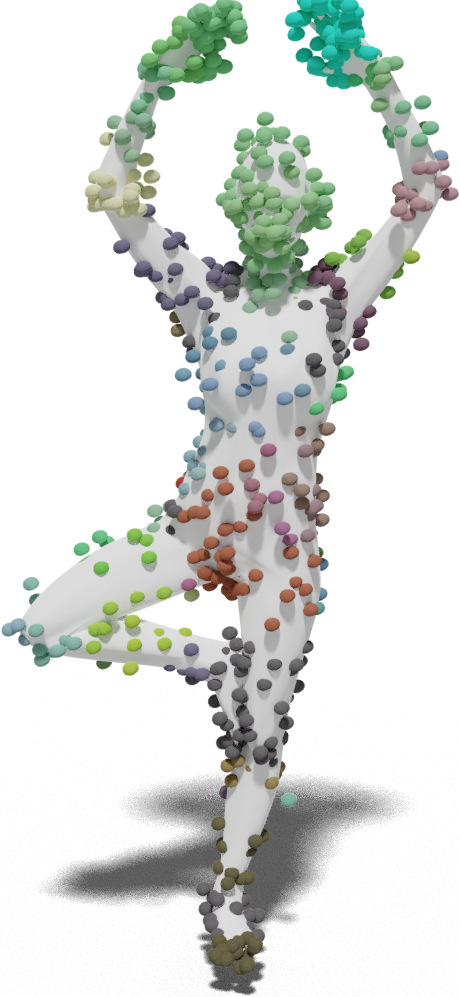}
   \end{overpic}
\end{minipage}

\\

\begin{minipage}{0.15\linewidth}
 \begin{overpic}[trim=0cm 0cm 0cm 0cm,clip,width=0.93\linewidth]{./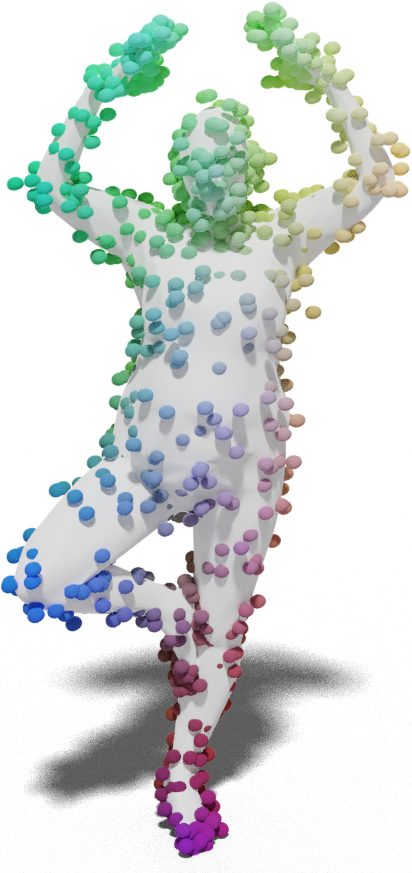}
   \end{overpic}
\end{minipage}
&
\begin{minipage}{0.15\linewidth}
 \begin{overpic}[trim=0cm 0cm 0cm 0cm,clip,width=0.93\linewidth]{./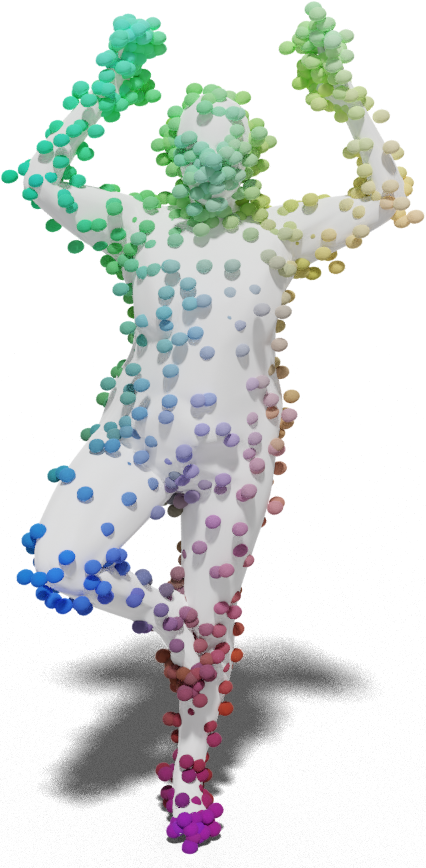}
   \end{overpic}
\end{minipage}
&

\begin{minipage}{0.15\linewidth}
 \begin{overpic}[trim=0cm 0cm 0cm 0cm,clip,width=0.93\linewidth]{./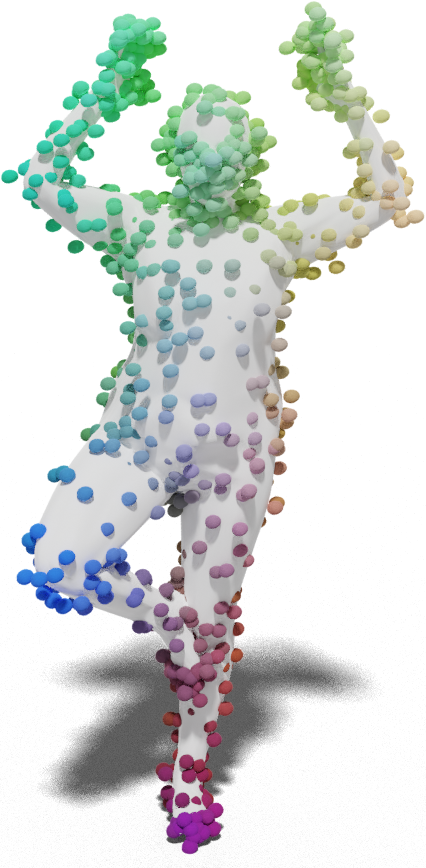}
   \end{overpic}
\end{minipage}

&
\begin{minipage}{0.15\linewidth}
 \begin{overpic}[trim=0cm 0cm 0cm 0cm,clip,width=0.93\linewidth]{./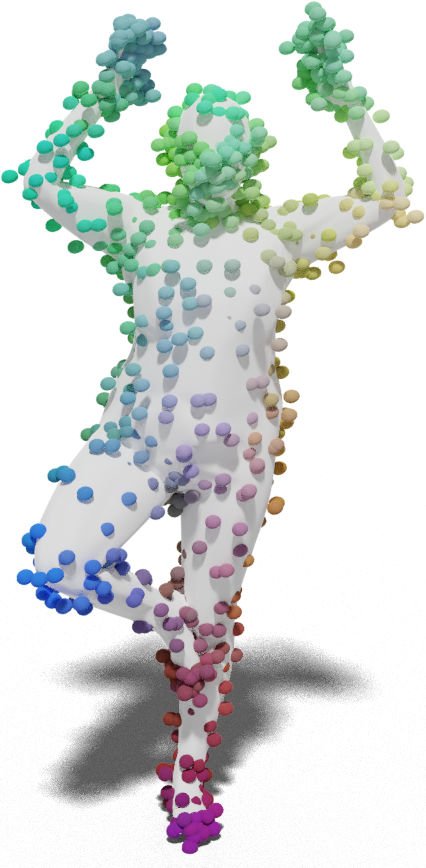}
   \end{overpic}
\end{minipage}

&
\begin{minipage}{0.15\linewidth}
 \begin{overpic}[trim=0cm 0cm 0cm 0cm,clip,width=0.93\linewidth]{./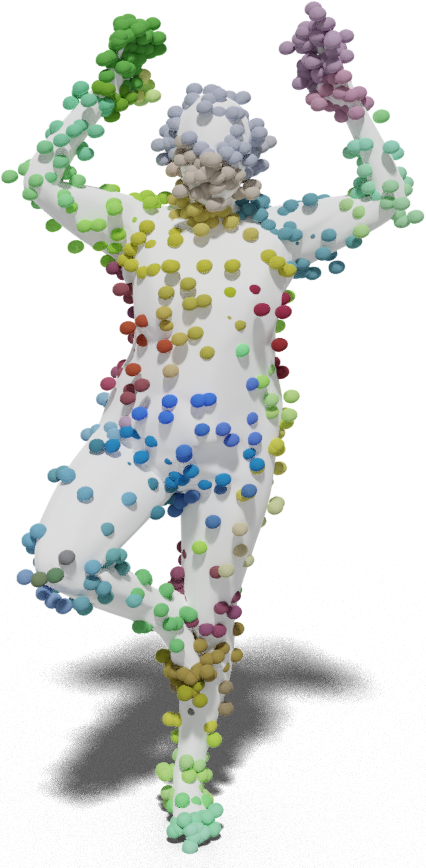}
   \end{overpic}
\end{minipage}

\\

\begin{minipage}{0.15\linewidth}
 \begin{overpic}[trim=0cm 0cm 0cm 0cm,clip,width=0.93\linewidth]{./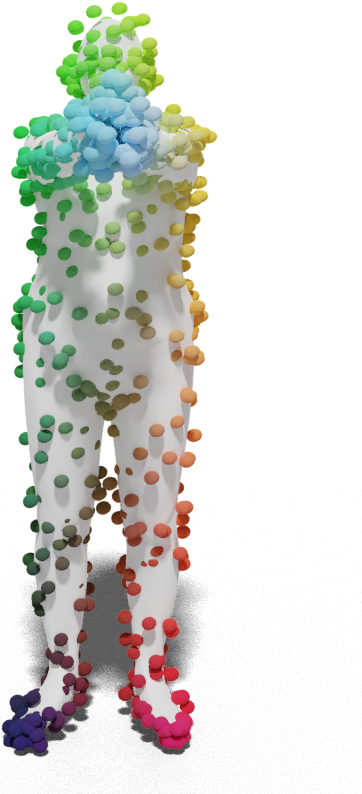}
   \end{overpic}
\end{minipage}
&
\begin{minipage}{0.15\linewidth}
 \begin{overpic}[trim=0cm 0cm 0cm 0cm,clip,width=0.93\linewidth]{./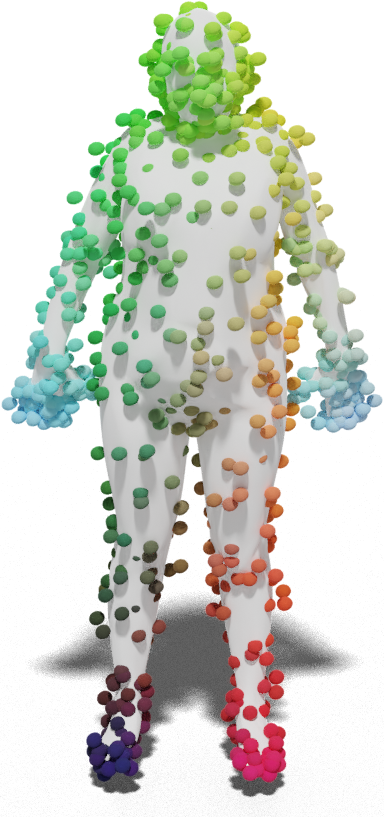}
   \end{overpic}
\end{minipage}
&

\begin{minipage}{0.15\linewidth}
 \begin{overpic}[trim=0cm 0cm 0cm 0cm,clip,width=0.93\linewidth]{./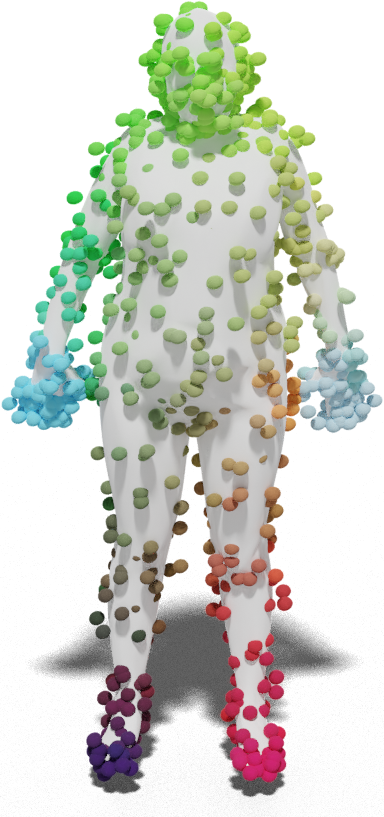}
   \end{overpic}
\end{minipage}

&
\begin{minipage}{0.15\linewidth}
 \begin{overpic}[trim=0cm 0cm 0cm 0cm,clip,width=0.93\linewidth]{./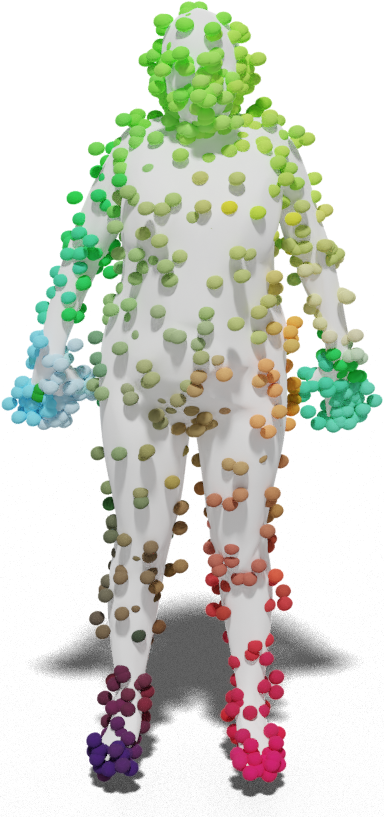}
   \end{overpic}
\end{minipage}

&
\begin{minipage}{0.15\linewidth}
 \begin{overpic}[trim=0cm 0cm 0cm 0cm,clip,width=0.93\linewidth]{./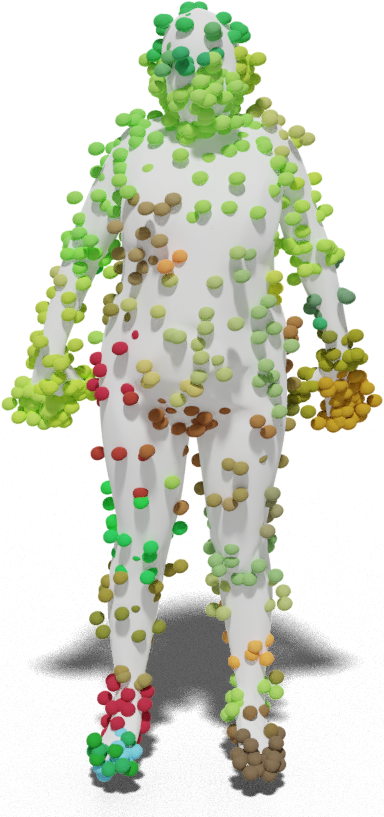}
   \end{overpic}
\end{minipage}

\end{tabular}
\caption{  \label{fig:results} Comparisons on noisy point clouds. 3DC major artifacts are over the hands and in some cases it confuses left and right. GFM suffers from the quality of the point clouds basis estimation.}
\end{figure}

\begin{figure}[!t]
\centering

  \setlength{\tabcolsep}{0pt}
  \begin{tabular}{c c c c c c}
 
\begin{minipage}{0.15\linewidth}
 \begin{overpic}[trim=0cm 0cm 0cm 0cm,clip,width=0.93\linewidth]{./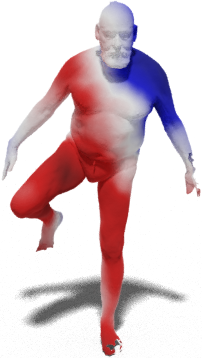}
  \put(20,105){\footnotesize{$\Phi_X$}}
   \end{overpic}
\end{minipage}
&
\begin{minipage}{0.15\linewidth}
 \begin{overpic}[trim=0cm 0cm 0cm 0cm,clip,width=0.93\linewidth]{./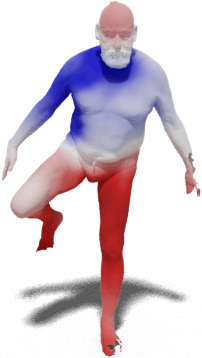}
  \put(20,105){\footnotesize{$\Phi_XA_{XY}$}}
   \end{overpic}
\end{minipage}
&
\begin{minipage}{0.1\linewidth}
 \begin{overpic}[trim=0cm 0cm 0cm 0cm,clip,width=0.93\linewidth]{./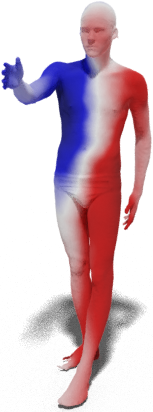}
  \put(18,105){\footnotesize{$\Phi_Y$}}
   \end{overpic}
\end{minipage}

\hspace{1cm}

&
\begin{minipage}{0.15\linewidth}
 \begin{overpic}[trim=0cm 0cm 0cm 0cm,clip,width=0.93\linewidth]{./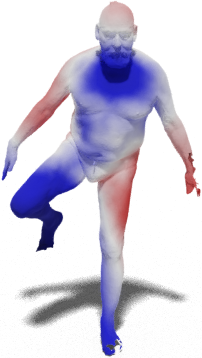}
  \put(20,105){\footnotesize{$\Phi_X$}}
   \end{overpic}
\end{minipage}
&
\begin{minipage}{0.15\linewidth}
 \begin{overpic}[trim=0cm 0cm 0cm 0cm,clip,width=0.93\linewidth]{./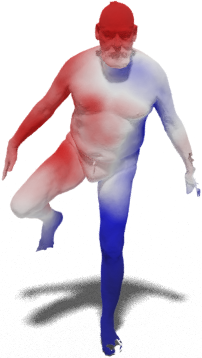}
  \put(20,105){\footnotesize{$\Phi_XA_{XY}$}}
   \end{overpic}
\end{minipage}
&
\begin{minipage}{0.1\linewidth}
 \begin{overpic}[trim=0cm 0cm 0cm 0cm,clip,width=0.93\linewidth]{./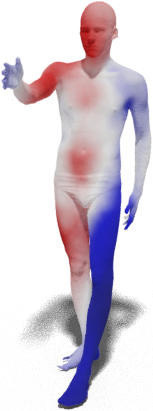}
  \put(18,105){\footnotesize{$\Phi_Y$}}
   \end{overpic}
\end{minipage}

\hspace{0.5cm} 

\\

\begin{minipage}{0.15\linewidth}
 \begin{overpic}[trim=0cm 0cm 0cm 0cm,clip,width=0.93\linewidth]{./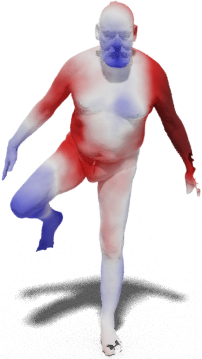}
  
   \end{overpic}
\end{minipage}
&
\begin{minipage}{0.15\linewidth}
 \begin{overpic}[trim=0cm 0cm 0cm 0cm,clip,width=0.93\linewidth]{./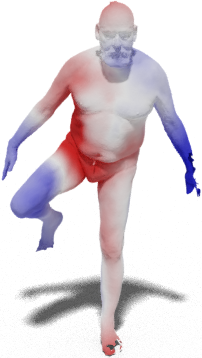}
  
   \end{overpic}
\end{minipage}
&
\begin{minipage}{0.1\linewidth}
 \begin{overpic}[trim=0cm 0cm 0cm 0cm,clip,width=0.93\linewidth]{./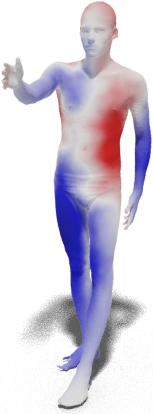}
  
   \end{overpic}
\end{minipage}

\hspace{1cm}

&
\begin{minipage}{0.15\linewidth}
 \begin{overpic}[trim=0cm 0cm 0cm 0cm,clip,width=0.93\linewidth]{./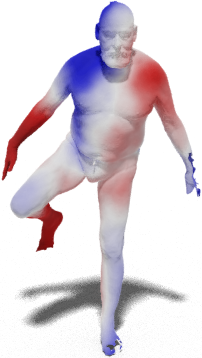}
  
   \end{overpic}
\end{minipage}
&
\begin{minipage}{0.15\linewidth}
 \begin{overpic}[trim=0cm 0cm 0cm 0cm,clip,width=0.93\linewidth]{./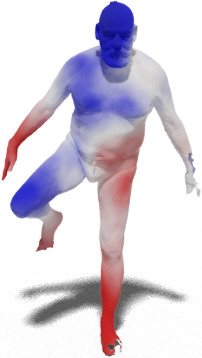}
   \end{overpic}
\end{minipage}
&
\begin{minipage}{0.1\linewidth}
 \begin{overpic}[trim=0cm 0cm 0cm 0cm,clip,width=0.93\linewidth]{./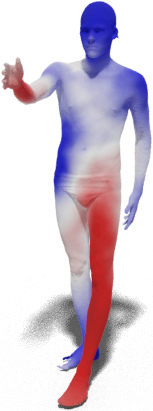}

   \end{overpic}
\end{minipage}

\hspace{0.5cm} 

\\

\begin{minipage}{0.15\linewidth}
 \begin{overpic}[trim=0cm 0cm 0cm 0cm,clip,width=0.93\linewidth]{./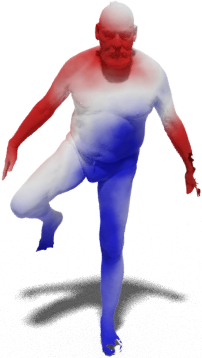}
  
   \end{overpic}
\end{minipage}
&
\begin{minipage}{0.15\linewidth}
 \begin{overpic}[trim=0cm 0cm 0cm 0cm,clip,width=0.93\linewidth]{./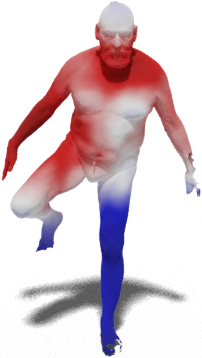}
  
   \end{overpic}
\end{minipage}
&
\begin{minipage}{0.1\linewidth}
 \begin{overpic}[trim=0cm 0cm 0cm 0cm,clip,width=0.93\linewidth]{./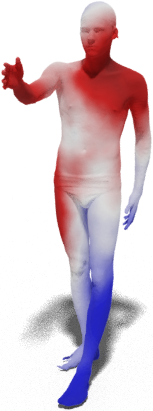}
  
   \end{overpic}
\end{minipage}

\hspace{1cm}

&
\begin{minipage}{0.15\linewidth}
 \begin{overpic}[trim=0cm 0cm 0cm 0cm,clip,width=0.93\linewidth]{./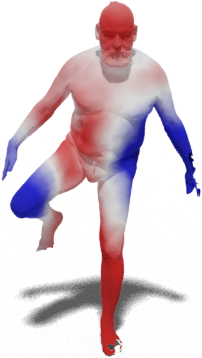}
  
   \end{overpic}
\end{minipage}
&
\begin{minipage}{0.15\linewidth}
 \begin{overpic}[trim=0cm 0cm 0cm 0cm,clip,width=0.93\linewidth]{./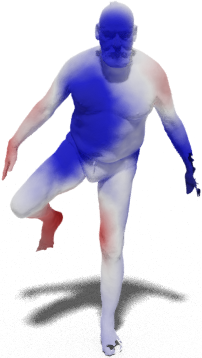}
   \end{overpic}
\end{minipage}
&
\begin{minipage}{0.1\linewidth}
 \begin{overpic}[trim=0cm 0cm 0cm 0cm,clip,width=0.93\linewidth]{./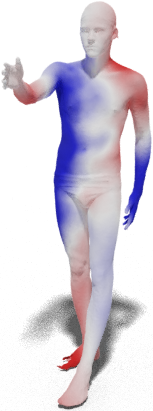}

   \end{overpic}
\end{minipage}
\\

\begin{minipage}{0.15\linewidth}
 \begin{overpic}[trim=0cm 0cm 0cm 0cm,clip,width=0.93\linewidth]{./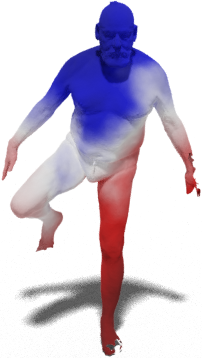}
  
   \end{overpic}
\end{minipage}
&
\begin{minipage}{0.15\linewidth}
 \begin{overpic}[trim=0cm 0cm 0cm 0cm,clip,width=0.93\linewidth]{./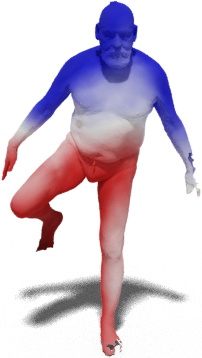}
  
   \end{overpic}
\end{minipage}
&
\begin{minipage}{0.1\linewidth}
 \begin{overpic}[trim=0cm 0cm 0cm 0cm,clip,width=0.93\linewidth]{./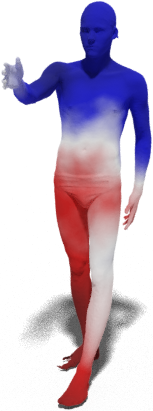}
  
   \end{overpic}
\end{minipage}

\hspace{1cm}

&
\begin{minipage}{0.15\linewidth}
 \begin{overpic}[trim=0cm 0cm 0cm 0cm,clip,width=0.93\linewidth]{./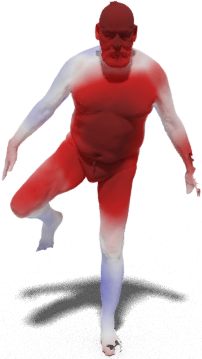}
  
   \end{overpic}
\end{minipage}
&
\begin{minipage}{0.15\linewidth}
 \begin{overpic}[trim=0cm 0cm 0cm 0cm,clip,width=0.93\linewidth]{./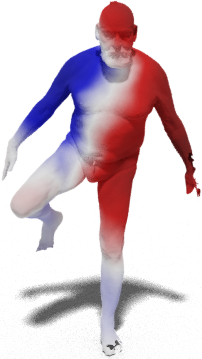}
  
   \end{overpic}
\end{minipage}
&
\begin{minipage}{0.1\linewidth}
 \begin{overpic}[trim=0cm 0cm 0cm 0cm,clip,width=0.93\linewidth]{./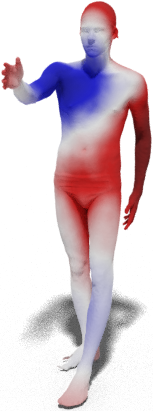}
  
   \end{overpic}
\end{minipage}

\hspace{0.5cm} 

\\

\begin{minipage}{0.15\linewidth}
 \begin{overpic}[trim=0cm 0cm 0cm 0cm,clip,width=0.93\linewidth]{./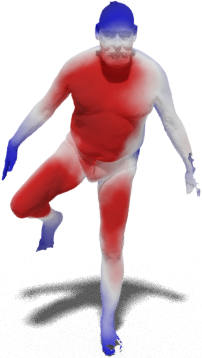}
  
   \end{overpic}
\end{minipage}
&
\begin{minipage}{0.15\linewidth}
 \begin{overpic}[trim=0cm 0cm 0cm 0cm,clip,width=0.93\linewidth]{./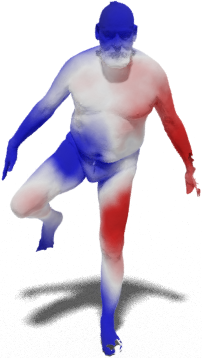}
  
   \end{overpic}
\end{minipage}
&
\begin{minipage}{0.1\linewidth}
 \begin{overpic}[trim=0cm 0cm 0cm 0cm,clip,width=0.93\linewidth]{./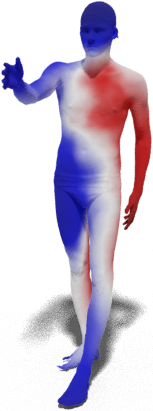}
  
   \end{overpic}
\end{minipage}

\hspace{1cm}

&
\begin{minipage}{0.15\linewidth}
 \begin{overpic}[trim=0cm 0cm 0cm 0cm,clip,width=0.93\linewidth]{./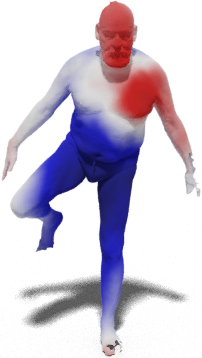}
  
   \end{overpic}
\end{minipage}
&
\begin{minipage}{0.15\linewidth}
 \begin{overpic}[trim=0cm 0cm 0cm 0cm,clip,width=0.93\linewidth]{./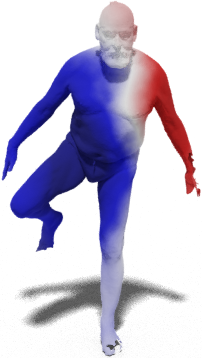}
   \end{overpic}
\end{minipage}
&
\begin{minipage}{0.1\linewidth}
 \begin{overpic}[trim=0cm 0cm 0cm 0cm,clip,width=0.93\linewidth]{./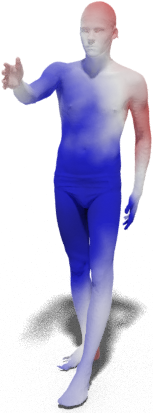}

   \end{overpic}
\end{minipage}

\hspace{0.5cm} 

\end{tabular}
\caption{  \label{fig:basis1} Basis from $1$ to $10$}
\end{figure}

\begin{figure}[!t]
\centering

  \setlength{\tabcolsep}{0pt}
  \begin{tabular}{c c c c c c}

\begin{minipage}{0.15\linewidth}
 \begin{overpic}[trim=0cm 0cm 0cm 0cm,clip,width=0.93\linewidth]{./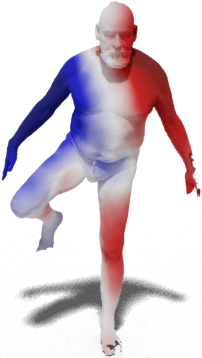}
  \put(20,105){\footnotesize{$\Phi_X$}}
   \end{overpic}
\end{minipage}
&
\begin{minipage}{0.15\linewidth}
 \begin{overpic}[trim=0cm 0cm 0cm 0cm,clip,width=0.93\linewidth]{./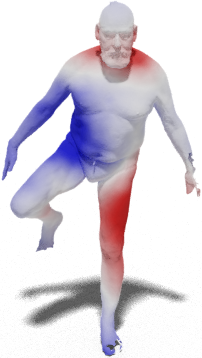}
  \put(20,105){\footnotesize{$\Phi_XA_{XY}$}}
   \end{overpic}
\end{minipage}
&
\begin{minipage}{0.1\linewidth}
 \begin{overpic}[trim=0cm 0cm 0cm 0cm,clip,width=0.93\linewidth]{./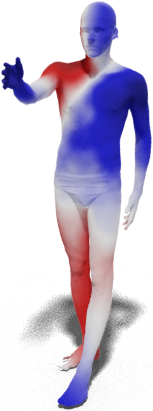}
  \put(18,105){\footnotesize{$\Phi_Y$}}
   \end{overpic}
\end{minipage}

\hspace{1cm}

&
\begin{minipage}{0.15\linewidth}
 \begin{overpic}[trim=0cm 0cm 0cm 0cm,clip,width=0.93\linewidth]{./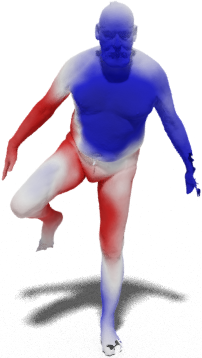}
  \put(20,105){\footnotesize{$\Phi_X$}}
   \end{overpic}
\end{minipage}
&
\begin{minipage}{0.15\linewidth}
 \begin{overpic}[trim=0cm 0cm 0cm 0cm,clip,width=0.93\linewidth]{./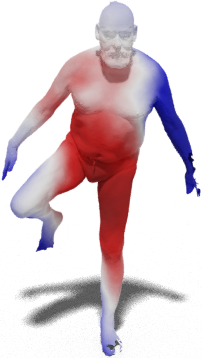}
   \put(20,105){\footnotesize{$\Phi_XA_{XY}$}}
   \end{overpic}
\end{minipage}
&
\begin{minipage}{0.1\linewidth}
 \begin{overpic}[trim=0cm 0cm 0cm 0cm,clip,width=0.93\linewidth]{./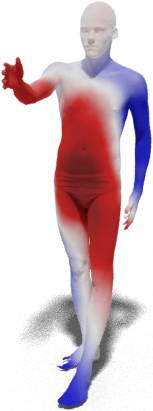}
  \put(18,105){\footnotesize{$\Phi_Y$}}
   \end{overpic}
\end{minipage}
\\

\begin{minipage}{0.15\linewidth}
 \begin{overpic}[trim=0cm 0cm 0cm 0cm,clip,width=0.93\linewidth]{./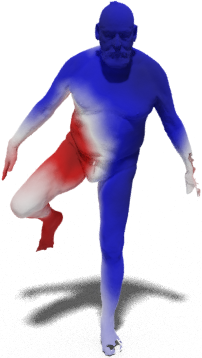}
  
   \end{overpic}
\end{minipage}
&
\begin{minipage}{0.15\linewidth}
 \begin{overpic}[trim=0cm 0cm 0cm 0cm,clip,width=0.93\linewidth]{./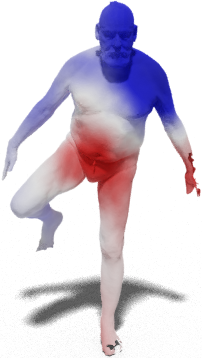}
  
   \end{overpic}
\end{minipage}
&
\begin{minipage}{0.1\linewidth}
 \begin{overpic}[trim=0cm 0cm 0cm 0cm,clip,width=0.93\linewidth]{./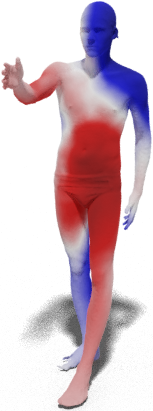}
  
   \end{overpic}
\end{minipage}

\hspace{1cm}

&
\begin{minipage}{0.15\linewidth}
 \begin{overpic}[trim=0cm 0cm 0cm 0cm,clip,width=0.93\linewidth]{./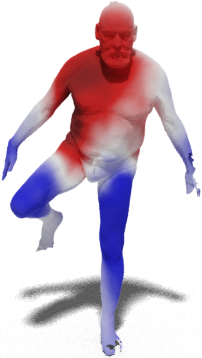}
  
   \end{overpic}
\end{minipage}
&
\begin{minipage}{0.15\linewidth}
 \begin{overpic}[trim=0cm 0cm 0cm 0cm,clip,width=0.93\linewidth]{./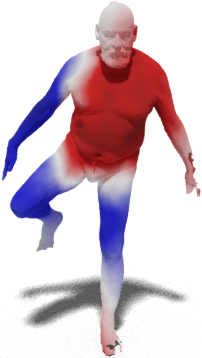}
  
   \end{overpic}
\end{minipage}
&
\begin{minipage}{0.1\linewidth}
 \begin{overpic}[trim=0cm 0cm 0cm 0cm,clip,width=0.93\linewidth]{./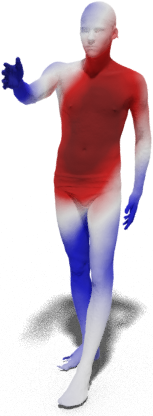}
  
   \end{overpic}
\end{minipage}

\hspace{0.5cm} 

\\

\begin{minipage}{0.15\linewidth}
 \begin{overpic}[trim=0cm 0cm 0cm 0cm,clip,width=0.93\linewidth]{./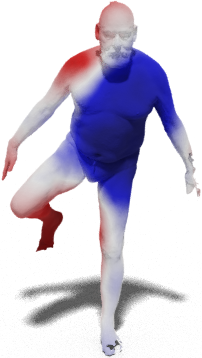}
  
   \end{overpic}
\end{minipage}
&
\begin{minipage}{0.15\linewidth}
 \begin{overpic}[trim=0cm 0cm 0cm 0cm,clip,width=0.93\linewidth]{./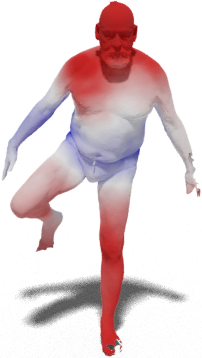}
  
   \end{overpic}
\end{minipage}
&
\begin{minipage}{0.1\linewidth}
 \begin{overpic}[trim=0cm 0cm 0cm 0cm,clip,width=0.93\linewidth]{./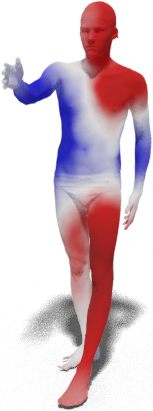}
  
   \end{overpic}
\end{minipage}

\hspace{1cm}

&
\begin{minipage}{0.15\linewidth}
 \begin{overpic}[trim=0cm 0cm 0cm 0cm,clip,width=0.93\linewidth]{./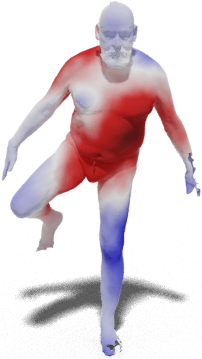}
  
   \end{overpic}
\end{minipage}
&
\begin{minipage}{0.15\linewidth}
 \begin{overpic}[trim=0cm 0cm 0cm 0cm,clip,width=0.93\linewidth]{./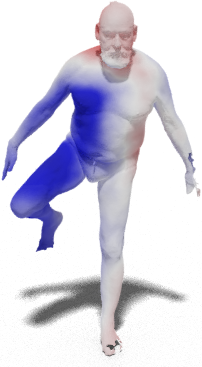}
   \end{overpic}
\end{minipage}
&
\begin{minipage}{0.1\linewidth}
 \begin{overpic}[trim=0cm 0cm 0cm 0cm,clip,width=0.93\linewidth]{./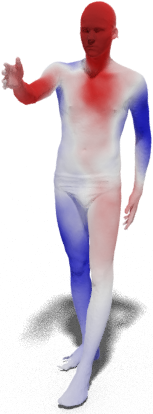}

   \end{overpic}
\end{minipage}

\hspace{0.5cm} 

\\

\begin{minipage}{0.15\linewidth}
 \begin{overpic}[trim=0cm 0cm 0cm 0cm,clip,width=0.93\linewidth]{./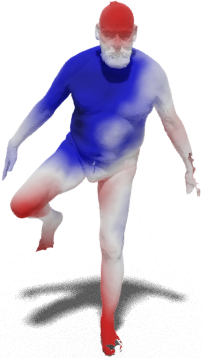}
  
   \end{overpic}
\end{minipage}
&
\begin{minipage}{0.15\linewidth}
 \begin{overpic}[trim=0cm 0cm 0cm 0cm,clip,width=0.93\linewidth]{./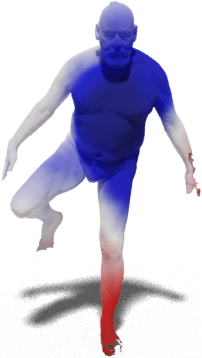}
  
   \end{overpic}
\end{minipage}
&
\begin{minipage}{0.1\linewidth}
 \begin{overpic}[trim=0cm 0cm 0cm 0cm,clip,width=0.93\linewidth]{./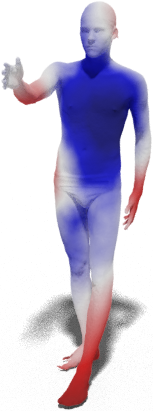}
  
   \end{overpic}
\end{minipage}

\hspace{1cm}

&
\begin{minipage}{0.15\linewidth}
 \begin{overpic}[trim=0cm 0cm 0cm 0cm,clip,width=0.93\linewidth]{./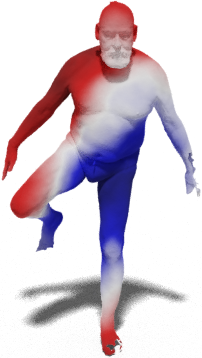}
  
   \end{overpic}
\end{minipage}
&
\begin{minipage}{0.15\linewidth}
 \begin{overpic}[trim=0cm 0cm 0cm 0cm,clip,width=0.93\linewidth]{./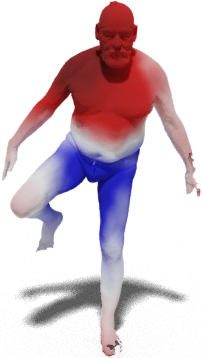}
   \end{overpic}
\end{minipage}
&
\begin{minipage}{0.1\linewidth}
 \begin{overpic}[trim=0cm 0cm 0cm 0cm,clip,width=0.93\linewidth]{./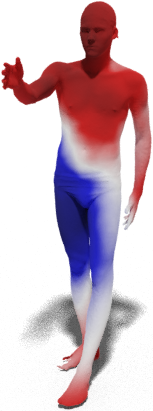}

   \end{overpic}
\end{minipage}

\\

\begin{minipage}{0.15\linewidth}
 \begin{overpic}[trim=0cm 0cm 0cm 0cm,clip,width=0.93\linewidth]{./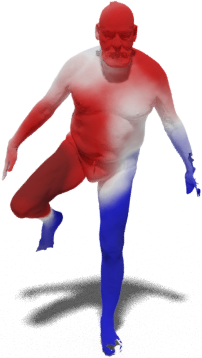}
  
   \end{overpic}
\end{minipage}
&
\begin{minipage}{0.15\linewidth}
 \begin{overpic}[trim=0cm 0cm 0cm 0cm,clip,width=0.93\linewidth]{./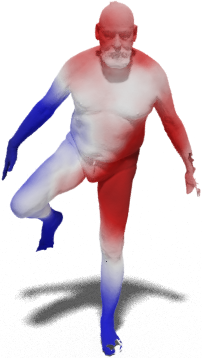}
  
   \end{overpic}
\end{minipage}
&
\begin{minipage}{0.1\linewidth}
 \begin{overpic}[trim=0cm 0cm 0cm 0cm,clip,width=0.93\linewidth]{./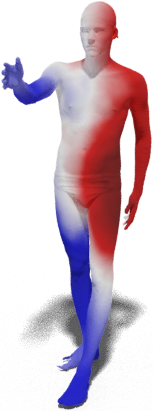}
  
   \end{overpic}
\end{minipage}

\hspace{1cm}

&
\begin{minipage}{0.15\linewidth}
 \begin{overpic}[trim=0cm 0cm 0cm 0cm,clip,width=0.93\linewidth]{./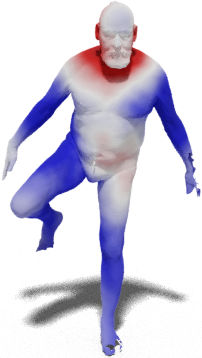}
  
   \end{overpic}
\end{minipage}
&
\begin{minipage}{0.15\linewidth}
 \begin{overpic}[trim=0cm 0cm 0cm 0cm,clip,width=0.93\linewidth]{./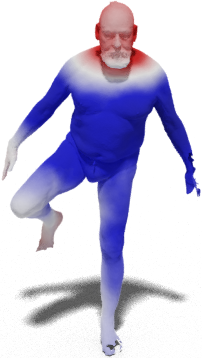}
  
   \end{overpic}
\end{minipage}
&
\begin{minipage}{0.1\linewidth}
 \begin{overpic}[trim=0cm 0cm 0cm 0cm,clip,width=0.93\linewidth]{./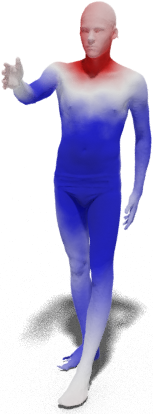}
  
   \end{overpic}
\end{minipage}

\hspace{0.5cm} 

\end{tabular}
\caption{  \label{fig:basis2} Basis from $11$ to $20$}
\end{figure}

\begin{figure}[!t]
\centering

  \setlength{\tabcolsep}{0pt}
  \begin{tabular}{c c c c c c}
 
\begin{minipage}{0.15\linewidth}
 \begin{overpic}[trim=0cm 0cm 0cm 0cm,clip,width=0.93\linewidth]{./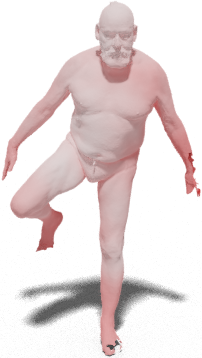}
  \put(20,105){\footnotesize{$G_X$}}
  \put(55,20){\footnotesize{1}}
   \end{overpic}
\end{minipage}
&
\begin{minipage}{0.1\linewidth}
 \begin{overpic}[trim=0cm 0cm 0cm 0cm,clip,width=0.93\linewidth]{./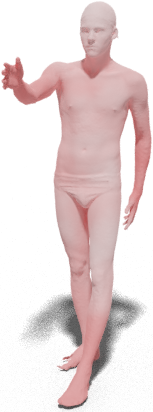}
  \put(20,105){\footnotesize{$G_Y$}}
   \end{overpic}
\end{minipage}
&
\hspace{1cm}

\begin{minipage}{0.15\linewidth}
 \begin{overpic}[trim=0cm 0cm 0cm 0cm,clip,width=0.93\linewidth]{./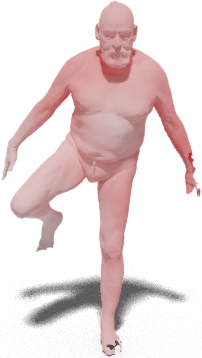}
  \put(20,105){\footnotesize{$G_X$}}
    \put(55,20){\footnotesize{2}}
   \end{overpic}
\end{minipage}
&
\begin{minipage}{0.1\linewidth}
 \begin{overpic}[trim=0cm 0cm 0cm 0cm,clip,width=0.93\linewidth]{./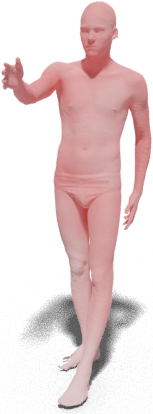}
  \put(20,105){\footnotesize{$G_Y$}}
   \end{overpic}
\end{minipage}
&

\hspace{1cm}

\begin{minipage}{0.15\linewidth}
 \begin{overpic}[trim=0cm 0cm 0cm 0cm,clip,width=0.93\linewidth]{./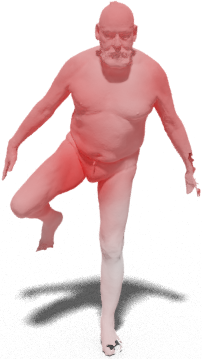}
  \put(20,105){\footnotesize{$G_X$}}
    \put(55,20){\footnotesize{3}}
   \end{overpic}
\end{minipage}
&
\begin{minipage}{0.1\linewidth}
 \begin{overpic}[trim=0cm 0cm 0cm 0cm,clip,width=0.93\linewidth]{./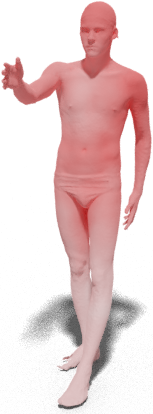}
  \put(20,105){\footnotesize{$G_Y$}}
   \end{overpic}
\end{minipage}
\\

\begin{minipage}{0.15\linewidth}
 \begin{overpic}[trim=0cm 0cm 0cm 0cm,clip,width=0.93\linewidth]{./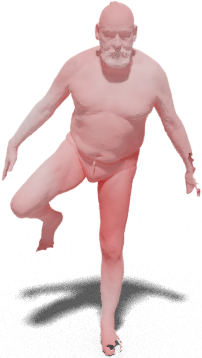}
  \put(55,20){\footnotesize{4}}
   \end{overpic}
\end{minipage}
&
\begin{minipage}{0.1\linewidth}
 \begin{overpic}[trim=0cm 0cm 0cm 0cm,clip,width=0.93\linewidth]{./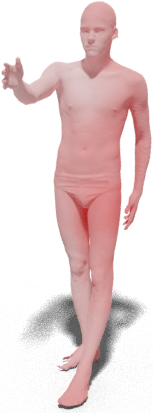}
   \end{overpic}
\end{minipage}
&
\hspace{1cm}

\begin{minipage}{0.15\linewidth}
 \begin{overpic}[trim=0cm 0cm 0cm 0cm,clip,width=0.93\linewidth]{./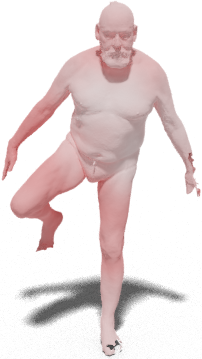}

    \put(55,20){\footnotesize{5}}
   \end{overpic}
\end{minipage}
&
\begin{minipage}{0.1\linewidth}
 \begin{overpic}[trim=0cm 0cm 0cm 0cm,clip,width=0.93\linewidth]{./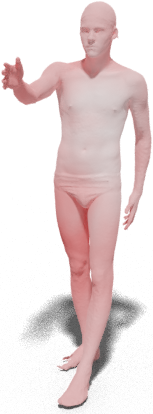}

   \end{overpic}
\end{minipage}
&

\hspace{1cm}

\begin{minipage}{0.15\linewidth}
 \begin{overpic}[trim=0cm 0cm 0cm 0cm,clip,width=0.93\linewidth]{./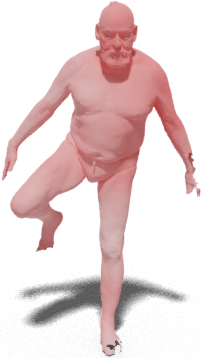}

    \put(55,20){\footnotesize{6}}
   \end{overpic}
\end{minipage}
&
\begin{minipage}{0.1\linewidth}
 \begin{overpic}[trim=0cm 0cm 0cm 0cm,clip,width=0.93\linewidth]{./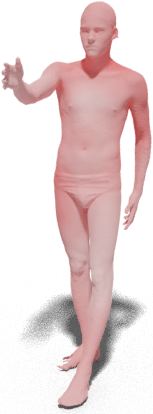}

   \end{overpic}
\end{minipage}
\\

\begin{minipage}{0.15\linewidth}
 \begin{overpic}[trim=0cm 0cm 0cm 0cm,clip,width=0.93\linewidth]{./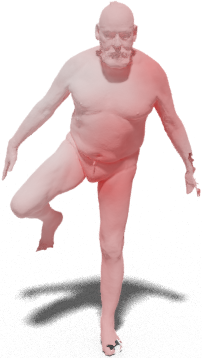}

  \put(55,20){\footnotesize{7}}
   \end{overpic}
\end{minipage}
&
\begin{minipage}{0.1\linewidth}
 \begin{overpic}[trim=0cm 0cm 0cm 0cm,clip,width=0.93\linewidth]{./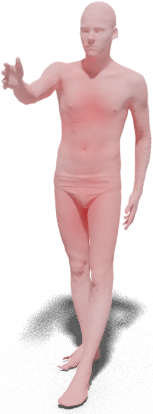}

   \end{overpic}
\end{minipage}
&
\hspace{1cm}

\begin{minipage}{0.15\linewidth}
 \begin{overpic}[trim=0cm 0cm 0cm 0cm,clip,width=0.93\linewidth]{./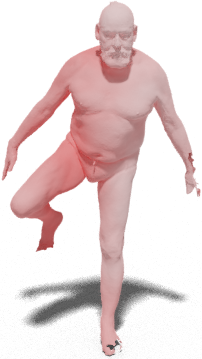}

    \put(55,20){\footnotesize{8}}
   \end{overpic}
\end{minipage}
&
\begin{minipage}{0.1\linewidth}
 \begin{overpic}[trim=0cm 0cm 0cm 0cm,clip,width=0.93\linewidth]{./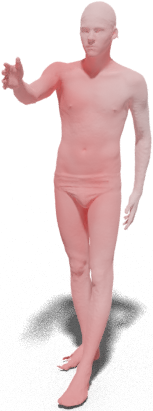}

   \end{overpic}
\end{minipage}
&

\hspace{1cm}

\begin{minipage}{0.15\linewidth}
 \begin{overpic}[trim=0cm 0cm 0cm 0cm,clip,width=0.93\linewidth]{./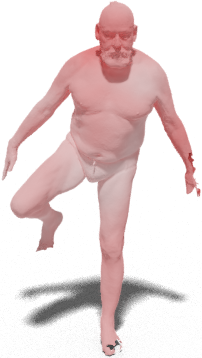}

    \put(55,20){\footnotesize{9}}
   \end{overpic}
\end{minipage}
&
\begin{minipage}{0.1\linewidth}
 \begin{overpic}[trim=0cm 0cm 0cm 0cm,clip,width=0.93\linewidth]{./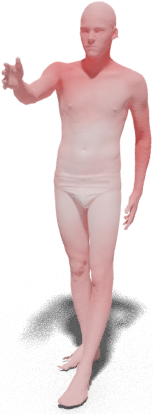}

   \end{overpic}
\end{minipage}
\\

\begin{minipage}{0.15\linewidth}
 \begin{overpic}[trim=0cm 0cm 0cm 0cm,clip,width=0.93\linewidth]{./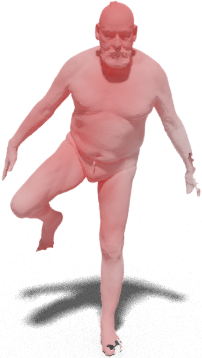}

  \put(55,20){\footnotesize{10}}
   \end{overpic}
\end{minipage}
&
\begin{minipage}{0.1\linewidth}
 \begin{overpic}[trim=0cm 0cm 0cm 0cm,clip,width=0.93\linewidth]{./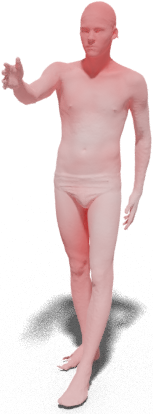}

   \end{overpic}
\end{minipage}
&
\hspace{1cm}

\begin{minipage}{0.15\linewidth}
 \begin{overpic}[trim=0cm 0cm 0cm 0cm,clip,width=0.93\linewidth]{./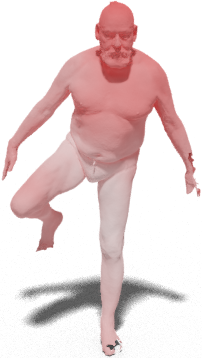}

    \put(55,20){\footnotesize{11}}
   \end{overpic}
\end{minipage}
&
\begin{minipage}{0.1\linewidth}
 \begin{overpic}[trim=0cm 0cm 0cm 0cm,clip,width=0.93\linewidth]{./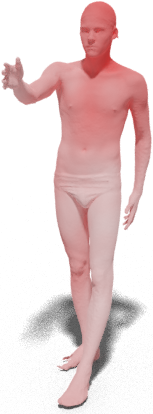}

   \end{overpic}
\end{minipage}
&

\hspace{1cm}

\begin{minipage}{0.15\linewidth}
 \begin{overpic}[trim=0cm 0cm 0cm 0cm,clip,width=0.93\linewidth]{./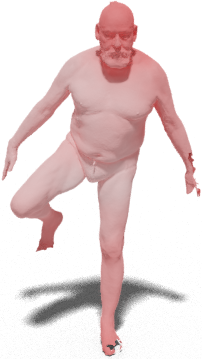}

    \put(55,20){\footnotesize{12}}
   \end{overpic}
\end{minipage}
&
\begin{minipage}{0.1\linewidth}
 \begin{overpic}[trim=0cm 0cm 0cm 0cm,clip,width=0.93\linewidth]{./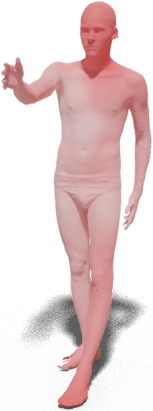}

   \end{overpic}
\end{minipage}
\\

\begin{minipage}{0.15\linewidth}
 \begin{overpic}[trim=0cm 0cm 0cm 0cm,clip,width=0.93\linewidth]{./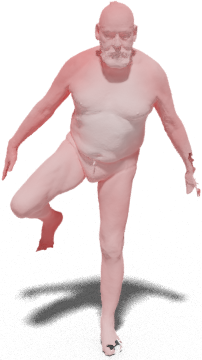}

  \put(55,20){\footnotesize{13}}
   \end{overpic}
\end{minipage}
&
\begin{minipage}{0.1\linewidth}
 \begin{overpic}[trim=0cm 0cm 0cm 0cm,clip,width=0.93\linewidth]{./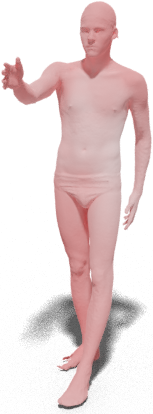}

   \end{overpic}
\end{minipage}
&
\hspace{1cm}

\begin{minipage}{0.15\linewidth}
 \begin{overpic}[trim=0cm 0cm 0cm 0cm,clip,width=0.93\linewidth]{./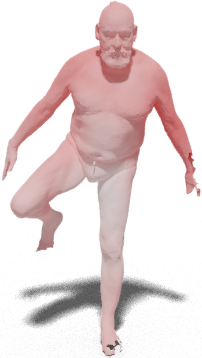}

    \put(55,20){\footnotesize{14}}
   \end{overpic}
\end{minipage}
&
\begin{minipage}{0.1\linewidth}
 \begin{overpic}[trim=0cm 0cm 0cm 0cm,clip,width=0.93\linewidth]{./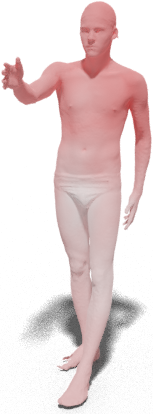}

   \end{overpic}
\end{minipage}
&

\hspace{1cm}

\begin{minipage}{0.15\linewidth}
 \begin{overpic}[trim=0cm 0cm 0cm 0cm,clip,width=0.93\linewidth]{./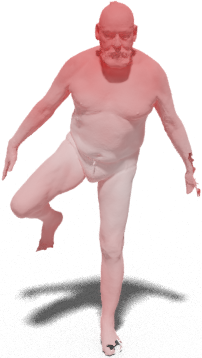}

    \put(55,20){\footnotesize{15}}
   \end{overpic}
\end{minipage}
&
\begin{minipage}{0.1\linewidth}
 \begin{overpic}[trim=0cm 0cm 0cm 0cm,clip,width=0.93\linewidth]{./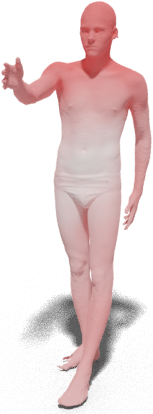}

   \end{overpic}
\end{minipage}

\\

\begin{minipage}{0.15\linewidth}
 \begin{overpic}[trim=0cm 0cm 0cm 0cm,clip,width=0.93\linewidth]{./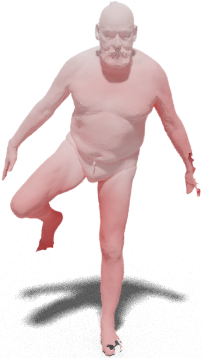}

  \put(55,20){\footnotesize{16}}
   \end{overpic}
\end{minipage}
&
\begin{minipage}{0.1\linewidth}
 \begin{overpic}[trim=0cm 0cm 0cm 0cm,clip,width=0.93\linewidth]{./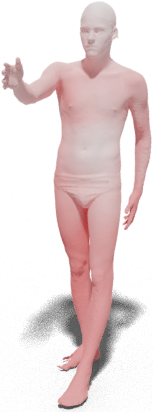}

   \end{overpic}
\end{minipage}
&
\hspace{1cm}

\begin{minipage}{0.15\linewidth}
 \begin{overpic}[trim=0cm 0cm 0cm 0cm,clip,width=0.93\linewidth]{./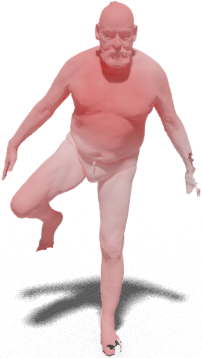}

    \put(55,20){\footnotesize{17}}
   \end{overpic}
\end{minipage}
&
\begin{minipage}{0.1\linewidth}
 \begin{overpic}[trim=0cm 0cm 0cm 0cm,clip,width=0.93\linewidth]{./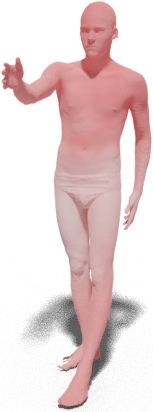}

   \end{overpic}
\end{minipage}
&

\hspace{1cm}

\begin{minipage}{0.15\linewidth}
 \begin{overpic}[trim=0cm 0cm 0cm 0cm,clip,width=0.93\linewidth]{./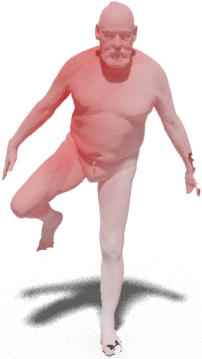}

    \put(55,20){\footnotesize{18}}
   \end{overpic}
\end{minipage}
&
\begin{minipage}{0.1\linewidth}
 \begin{overpic}[trim=0cm 0cm 0cm 0cm,clip,width=0.93\linewidth]{./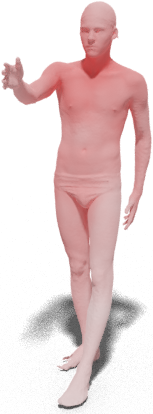}

   \end{overpic}
\end{minipage}

\hspace{0.5cm} 
\end{tabular}
\caption{  \label{fig:desc1} Descriptors from $1$ to $18$}
\end{figure}

\begin{figure}[!t]
\centering

  \setlength{\tabcolsep}{0pt}
  \begin{tabular}{c c c c c c}
 
\begin{minipage}{0.15\linewidth}
 \begin{overpic}[trim=0cm 0cm 0cm 0cm,clip,width=0.93\linewidth]{./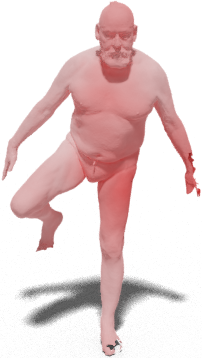}
  \put(20,105){\footnotesize{$G_X$}}
  \put(55,20){\footnotesize{19}}
   \end{overpic}
\end{minipage}
&
\begin{minipage}{0.1\linewidth}
 \begin{overpic}[trim=0cm 0cm 0cm 0cm,clip,width=0.93\linewidth]{./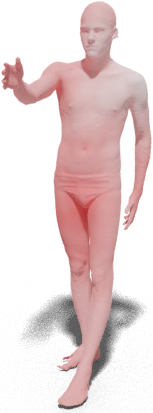}
  \put(20,105){\footnotesize{$G_Y$}}
   \end{overpic}
\end{minipage}
&
\hspace{1cm}

\begin{minipage}{0.15\linewidth}
 \begin{overpic}[trim=0cm 0cm 0cm 0cm,clip,width=0.93\linewidth]{./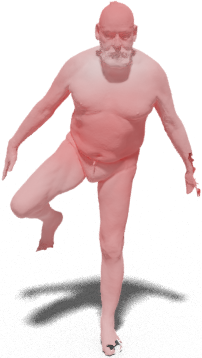}
  \put(20,105){\footnotesize{$G_X$}}
    \put(55,20){\footnotesize{20}}
   \end{overpic}
\end{minipage}
&
\begin{minipage}{0.1\linewidth}
 \begin{overpic}[trim=0cm 0cm 0cm 0cm,clip,width=0.93\linewidth]{./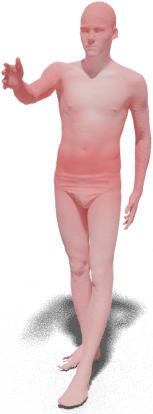}
  \put(20,105){\footnotesize{$G_Y$}}
   \end{overpic}
\end{minipage}
&

\hspace{1cm}

\begin{minipage}{0.15\linewidth}
 \begin{overpic}[trim=0cm 0cm 0cm 0cm,clip,width=0.93\linewidth]{./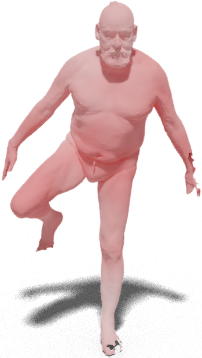}
  \put(20,105){\footnotesize{$G_X$}}
    \put(55,20){\footnotesize{21}}
   \end{overpic}
\end{minipage}
&
\begin{minipage}{0.1\linewidth}
 \begin{overpic}[trim=0cm 0cm 0cm 0cm,clip,width=0.93\linewidth]{./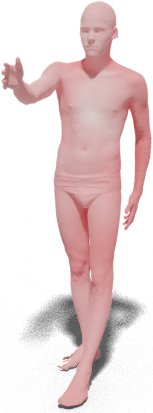}
  \put(20,105){\footnotesize{$G_Y$}}
   \end{overpic}
\end{minipage}
\\

\begin{minipage}{0.15\linewidth}
 \begin{overpic}[trim=0cm 0cm 0cm 0cm,clip,width=0.93\linewidth]{./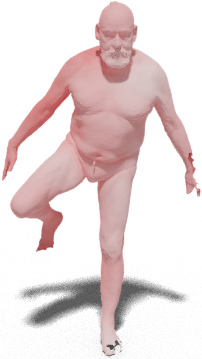}
  \put(55,20){\footnotesize{22}}
   \end{overpic}
\end{minipage}
&
\begin{minipage}{0.1\linewidth}
 \begin{overpic}[trim=0cm 0cm 0cm 0cm,clip,width=0.93\linewidth]{./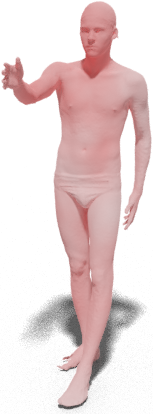}
   \end{overpic}
\end{minipage}
&
\hspace{1cm}

\begin{minipage}{0.15\linewidth}
 \begin{overpic}[trim=0cm 0cm 0cm 0cm,clip,width=0.93\linewidth]{./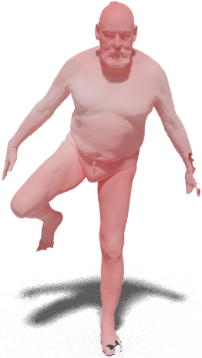}

    \put(55,20){\footnotesize{23}}
   \end{overpic}
\end{minipage}
&
\begin{minipage}{0.1\linewidth}
 \begin{overpic}[trim=0cm 0cm 0cm 0cm,clip,width=0.93\linewidth]{./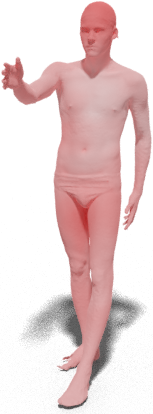}

   \end{overpic}
\end{minipage}
&

\hspace{1cm}

\begin{minipage}{0.15\linewidth}
 \begin{overpic}[trim=0cm 0cm 0cm 0cm,clip,width=0.93\linewidth]{./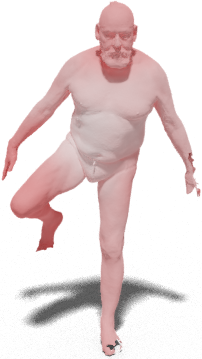}

    \put(55,20){\footnotesize{24}}
   \end{overpic}
\end{minipage}
&
\begin{minipage}{0.1\linewidth}
 \begin{overpic}[trim=0cm 0cm 0cm 0cm,clip,width=0.93\linewidth]{./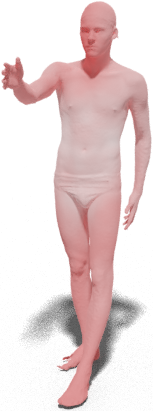}

   \end{overpic}
\end{minipage}
\\

\begin{minipage}{0.15\linewidth}
 \begin{overpic}[trim=0cm 0cm 0cm 0cm,clip,width=0.93\linewidth]{./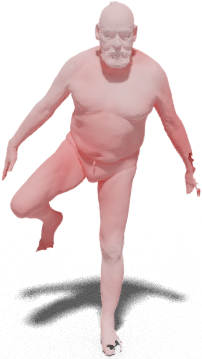}

  \put(55,20){\footnotesize{25}}
   \end{overpic}
\end{minipage}
&
\begin{minipage}{0.1\linewidth}
 \begin{overpic}[trim=0cm 0cm 0cm 0cm,clip,width=0.93\linewidth]{./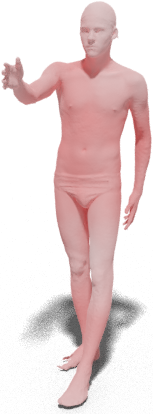}

   \end{overpic}
\end{minipage}
&
\hspace{1cm}

\begin{minipage}{0.15\linewidth}
 \begin{overpic}[trim=0cm 0cm 0cm 0cm,clip,width=0.93\linewidth]{./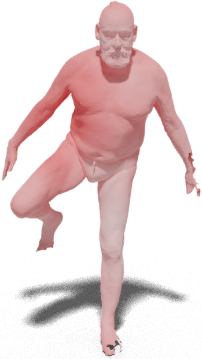}

    \put(55,20){\footnotesize{26}}
   \end{overpic}
\end{minipage}
&
\begin{minipage}{0.1\linewidth}
 \begin{overpic}[trim=0cm 0cm 0cm 0cm,clip,width=0.93\linewidth]{./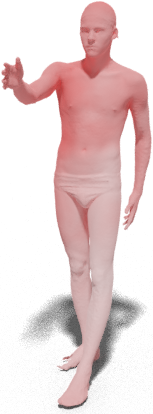}

   \end{overpic}
\end{minipage}
&

\hspace{1cm}

\begin{minipage}{0.15\linewidth}
 \begin{overpic}[trim=0cm 0cm 0cm 0cm,clip,width=0.93\linewidth]{./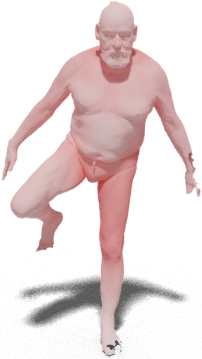}

    \put(55,20){\footnotesize{27}}
   \end{overpic}
\end{minipage}
&
\begin{minipage}{0.1\linewidth}
 \begin{overpic}[trim=0cm 0cm 0cm 0cm,clip,width=0.93\linewidth]{./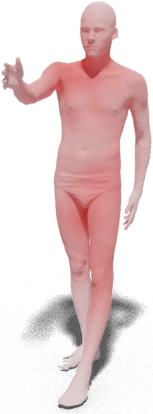}

   \end{overpic}
\end{minipage}
\\

\begin{minipage}{0.15\linewidth}
 \begin{overpic}[trim=0cm 0cm 0cm 0cm,clip,width=0.93\linewidth]{./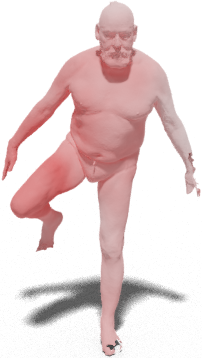}

  \put(55,20){\footnotesize{28}}
   \end{overpic}
\end{minipage}
&
\begin{minipage}{0.1\linewidth}
 \begin{overpic}[trim=0cm 0cm 0cm 0cm,clip,width=0.93\linewidth]{./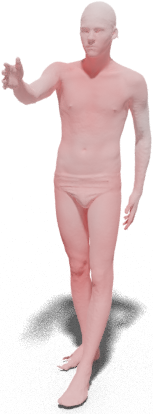}

   \end{overpic}
\end{minipage}
&
\hspace{1cm}

\begin{minipage}{0.15\linewidth}
 \begin{overpic}[trim=0cm 0cm 0cm 0cm,clip,width=0.93\linewidth]{./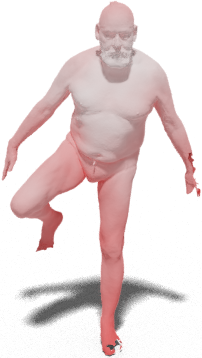}

    \put(55,20){\footnotesize{29}}
   \end{overpic}
\end{minipage}
&
\begin{minipage}{0.1\linewidth}
 \begin{overpic}[trim=0cm 0cm 0cm 0cm,clip,width=0.93\linewidth]{./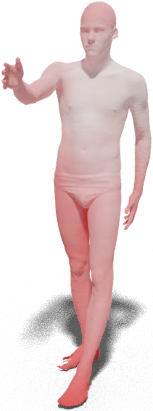}

   \end{overpic}
\end{minipage}
&

\hspace{1cm}

\begin{minipage}{0.15\linewidth}
 \begin{overpic}[trim=0cm 0cm 0cm 0cm,clip,width=0.93\linewidth]{./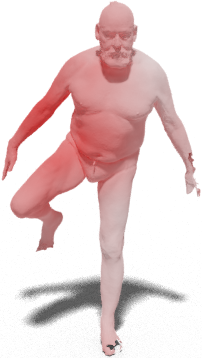}

    \put(55,20){\footnotesize{30}}
   \end{overpic}
\end{minipage}
&
\begin{minipage}{0.1\linewidth}
 \begin{overpic}[trim=0cm 0cm 0cm 0cm,clip,width=0.93\linewidth]{./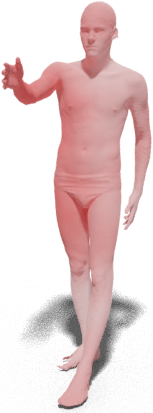}

   \end{overpic}
\end{minipage}
\\

\begin{minipage}{0.15\linewidth}
 \begin{overpic}[trim=0cm 0cm 0cm 0cm,clip,width=0.93\linewidth]{./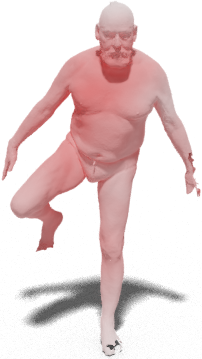}

  \put(55,20){\footnotesize{31}}
   \end{overpic}
\end{minipage}
&
\begin{minipage}{0.1\linewidth}
 \begin{overpic}[trim=0cm 0cm 0cm 0cm,clip,width=0.93\linewidth]{./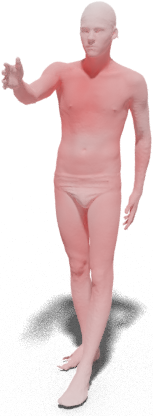}

   \end{overpic}
\end{minipage}
&
\hspace{1cm}

\begin{minipage}{0.15\linewidth}
 \begin{overpic}[trim=0cm 0cm 0cm 0cm,clip,width=0.93\linewidth]{./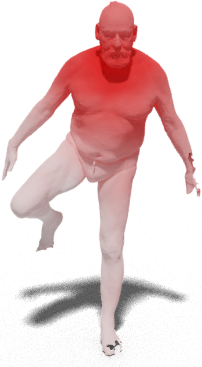}

    \put(55,20){\footnotesize{32}}
   \end{overpic}
\end{minipage}
&
\begin{minipage}{0.1\linewidth}
 \begin{overpic}[trim=0cm 0cm 0cm 0cm,clip,width=0.93\linewidth]{./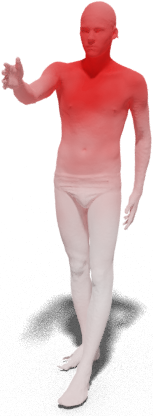}

   \end{overpic}
\end{minipage}
&

\hspace{1cm}

\begin{minipage}{0.15\linewidth}
 \begin{overpic}[trim=0cm 0cm 0cm 0cm,clip,width=0.93\linewidth]{./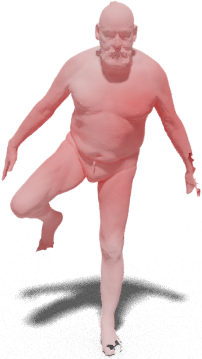}

    \put(55,20){\footnotesize{33}}
   \end{overpic}
\end{minipage}
&
\begin{minipage}{0.1\linewidth}
 \begin{overpic}[trim=0cm 0cm 0cm 0cm,clip,width=0.93\linewidth]{./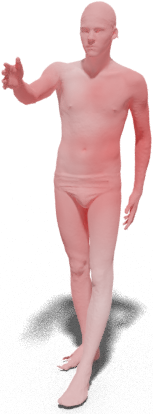}

   \end{overpic}
\end{minipage}

\\

\begin{minipage}{0.15\linewidth}
 \begin{overpic}[trim=0cm 0cm 0cm 0cm,clip,width=0.93\linewidth]{./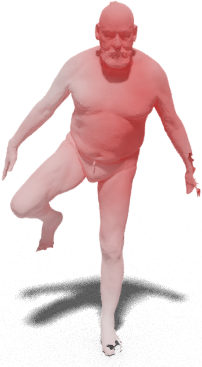}

  \put(55,20){\footnotesize{34}}
   \end{overpic}
\end{minipage}
&
\begin{minipage}{0.1\linewidth}
 \begin{overpic}[trim=0cm 0cm 0cm 0cm,clip,width=0.93\linewidth]{./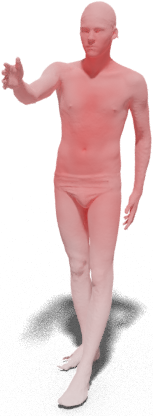}

   \end{overpic}
\end{minipage}
&
\hspace{1cm}

\begin{minipage}{0.15\linewidth}
 \begin{overpic}[trim=0cm 0cm 0cm 0cm,clip,width=0.93\linewidth]{./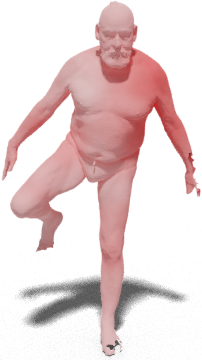}

    \put(55,20){\footnotesize{35}}
   \end{overpic}
\end{minipage}
&
\begin{minipage}{0.1\linewidth}
 \begin{overpic}[trim=0cm 0cm 0cm 0cm,clip,width=0.93\linewidth]{./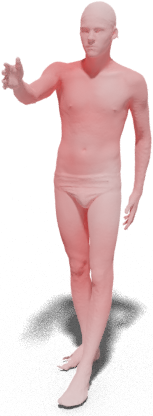}

   \end{overpic}
\end{minipage}
&

\hspace{1cm}

\begin{minipage}{0.15\linewidth}
 \begin{overpic}[trim=0cm 0cm 0cm 0cm,clip,width=0.93\linewidth]{./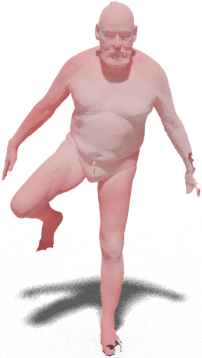}

    \put(55,20){\footnotesize{36}}
   \end{overpic}
\end{minipage}
&
\begin{minipage}{0.1\linewidth}
 \begin{overpic}[trim=0cm 0cm 0cm 0cm,clip,width=0.93\linewidth]{./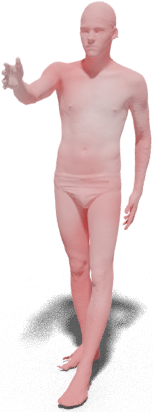}

   \end{overpic}
\end{minipage}

\hspace{0.5cm} 
\end{tabular}
\caption{  \label{fig:desc2} Descriptors from $19$ to $36$.}
\end{figure}

\begin{figure}[!t]
\centering

  \setlength{\tabcolsep}{0pt}
  \begin{tabular}{c c c c c c}
 
\begin{minipage}{0.15\linewidth}
 \begin{overpic}[trim=0cm 0cm 0cm 0cm,clip,width=0.93\linewidth]{./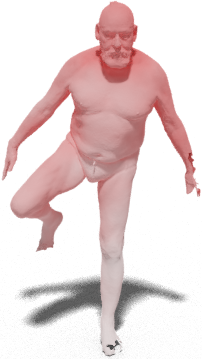}
  \put(20,105){\footnotesize{$G_X$}}
  \put(55,20){\footnotesize{37}}
   \end{overpic}
\end{minipage}
&
\begin{minipage}{0.1\linewidth}
 \begin{overpic}[trim=0cm 0cm 0cm 0cm,clip,width=0.93\linewidth]{./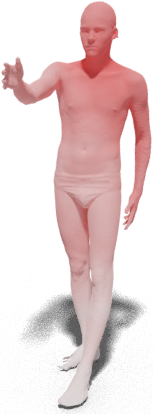}
  \put(20,105){\footnotesize{$G_Y$}}
   \end{overpic}
\end{minipage}
&
\hspace{1cm}

\begin{minipage}{0.15\linewidth}
 \begin{overpic}[trim=0cm 0cm 0cm 0cm,clip,width=0.93\linewidth]{./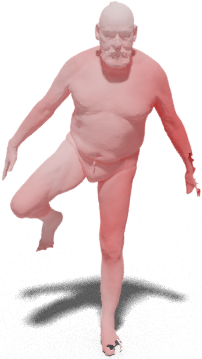}
  \put(20,105){\footnotesize{$G_X$}}
    \put(55,20){\footnotesize{38}}
   \end{overpic}
\end{minipage}
&
\begin{minipage}{0.1\linewidth}
 \begin{overpic}[trim=0cm 0cm 0cm 0cm,clip,width=0.93\linewidth]{./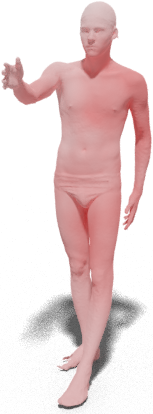}
  \put(20,105){\footnotesize{$G_Y$}}
   \end{overpic}
\end{minipage}
&

\hspace{1cm}

\begin{minipage}{0.15\linewidth}
 \begin{overpic}[trim=0cm 0cm 0cm 0cm,clip,width=0.93\linewidth]{./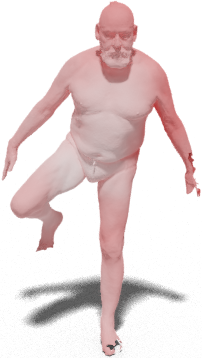}
  \put(20,105){\footnotesize{$G_X$}}
    \put(55,20){\footnotesize{39}}
   \end{overpic}
\end{minipage}
&
\begin{minipage}{0.1\linewidth}
 \begin{overpic}[trim=0cm 0cm 0cm 0cm,clip,width=0.93\linewidth]{./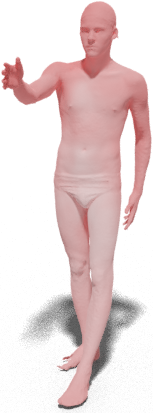}
  \put(20,105){\footnotesize{$G_Y$}}
   \end{overpic}
\end{minipage}
\\

\begin{minipage}{0.15\linewidth}
 \begin{overpic}[trim=0cm 0cm 0cm 0cm,clip,width=0.93\linewidth]{./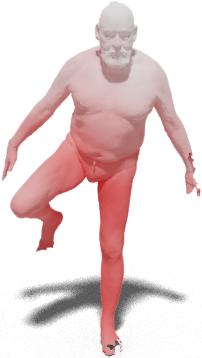}
  \put(55,20){\footnotesize{40}}
   \end{overpic}
\end{minipage}
&
\begin{minipage}{0.1\linewidth}
 \begin{overpic}[trim=0cm 0cm 0cm 0cm,clip,width=0.93\linewidth]{./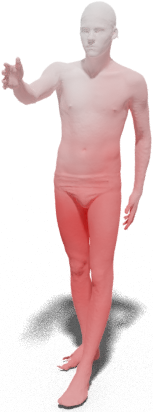}
   \end{overpic}
\end{minipage}
&
\hspace{1cm}

\end{tabular}
\caption{  \label{fig:desc3} Descriptors from $37$ to $40$.}
\end{figure}

\begin{figure}[!t]
\begin{center}
\begin{tabular}{cc|cc|cc}
\centering
 \begin{overpic}[trim=0cm 0cm 0cm 0cm,clip,width=0.11\linewidth]{./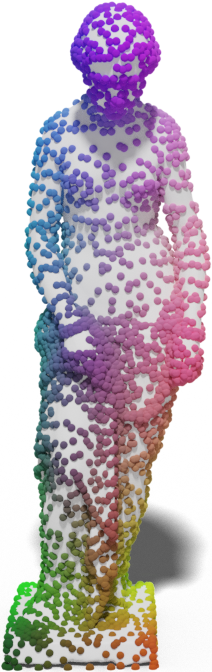}
   \end{overpic}
   &
    \begin{overpic}[trim=0cm 0cm 0cm 0cm,clip,width=0.11\linewidth]{./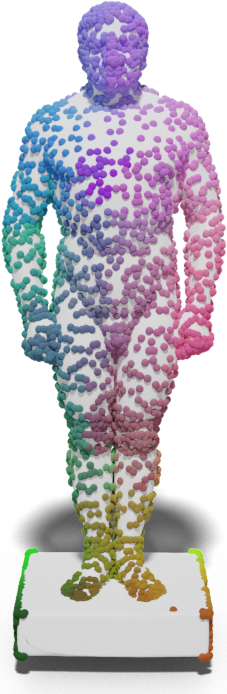}
   \end{overpic}
   &
  \begin{overpic}[trim=0cm 0cm 0cm 0cm,clip,width=0.08\linewidth]{./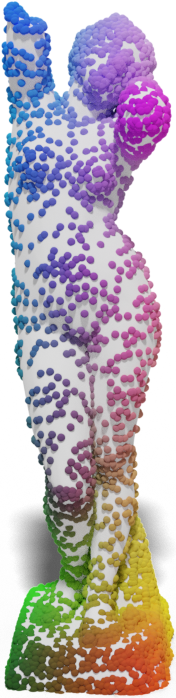}
   \end{overpic}
   &
    \begin{overpic}[trim=0cm 0cm 0cm 0cm,clip,width=0.145\linewidth]{./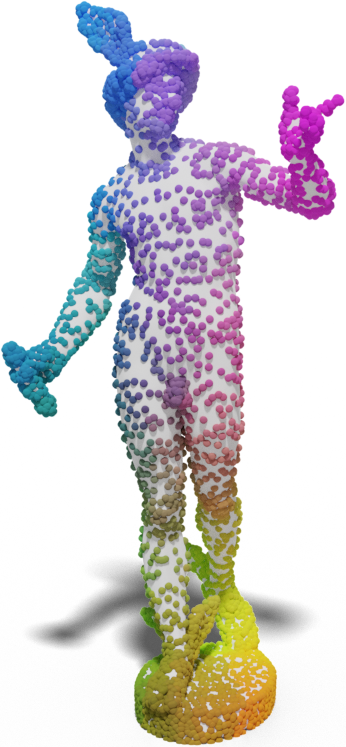}
   \end{overpic}  
    &
  \begin{overpic}[trim=0cm 0cm 0cm 0cm,clip,width=0.13\linewidth]{./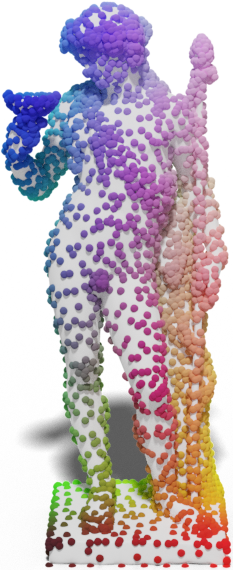}
   \end{overpic}
   &
    \begin{overpic}[trim=0cm 0cm 0cm 0cm,clip,width=0.13\linewidth]{./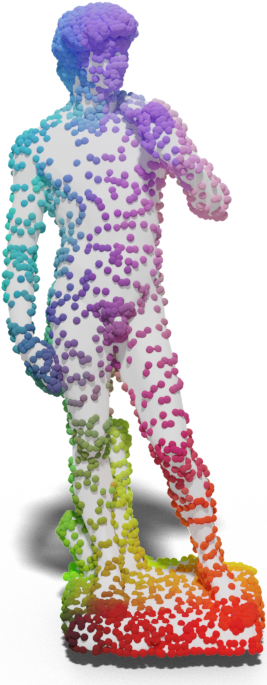}
   \end{overpic}    
   \end{tabular}
\end{center}
   \caption{\label{fig:smatstatues} More qualitative results between statues couples.}

\end{figure}

\end{document}